\documentclass{article} % For LaTeX2e
\usepackage{iclr2027_conference,times}

\usepackage{amsmath,amsfonts,bm}

\def\eqref#1{equation~\ref{#1}}
\def\1{\bm{1}}

\def\ry{{\textnormal{y}}}

\def\rva{{\mathbf{a}}}

\def\rvx{{\mathbf{x}}}

\def\erva{{\textnormal{a}}}

\def\ervx{{\textnormal{x}}}

\def\vmu{{\bm{\mu}}}

\def\vp{{\bm{p}}}

\def\vr{{\bm{r}}}

\def\vw{{\bm{w}}}
\def\vx{{\bm{x}}}
\def\vy{{\bm{y}}}

\def\mX{{\bm{X}}}

\DeclareMathAlphabet{\mathsfit}{\encodingdefault}{\sfdefault}{m}{sl}
\SetMathAlphabet{\mathsfit}{bold}{\encodingdefault}{\sfdefault}{bx}{n}

\def\sX{{\mathbb{X}}}

\newcommand{\E}{\mathbb{E}}

\newcommand{\R}{\mathbb{R}}

\newcommand{\Var}{\mathrm{Var}}

\newcommand{\Cov}{\mathrm{Cov}}
\usepackage{hyperref}
\usepackage{url}

\usepackage{graphicx}
\usepackage{enumitem}
\usepackage{amsthm}
\usepackage{amssymb}
\usepackage{multirow}
\usepackage{subcaption}
\usepackage{xspace}
\usepackage{booktabs}
\usepackage{lipsum} % For dummy text
\usepackage{algorithm}
\usepackage{algpseudocode}
\usepackage{threeparttable}
\usepackage{mathtools}
\newtheorem{theorem}{Theorem}[section]
\newtheorem{proposition}[theorem]{Proposition}

\newtheorem{definition}[theorem]{Definition}

\theoremstyle{remark}

\usepackage{wrapfig}

\usepackage{bbm}
\algrenewcommand\algorithmicindent{1.0em}
\usepackage{pifont}
\usepackage[percent]{overpic}

\newcommand{\red}[1]{\textcolor{red}{#1}}

\newcommand{\method}[0]{\textsc{TaskBridge}\xspace}

\usepackage{bm}
\usepackage{longtable,pdflscape}

\title{\method: Bridging Unsupervised Tabular Anomaly Detection and In-Context Learning via Virtual Tasks}

\author{
Doyun Choi$^{1}$ \quad
Dooho Lee$^{2,3}$ \quad
Jaemin Yoo$^{1,3}$\thanks{Corresponding author.} \\
$^{1}$Seoul National University \quad
$^{2}$KAIST \quad
$^{3}$Nums AI Inc. \\
\texttt{\{doyun.choi,jaeminyoo\}@snu.ac.kr} \quad
\texttt{dooho.lee@kaist.ac.kr}
}

\iclrfinalcopy 
\begin{document}

\maketitle
\fancyhead{}
\renewcommand{\headrulewidth}{0pt}

\begin{abstract}
Unsupervised tabular anomaly detection (TAD) aims to identify anomalous rows in tabular data using normal training samples.
% and plays an important role in many real-world applications.
While conventional methods rely on dataset-specific training and configuration search, 
%recent prior-data fitted networks (PFNs)-based tabular foundation model (TFMs)
recent tabular foundation models (TFMs) enable zero-shot anomaly detection on unseen datasets via in-context learning.
Most TFM-based approaches, however, require anomaly-specific pretraining from scratch, making detection inherently dependent on synthetic TAD-specific priors and costly to update. % and extend.
Some approaches instead repurpose pretrained general-purpose TFMs for TAD to avoid this burden, but rely on computationally expensive formulations with restrictive anomaly inductive biases.
% In this work, we introduce {\method}, a new framework for effectively repurposing pretrained general-purpose TFMs for unsupervised TAD.
In this work, we introduce {\method}, a new framework that efficiently repurposes pretrained general-purpose TFMs for unsupervised TAD by constructing virtual supervised tasks that directly recast anomaly detection as supervised in-context inference of TFMs.
% \dy{In this work, we introduce {\method}, a new framework that effectively repurposes pretrained general-purpose TFMs for unsupervised TAD. {\method} bridges unsupervised TAD and supervised ICL of TFMs by constructing virtual supervised tasks.}
% {\method} bridges unsupervised TAD and supervised ICL by constructing normality-anchored virtual supervised tasks from an unlabeled nominal context. 
% These tasks induce predictive structures under which nominal query--target pairs remain compatible on PFN-based TFMs, while anomalies tend to violate the induced task structure and receive low predictive support from them.
The resulting virtual tasks induce predictive structures under which normal queries and their target pairs receive high support, whereas
anomalies tend to violate the induced structures and receive lower support, 
providing direct anomaly evidence.
Across 790 real-world datasets, {\method} consistently outperforms 30 baselines, including state-of-the-art TFM-based approaches, without anomaly-specific TFM pretraining or dataset-specific model optimization.
\end{abstract}
\section{Introduction}
%%%%% 원본 %%%%%
% Unsupervised tabular anomaly detection (TAD) plays a crucial role in a wide range of real-world applications, including finance~\citep{ALHASHEDI2021100402}, healthcare~\citep{fernando2021deeplearningmedicalanomaly}, and security~\citep{https://doi.org/10.1002/ett.4150}. 
% Various detection methods have been proposed for this problem, ranging from statistical and conventional machine learning approaches to deep learning-based methods~\citep{4781136, 10.1145/342009.335388, ICLR2024_6dfd16ff}.

% \dy{In TAD, anomalous observations are often rare and/or difficult to label in advance, making it common to design detection methods that utilize only normal training data to score unseen anomalies ~\citep{graham2023denoisingdiffusionmodelsoutofdistribution, Maziarka_2021, bergman2020classificationbasedanomalydetectiongeneral}.
% In this \emph{unsupervised} or \emph{one-class} setting, selecting or tuning a suitable detector is highly challenging since no guiding signals are provided; 
% conventional hyperparameter searches or dataset-specific tuning can no longer be reliably applied without validation data~\citep{10.1145/3606274.3606277, ding2022hyperparametersensitivitydeepoutlier}.}

Tabular anomaly detection (TAD) plays a crucial role in various real-world applications, including finance~\citep{ALHASHEDI2021100402}, healthcare~\citep{10.1145/3464423}, and security~\citep{https://doi.org/10.1002/ett.4150}.
Tabular samples lack explicit structure, unlike other domains such as images and text, making anomalous patterns more challenging to identify.
Various methods have been proposed for TAD, ranging from statistical and conventional machine learning approaches to deep learning-based methods~\citep{4781136, 10.1145/342009.335388, ICLR2024_6dfd16ff}.

In TAD, anomalous observations are often rare and difficult to label in advance, making it common to use only normal data for training~\citep{Graham_2023_CVPR, Maziarka_2021}. 
In this \emph{unsupervised} or \emph{one-class} setting, selecting a suitable detector or tuning its hyperparameters (HPs) is highly challenging since no guiding signals are provided; conventional HP searches or dataset-specific tuning cannot be reliably guided without validation data~\citep{10.1145/3606274.3606277, NEURIPS2022_3e9113e2}.
% To address this limitation, 
% % motivated by the success of foundation models (FMs) in other domains~\citep{10.1145/3637528.3671451},
% recent work has begun exploring foundation models (FMs) for unsupervised TAD as a way to reduce reliance on dataset-specific model fitting and tuning.

% while repeated hyperparameter search and dataset-specific training incur substantial computational overhead.

% In particular, recent advances in tabular foundation models (TFMs) have been driven by prior-data fitted networks (PFNs), transformer-based models trained to approximate the bayesian posterior predictive distribution (PPD) induced by a prior over supervised learning tasks~\citep{muller2024transformersbayesianinference}. During pretraining, PFNs repeatedly observe synthetic datasets sampled from diverse task priors and learn to infer the underlying task from a labeled context, allowing them to predict the task-specific label distribution of unseen query samples. This context-conditioned predictive capability enables \textit{in-context learning} without gradient-based adaptation, providing the foundation for zero-shot inference on previously unseen tabular tasks. Building on this paradigm, PFN-based TFMs have recently emerged as a promising framework for general-purpose tabular modeling~\citep{qu2026tabiclv2betterfasterscalable, grinsztajn2026tabpfn25advancingstateart, zhang2025limixunleashingstructureddatamodeling, ma2026tabdptscalingtabularfoundation, ye2025closerlooktabpfnv2}.

To address this limitation, recent work has studied tabular foundation models (TFMs) for unsupervised TAD as a way to reduce reliance on dataset-specific training and HP tuning.
A prominent line of work is to use \textit{prior-data fitted networks} (PFNs)~\citep{ shen2025fomo0dfoundationmodelzeroshot, ding2026zeroheroadvancingzeroshot, marszalek2026tacticnavigatingunknowntabular}.
After pretraining over synthetic tasks, PFNs perform zero-shot inference through \textit{in-context learning} (ICL) by inferring the task structure from a given context and amortizing Bayesian posterior-predictive inference for unseen queries~\citep{muller2024transformersbayesianinference}.
% Throughout this paper, we use \emph{tabular foundation models} (TFMs) to refer specifically to PFN-based FMs for tabular data.

There are two notable approaches to using TFMs for unsupervised TAD. The first develops TFMs pretrained exclusively for TAD~\citep{shen2025fomo0dfoundationmodelzeroshot, ding2026zeroheroadvancingzeroshot}. During pretraining, unlabeled normal samples form the context, while synthetic anomalies are introduced as queries, allowing the model to infer whether a query is anomalous conditioned on the normal context. By amortizing TAD across diverse episodic tasks with varying synthetic anomaly mechanisms, these models enable in-context anomaly detection on unseen datasets. We refer to them as \textit{TAD-specialized TFMs}.
\begin{figure*}[t]
    \centering

    \begin{subfigure}[c]{0.53\linewidth}
        \centering
        \includegraphics[width=\linewidth]{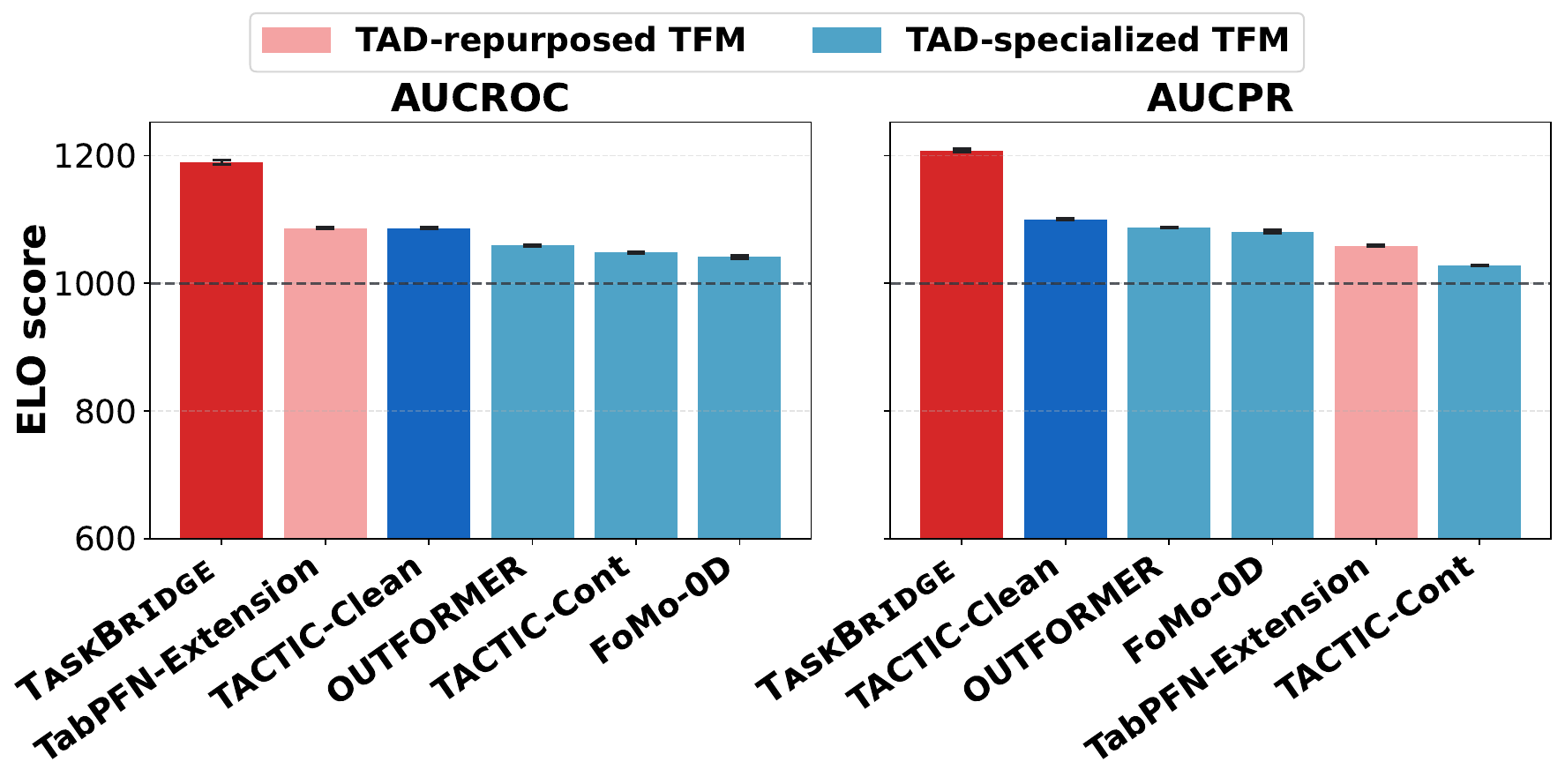}
        \label{fig:elo_performance}
    \end{subfigure}
    % \hfill
    % \hfill
    % \begin{subfigure}[c]{0.26\linewidth}
    %     \centering
    %     \includegraphics[
    %         width=\linewidth,
    %         % trim=0 0 0 0.2cm,
    %         % clip
    %     ]{Figures/oddbench_efficiency_runtime_comparison.pdf}
    %     \label{fig:efficiency_runtime}
    % \end{subfigure}
    % \hfill
    \begin{subfigure}[c]{0.41\linewidth}
        \centering
        \includegraphics[
            width=\linewidth,
            % trim=0 0 0 0.2cm,
            % clip
        ]{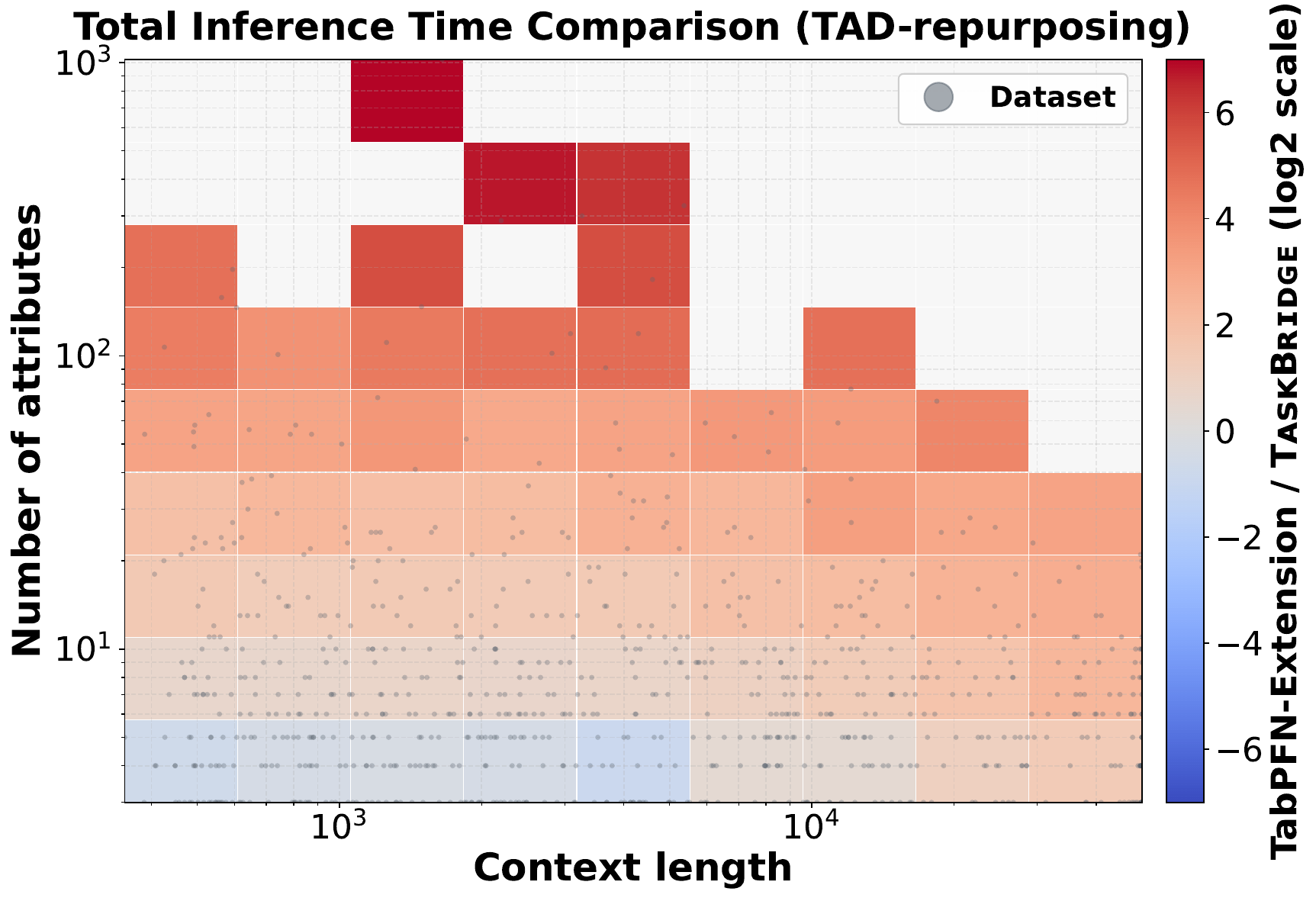}
        \label{fig:context_attributes}
    \end{subfigure}

\vspace{-15pt}
    % \caption{(Left)Elo scores for PFN-based tabular anomaly detection performance on OODBench. We compare \method against five PFN-based baselines using AUCROC and AUCPR, where \method achieves the highest Elo score on both metrics. (Right)Inference-time comparision between PFN-harnessing baselines including ours. Our method becomes super efficient compared to TabPFN-extension with the growing scale(attributes + Context length)}
%     \caption{(Left) Elo scores of PFN-based TAD methods on OODBench. (Deeper color:
% best model in each category.) \method achieves the highest Elo scores in both AUCROC and AUCPR among all PFN-based baselines. A full Elo comparison across all baselines is provided in
% Appendix~\ref{appendix:OverallPerformance}. (Right) Inference-time comparison of PFN-harnessing methods. (Color: Median inference time of TabPFN-extension/\method in that grid, Red: \method is faster) Compared with the TabPFN extension, \method scales substantially more efficiently as the number of attributes and context length increase.}
\caption{
% (Left) Elo scores of TFM-based TAD methods on \dy{790 ODDBench datasets}; darker colors indicate better performance. 
% \method achieves the highest Elo scores for both AUCROC and AUCPR, while the full comparison is in Appendix~\ref{app:ExtendedOverallPerformance}.
(Left) Elo scores of TFM-based TAD methods on ODDBench; darker colors indicate
better performance. \method achieves the highest Elo scores for both AUCROC and
AUCPR. Full baseline comparisons are provided in
Appendix~\ref{app:ExtendedOverallPerformance}.
(Right) Inference-time comparison between 
% of TAD-repurposed TFM methods on ODDBench.
% Each grid cell
% shows the median inference-time difference between 
the TabPFN-Extension and \method, with red indicating that \method is faster.
\method scales more efficiently as both context length and dimensionality increase.}
\label{fig:overall_comparison_intro}
\vspace{-20pt}
\end{figure*}
Complementing TAD-specialized TFMs, TabPFN~\citep{
grinsztajn2026tabpfn3technicalreport}, which is a representative TFM pretrained for general-purpose tabular modeling, supports a TAD extension that harnesses its pretrained predictive capabilities for anomaly detection through
autoregressive joint-density estimation. Such repurposing avoids TAD-specific pretraining and can directly benefit from advances in general-purpose TFMs. We refer to approaches of this kind as \textit{TAD-repurposed TFMs}.
%%%%%% (Original Version) %%%%%
% \dy{Complementing these TAD-specialized PFN construction, TabPFN~\citep{
% grinsztajn2026tabpfn3technicalreport}, a representative PFN-based tabular
% foundation model (TFM) pretrained for general-purpose tabular modeling,
% supports an unsupervised TAD extension that directly repurposes its pretrained
% predictive capabilities for autoregressive joint-density estimation via
% supervised in-context prediction. Such TFM harnessing avoids TAD-specific
% pretraining and can readily benefit from advances in the underlying
% TFM. In this paper, we refer to approaches of this kind as \textit{TFM-harnessed TAD}.}
% \textit{TFM-harnessed TAD} throughout this paper.}

Both directions have notable limitations. 
TAD-specialized TFMs, relying on AD-centric pretraining with designed synthetic priors, require redesigning the priors and pretraining strategy followed by model retraining, both to expand anomaly coverage and to incorporate advances in increasingly capable TFM backbones.
The existing TAD-repurposed approach avoids such burdens, but its autoregressive formulation requires repeated attribute-wise conditional predictions, resulting in substantial inference overhead as
dimensionality increases, while reducing the notion of anomalousness to low estimated joint likelihood. We discuss these limitations in greater detail in Section~\ref{subsec:pfn_based_ad}.

To address these limitations, we propose {\method}, a new TAD framework that efficiently repurposes pretrained general-purpose TFMs by constructing \emph{virtual supervised tasks} from unlabeled, normality-aware training data.
% by leveraging their in-context predictive capabilities.
% It directly bridges unsupervised TAD and the supervised in-context inference of TFMs by constructing \textit{virtual supervised tasks} from unlabeled, normality-aware training data.
% Under the TFMs' in-context inference, 
% The virtual tasks induce the predictive \dy{structure} from the virtually labeled context under in-context inference,
These tasks induce a predictive structure through the in-context inference over the virtually labeled context, under which normal query--target pairs receive high predictive support, whereas anomalous pairs tend to violate the induced structure and receive lower support, directly providing evidence of anomalies. 
To generalize this mechanism across hundreds of datasets, we create task templates that probe different aspects of normal structure and apply data-adaptive task selection to retain tasks that best satisfy the AD-oriented properties for each dataset.

\method offers several advantages: 
(i) it enables zero-shot AD inference on unseen datasets, without dataset-specific training or tuning, while requiring neither TAD-specialized pretraining nor a dedicated backbone;
(ii) as shown in the left panel of Figure~\ref{fig:overall_comparison_intro}, it achieves stronger overall performance than existing TAD baselines, including diverse TFM-based approaches, across 790 real-world datasets; %attaining the highest Elo scores in terms of both AUC-ROC and AUC-PR; 
and (iii) as shown in the right panel of Figure~\ref{fig:overall_comparison_intro}, \method remains substantially more efficient than the TabPFN-Extension as both dataset size and dimensionality increase, demonstrating a more scalable and practical way to repurpose general-purpose TFMs for unsupervised TAD.
\section{Preliminaries and Related Works}
% \dy{In this section, we provide preliminaries on tabular foundation models utilizing PFNs to enable in-context Bayesian inference, and review existing PFN-based approaches for unsupervised TAD.}
% \dy{In this section, we provide preliminaries on the pretraining and inference
% paradigm of general-purpose TFMs, and review existing TFM-based approaches for unsupervised TAD.}
% In this section, We review the details of how PFNs are adapted to the tabular foundation models and exsting PFN-based approaches for unsupervised TAD.

% \subsection{\dy{Tabular Foundation Models via Prior-Data Fitted Networks}}
\subsection{General-Purpose Tabular Foundation Models}
\label{ssec:PFNs}
 
% Let $\mathbf A=(A_1,\ldots,A_d)$
% denote the vector of attribute random variables, and let $Y$
% % $Y\in[K]:=\{1,\ldots,K\}$ 
% denote the target random variable.
% A realized sample is written as $(\mathbf x_i,y_i)$, where
% $\mathbf x_i=(x_{i1},\ldots,x_{id})^\top\in\mathcal X\subseteq\mathbb R^d$.
% Let $\rva=(\erva_1,\ldots,\erva_d)$ denote the vector of attribute random variables, and let $\ry$ denote the target random variable. A realized sample is written as $(\vx_i,y_i)$, where $\vx_i=(x_{i1},\ldots,x_{id})^\top\in\sX\subseteq\R^d$.

\paragraph{Pretraining.}
Let $\rva=(\erva_1,\ldots,\erva_d)$ denote the vector of attribute random variables, and let $\ry$ denote the target random variable. A realized sample is written as $(\vx_i,y_i)$, where $\vx_i=(x_{i1},\ldots,x_{id})^\top\in\sX\subseteq\R^d$.
Let $\tau\in \mathcal T$ denote a \emph{task} and let $p(\tau)$ denote the \emph{prior} used to sample pretraining tasks.
For each pretraining episode, a task $\tau$ is first sampled from $p(\tau)$, after
which a labeled context and a query sample are drawn from the corresponding
task-specific distribution.
Under the task prior, the Bayesian posterior predictive distribution (PPD) is
defined as
% \begin{equation}
% \begin{gathered}
%     \tau\sim p(\tau),
%     \qquad
%     \mathcal D_{\mathrm{ctx}}
%     =
%     \{(\mathbf x_i,y_i)\}_{i=1}^{n},
%     \qquad
%     \mathcal D_{\mathrm{ctx}}
%     \cup
%     \{(\mathbf x_q,y_q)\}
%     \overset{\mathrm{i.i.d.}}{\sim}
%     P_\tau,
%     \\
%     p\!\left(
%         y_q
%         \mid
%         \mathbf x_q,
%         \mathcal D_{\mathrm{ctx}}
%     \right)
%     =
%     \int_{\mathcal T}
%         p_\tau\!\left(
%             y_q\mid\mathbf x_q
%         \right)
%         p\!\left(
%             \tau\mid\mathcal D_{\mathrm{ctx}}
%         \right)
%         \,\mathrm d\tau.
% \end{gathered}
% \label{eq:tabpfn_ppd}
% \end{equation}
\begin{equation}
\begin{gathered}
    \tau \sim p(\tau),
    \qquad
    \mathcal D_{\mathrm{ctx}}
    =
    \{(\vx_i,y_i)\}_{i=1}^{n},
    \qquad
    \mathcal D_{\mathrm{ctx}}
    \cup
    \{(\vx_q,y_q)\}
    \overset{\mathrm{i.i.d.}}{\sim}
    P_\tau,
    \\
    p\!\left(
        y_q
        \mid
        \vx_q,
        \mathcal D_{\mathrm{ctx}}
    \right)
    =
    \int_{\mathcal T}
        p_\tau\!\left(
            y_q\mid\vx_q
        \right)
        p\!\left(
            \tau\mid\mathcal D_{\mathrm{ctx}}
        \right)
        \,d\tau.
\end{gathered}
\label{eq:tabpfn_ppd}
\end{equation}

A general-purpose TFM $q_\theta$, typically instantiated as a transformer~\citep{NIPS2017_3f5ee243}, is trained to approximate the PPD by minimizing the prior-data negative log-likelihood:
\begin{equation}
    \mathcal L_{\mathrm{PFN}}(\theta)
    :=
    \E_{\mathcal D_{\mathrm{ctx}},(\vx_q,y_q)}
    \left[
        -\log
        q_\theta
        \!\left(
            y_q
            \mid
            \vx_q,
            \mathcal D_{\mathrm{ctx}}
        \right)
    \right].
    \label{eq:tabpfn_prior_nll}
\end{equation}
Assuming realizability and global optimization, the optimal predictor $q_{\theta^\star}$ recovers the PPD almost surely, i.e.,
$q_{\theta^\star}(\cdot \mid \vx_q, \mathcal D_{\mathrm{ctx}})
=
p(\cdot \mid \vx_q, \mathcal D_{\mathrm{ctx}})$.
%A detailed derivation is provided in
For a detailed derivation, see \citep{muller2024transformersbayesianinference}.

\paragraph{Inference.}
At inference time, the pretrained TFM $q_{\theta^\star}$ receives a labeled context that represents a new task, i.e., 
$\mathcal D_{\mathrm{ctx}}
:=
\{(\vx_i,y_i)\}_{i=1}^{n}
\equiv
(\mX_{\mathrm{ctx}},\vy_{\mathrm{ctx}}),
$
where
$\mX_{\mathrm{ctx}}\in\R^{n\times d}$ and
$\vy_{\mathrm{ctx}}\in\mathcal Y^n$.
Given an unlabeled query row $\vx_q$, it produces the predictive distribution over the target space,
% \begin{equation}
%     q_{\theta^\star}
%     \!\left(
%         Y_q=k
%         \mid
%         \mathbf x_q,
%         \mathcal D_{\mathrm{ctx}}
%     \right),
%     \qquad k\in[K].
%     \label{eq:tfm_predictive_interface}
% \end{equation}
\begin{equation}
    q_{\theta^\star}
    \!\left(
        Y_q
        \mid
        % \mathbf x_q,
        \vx_q,
        \mathcal D_{\mathrm{ctx}}
    \right),
    \qquad
    Y_q\in\mathcal Y.
    \label{eq:tfm_predictive_interface}
\end{equation}
By pretraining over diverse tasks, $q_{\theta^\star}$ is assumed to have the ability to infer the task-relevant structure from $\mathcal D_{\mathrm{ctx}}$ and approximate the posterior predictive distribution of $\vx_q$, thereby enabling zero-shot prediction on previously unseen tabular tasks without gradient-based parameter updates.

% \subsection{\dy{TAD-Exclusive PFNs and PFN-harnessed TAD}}
% \label{subsec:pfn_based_ad}
% \paragraph{TAD-exclusive PFNs.}
% Recent studies have developed PFNs
% specifically for tabular anomaly detection
% \cite{muller2024transformersbayesianinference}.
% Unlike PFN-based TFMs, whose in-context inference is conditioned on context consist of labeled input--target pairs, these methods receive an attribute-only
% context $\mathcal{C}
%     =
%     \{\mathbf{x}_i\}_{i=1}^{n}$
% and directly estimate the anomaly probability of a query:
% $q_{\phi^\star}
%     \left(
%         A_q = 1
%         \mid
%         \mathbf{x}_q,
%         \mathcal{C}
%     \right),$
% where \(A_q\in\{0,1\}\) denotes the nominal--anomalous status. These models are pretrained in a supervised manner on synthetic datasets containing
% labeled nominal and anomalous samples.

% \subsection{TAD-Exclusive PFNs and PFN-Harnessed TAD}
\subsection{Tabular Foundation Models for Unsupervised TAD}
\label{subsec:pfn_based_ad}

\paragraph{TAD-specialized TFMs.}

% Recent studies have developed TFMs exclusively for unsupervised TAD.
% Unlike \dy{general-purpose TFMs}, whose in-context inference is conditioned on a context consisting of labeled input--target pairs, 
Unlike general-purpose TFMs, TAD-specialized TFMs receive an unlabeled context
$\mathcal{C}
=
\{\vx_i\}_{i=1}^{n}$
of normal data and directly estimate the anomaly probability of a query,
$q_{\phi^\star}
\left(
A_q = 1
\mid
\vx_q,
\mathcal{C}
\right),$
where \(A_q\in\{0,1\}\) represents the normal-anomalous status and $q_{\phi^\star}$ denotes the pretrained TAD-specialized TFM.
To enable this inference, these models are pretrained under TAD-specific priors that generate normal contexts and normal/anomalous queries.
%To enable this inference, these models are pretrained in a supervised manner on synthetic TAD tasks containing labeled nominal and anomalous samples.

\citet{shen2025fomo0dfoundationmodelzeroshot} is an early TAD-specialized TFM pretrained under Gaussian mixture model priors with variance-inflated subspace anomalies.
\citet{ding2026zeroheroadvancingzeroshot} extends this framework through a mixture of Gaussian-mixture, structural-causal, and copula-based priors that represent several anomaly archetypes.
It further introduces a self-evolving curriculum to coordinate pretraining over heterogeneous prior families and task difficulties.
\citet{marszalek2026tacticnavigatingunknowntabular} similarly performs discriminative in-context anomaly detection,
but explicitly supports contaminated contexts.
% These approaches demonstrate that PFNs exclusive for TAD provide fast and competitive in-context detection. However, their notion of
% anomalousness is inherently tied to the anomaly-generating mechanisms encoded
% in the pretraining prior. Although an unlabeled context can characterize the
% dataset-specific nominal structure, it cannot by itself determine which form of
% deviation should be regarded as anomalous; this missing semantics must therefore
% be supplied by anomaly-centric pretraining. Expanding the detectable anomaly space consequently requires broader synthetic anomaly generators, more elaborate
% training curricula, and, renewed AD-specific pretraining. 

% Moreover, PFN architectures and pretraining strategies continue to evolve toward more accurate and scalable posterior predictive inference. However, AD-exclusive PFNs cannot readily inherit these advances and may require substantial redesign and retraining from scratch to adopt updated backbones or pretraining schemes. This limits development efficiency and makes it difficult for AD-specific PFNs to continuously benefit from the rapidly evolving ecosystem of general-purpose PFN-based TFMs.

% These approaches demonstrate that \dy{TAD-specialized TFMs} can provide fast and competitive anomaly detection. However, 
A notable limitation of these methods is that their notion of anomalousness is tied to the anomaly-generating mechanisms encoded in the pretraining prior; while a context can characterize dataset-specific normal structures, it cannot determine which deviations should be regarded as anomalous. 
% The required anomaly semantics must therefore be supplied through pretraining.
% Consequently, expanding anomaly coverage requires broader TAD priors together with appropriate training strategies, which introduces massive computational burden~\citep{ding2026zeroheroadvancingzeroshot}.
Consequently, expanding anomaly coverage requires broader TAD priors together with appropriate training strategies~\citep{ding2026zeroheroadvancingzeroshot}, necessitating retraining from scratch.
Moreover, as general-purpose TFMs continue to advance in architecture and pretraining, these specialized approaches cannot readily inherit such improvements without substantial redesign and retraining, limiting their ability to continuously benefit from the evolving TFM ecosystem.

\paragraph{TAD-repurposed TFMs.}
% An alternative direction is to harness pretrained PFN-based TFMs for TAD. \citet{grinsztajn2026tabpfn3technicalreport} pursue this direction by estimating sample likelihood through the predictive distributions of a pretrained PFN\footnote{\url{https://docs.priorlabs.ai/capabilities/anomaly-detection}}.
Recent works have successfully repurposed pretrained general-purpose TFMs for new downstream problems~\citep{xu2026needtrainrdbfoundation,
hoo2026tablestimeextendingtabpfnv2}.
% with similar directions
% recently emerging for general-purpose TFMs~\citep{
% xu2026needtrainrdbfoundation,
% hoo2026tablestimeextendingtabpfnv2}.
% \dy{Following this paradigm, TabPFN\citep{grinsztajn2026tabpfn3technicalreport} repurposes its predictive capabilities to unsupervised TAD by estimating sample likelihoods from the predictive distributions of it\footnote{\url{https://docs.priorlabs.ai/capabilities/anomaly-detection}}.}
Following this paradigm, TabPFN~\citep{grinsztajn2026tabpfn3technicalreport}
repurposes its predictive capabilities for unsupervised TAD by estimating sample
likelihoods from its predictive distributions.\footnote{\url{https://docs.priorlabs.ai/capabilities/anomaly-detection}}
It factorizes the joint feature likelihood according to the chain rule.
For a feature ordering
\(
    \pi=(\pi_1,\ldots,\pi_d)
\),
it approximates
% \begin{equation}
%     \log\widehat{p}_{\pi}
%     \left(
%         \mathbf{x}_q
%         \mid
%         \mathcal{C}
%     \right)
%     =
%     \sum_{j=1}^{d}\log
%     q_{\phi^\star}
%     \left(
%         x_{q,\pi_j}
%         \mid
%         \mathbf{x}_{q,\pi_{<j}},
%         \mathcal{D}_\mathcal{C}
%     \right), \quad\mathcal{D}_\mathcal{C}=\{(\mathbf{x}_{i,\pi_{<j}},x_{i,\pi_j})\}_{i=1}^n,
%     \label{eq:tabpfn_joint_likelihood}
% \end{equation}
% \dy{\begin{equation}
%     \log \widehat{p}_{\pi}
%     \left(
%         \mathbf{x}_q
%         \mid
%         \mathcal{C}
%     \right)
%     =
%     \sum_{j=1}^{d}
%     \log
%     q_{\phi^\star}
%     \left(
%         x_{q,\pi_j}
%         \mid
%         \mathbf{x}_{q,\pi_{<j}},
%         \mathcal{D}_{\mathcal{C},\pi,j}
%     \right),
%     \qquad
%     \mathcal{D}_{\mathcal{C},\pi,j}
%     =
%     \left\{
%         \left(
%             \mathbf{x}_{i,\pi_{<j}},
%             x_{i,\pi_j}
%         \right)
%     \right\}_{i=1}^{n},
%     \label{eq:tabpfn_joint_likelihood}
% \end{equation}}
\begin{equation}
    \log \widehat{p}_{\pi}
    \left(
        \vx_q
        \mid
        \mathcal{C}
    \right)
    =
    \sum_{j=1}^{d}
    \log
    q_{\theta^\star}
    \left(
        x_{q,\pi_j}
        \mid
        \vx_{q,\pi_{<j}},
        \mathcal{D}_{\mathcal{C},\pi,j}
    \right),
    \qquad
    \mathcal{D}_{\mathcal{C},\pi,j}
    =
    \left\{
        \left(
            \vx_{i,\pi_{<j}},
            x_{i,\pi_j}
        \right)
    \right\}_{i=1}^{n},
    \label{eq:tabpfn_joint_likelihood}
\end{equation}
where each feature is temporarily treated as the prediction target.
% continuous attributes are evaluated using predictive densities, whereas
% categorical attributes are evaluated using predictive probability
% masses.
Assuming that anomalous samples receive lower likelihood under the estimated joint distribution, the resulting negative log-likelihood can serve as the anomaly score for the query sample $\vx_q$.
%predictions are
% averaged over multiple random feature permutations to reduce dependence
% on a particular autoregressive ordering.
% The resulting negative log-likelihood serves as the anomaly score.

This extension avoids a key limitation of TAD-specialized TFMs, namely the need for substantial redesign and retraining to accommodate evolving TAD priors or improved TFM backbones, but it equates anomalousness with low joint likelihood.
While this is a principled and widely used anomaly criterion, global
rarity need not coincide with the application-relevant violation of
normality: rare but valid observations may receive high scores.
In addition, evaluating every attribute as a target across multiple feature permutations
and predictor ensembles incurs a computational cost that grows with the feature dimension~\citep{marszalek2026tacticnavigatingunknowntabular}, as shown also in Figure~\ref{fig:overall_comparison_intro}.

% \subsection{Preliminaries}
%The quality of the resulting likelihood can also be affected by the mismatch between the supervised pretraining objective and autoregressive density estimation.
%by context contamination and by the mismatch between the supervised
% pretraining objective and autoregressive density estimation
% \cite{marszalek2026tacticnavigatingunknowntabular}.

% \paragraph{Positioning of our work.}
% In this work, we develop a zero-shot anomaly detection framework that directly leverages the canonical supervised inference interface of pretrained PFN-based TFMs, without requiring anomaly-specific pretraining. General tabular predictive competence is inherited from an off-the-shelf TFM, while the anomaly criterion is specified explicitly at inference time through the observed context rather than being fixed by an anomaly-centric pretraining prior or reduced to global joint rarity.
% %This separation allows our framework to benefit from advances in general-purpose PFN-based TFMs without rebuilding a dedicated AD backbone, while enabling context-relative anomaly detection.
% This separation enables context-relative anomaly detection while allowing the framework to directly benefit from advances in general-purpose PFN-based TFMs.
%\input{Sections/methodology}
%\input{Sections/methodology_ver2}
\section{\method: Anomaly Detection via Virtual Supervised Tasks }
\label{sec:methodology}
\begin{wrapfigure}[14]{r}{0.4\textwidth}
\vspace{-12pt}
    \centering
    \includegraphics[
        width=\linewidth,
    ]{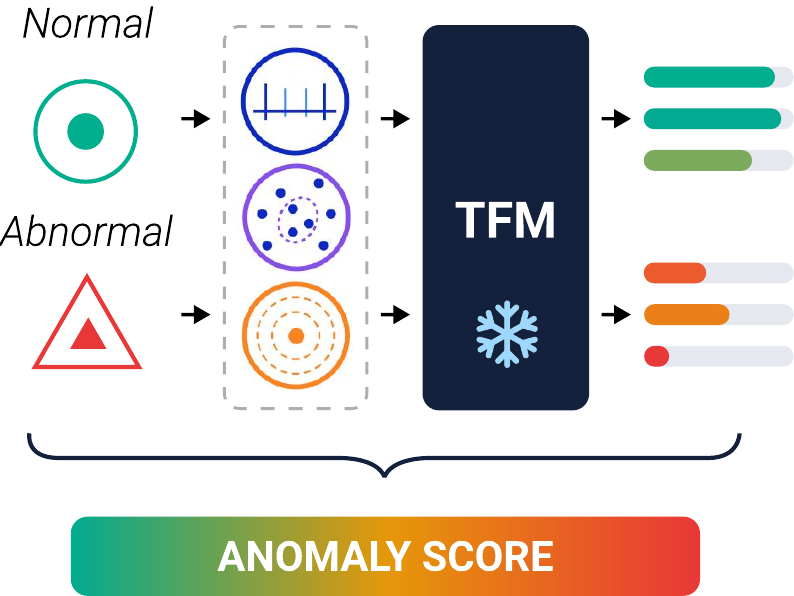}
    \vspace{-15pt}
    \caption{
    Overview of \method.
    Given a query, it assigns its virtual targets under the designed supervised tasks and distinguishes normal and anomalous queries according to the predictive supports from the pretrained TFM.
    }
    \label{fig:overview}
\end{wrapfigure}
We introduce \method, a new framework that enables general-purpose TFMs to solve anomaly detection on arbitrary datasets without dataset-specific model tuning.
% We introduce \method, a new framework for unsupervised TAD that enables
% zero-shot in-context inference by effectively harnessing pretrained PFN-based
% TFMs. 
The core idea is to construct \emph{virtual supervised tasks} turning unsupervised TAD into a supervised prediction problem that these models are designed to solve. 
% from an
% unlabeled, normality-aware context so that they can be directly processed
% through the canonical labeled-context interface of PFN-based TFMs.

As illustrated in Figure~\ref{fig:overview}, \method constructs virtual supervised tasks that allow a pretrained TFM to perform posterior-predictive inference for each query from the virtually labeled context.
These tasks are carefully designed so that the resulting anomaly score reflects the predictive support of the pretrained TFM: normal query--target pairs receive high support, while anomalous pairs tend to receive lower support.
This support gap provides the basis for anomaly detection.
% \tbf{We first motivate \method from the perspective of PFNs (\S~\ref{ssec:motivation}). 
% We then describe the construction of virtual tasks tailored to this objective (\S~\ref{ssec:VirtualNormalityAnchoredTask}--~\ref{ssec:TaskFamilesandSelection}), followed by the final detection (\S~\ref{ssec:ad}).}

% \subsection{}

\subsection{Motivation}
\label{ssec:motivation}

In the supervised setting, a task can be viewed as a latent data-generating mechanism $\tau$ that induces a joint distribution $p_{\tau}(\rvx,Y)$.
% , 
% and thus a task-specific predictive structure governing which input--target pairs are likely under that distribution. 
%and thus a task-specific predictive structure between attributes and targets.
During pretraining, general-purpose TFMs repeatedly encounter supervised datasets
sampled from diverse latent tasks and are optimized to approximate the
corresponding PPDs for new queries by inferring task-relevant predictive structure from the labeled context, as described in Equation~\ref{eq:tabpfn_ppd}.
This \emph{in-context learning} (ICL) enables TFMs to assign higher
posterior-predictive support to targets for queries that are more compatible
with the inferred task.
%This \emph{in-context learning} (ICL) makes TFMs assign higher posterior-predictive support to queries that are more compatible with the inferred task.
% This process corresponds to \emph{in-context learning} (ICL); a labeled context serves as evidence for the latent mechanism $\tau$, allowing the TFM to assign higher posterior-predictive support to queries that are more compatible with the inferred task.
% allowing the TFM to assign higher posterior-predictive support to targets for the queries more compatible with the inferred task.

We directly use this ICL capability of TFMs for unsupervised TAD.
Specifically, we construct virtual supervised tasks that generate virtually labeled contexts from the unlabeled normal samples.
% If we provide such a labeled context to query an unseen observation, the TFM's prediction indicates how normal that observation is with respect to the virtual task.
Under each virtual task, the labeled normal context induces an input--target
predictive structure through ICL, allowing a query to be evaluated by how well
its virtual target conforms to this structure.
Assuming that modern TFMs can reliably infer the predictive
structures of the virtual tasks considered in this work, we design the tasks
so that normal query--target pairs remain compatible with the induced structure,
whereas anomalous pairs are more likely to violate it.
To quantify the structural compatibility, we use the TFM's amortized
posterior-predictive support for each query--target pair.
Formally, let
\begin{equation}
    \widehat{\kappa}_m(\vx;\mathcal{C})
    :=
    q_{\theta^{\star}}
    \!\left(
        \widehat{y}_m(\vx)
        \mid
        \vx,
        \mathcal{D}^{m}_{\mathcal{C}}
    \right)
\end{equation}
be the predictive support assigned by the pretrained TFM $q_{\theta^{\star}}$ to the virtual target $\widehat{y}_m(\vx)$ for input $\vx$, where $\mathcal{D}^{m}_{\mathcal{C}}$ is the virtually labeled context induced by task $m$.
A higher value of $\widehat{\kappa}_m(\vx;\mathcal{C})$
% $q_{\theta^{\star}}
%     \!\left(
%         \widehat{y}_m(\vx)
%         \mid
%         \vx,
%         \mathcal{D}^{m}_{\mathcal{C}}
%     \right)  $
indicates greater compatibility of $(\vx, \widehat{y}_m(\vx))$ with the predictive structure induced by $m$.
% which is equivalently a higher chance of normality.

% However, not all tasks satisfy our expectations for virtual tasks;
Based on this formulation, we characterize task suitability for anomaly detection through predictive support as a measure of compatibility, and identify two required conditions:
\begin{enumerate}[left=0pt, label=\textbf{(C\arabic*)}]
    \item \textbf{Nominal Predictive Coherence.}
    Query--target pair $(\vx, \widehat{y}_m(\vx))$ generated from a normal sample $\vx$ should receive consistently high predictive
    support from TFMs.

    \item \textbf{Selective Task Coverage.}
    High predictive support should remain concentrated on pairs from the normal input distribution rather than extending broadly to deviant inputs.
    That is, $\widehat{\kappa}_m(\vx;\mathcal{C})$ should be low enough if $\vx$ is not sampled from the normal distribution.
    % Thus, the task should provide selective compatibility coverage around normally structured inputs.
\end{enumerate}

\subsection{Normality-anchored Virtual Supervised Task}
\label{ssec:VirtualNormalityAnchoredTask}
% Following the above motivation, we construct virtual supervised tasks by distilling normality-aware structure from the unlabeled context, which is assumed to consist of nominal samples. The key idea is to extract regularities shared by nominal samples and explicitly encode them into the virtual-target semantics, thereby inducing a task-specific predictive structure anchored to the nominal distribution. We refer to such tasks as \emph{normality-anchored virtual tasks}.
% From the resulting virtually labeled context, a pretrained TFM can infer this structure through ICL, under which nominal query--virtual-target pairs generated by the same task rule are expected to receive high posterior-predictive support.
Following the above motivation, we construct virtual supervised tasks by extracting normality-aware structure from the unlabeled training data used as
context, which is assumed to consist of normal samples. The key idea is to encode context-derived normal regularities into the virtual-target semantics, thereby inducing a task-specific predictive structure anchored to the normal distribution. We refer to such tasks as \emph{normality-anchored virtual tasks}.
From the resulting virtually labeled context, a pretrained TFM can infer this
structure through ICL, under which normal query--target pairs generated
by the same task rule are expected to receive high posterior-predictive support.

\begin{definition}[Normality-anchored virtual task]
\label{def:ptf_normality_anchored}
Let $P_0$ denote the normal distribution on $\mathcal{X}$ and
$\mathcal{C}=(\rvx_1,\ldots,\rvx_n)\sim P_0^n$ a clean normality-aware context.
For task $m$, a virtual supervised task constructor
$\mathfrak{T}_m=(A_m,g_m)$ induces
\begin{equation}
    \widehat{\psi}_{m,\mathcal{C}}
    =
    A_m(\mathcal{C}),
    \qquad
    \widehat{y}_m(\vx;\mathcal{C})
    =
    g_m\!\left(
        \vx;
        \widehat{\psi}_{m,\mathcal{C}}
    \right),
    \label{eq:ptf_task_constructor}
\end{equation}
where $A_m$ extracts context-dependent parameters that define the supervised task, and $g_m$ generates the virtual target for input $\vx$ based on these parameters.

We call $\mathfrak{T}_m$ \emph{normality-anchored} if there exists a
distribution-dependent functional $a_m$ of $P_0$, with
$\psi_m^0=a_m(P_0)$, such that $\widehat{\psi}_{m,\mathcal{C}}$ consistently
estimates $\psi_m^0$ and nontrivially determines the virtual-target semantics.
Specifically, there exist sequences
$\epsilon_{m,n}\to0$ and $\delta_{m,n}\to0$ such that
\begin{equation}
    \Pr_{\mathcal{C}\sim P_0^n}
    \!\left[
        d_m\!\left(
            \widehat{\psi}_{m,\mathcal{C}},
            \psi_m^0
        \right)
        >
        \epsilon_{m,n}
    \right]
    \leq
    \delta_{m,n},
    \label{eq:ptf_anchor_consistency}
\end{equation}
where $d_m$ is a metric on the task-parameter space.
\end{definition}

In Appendix~\ref{app:ProofofConditionalTaskConsistency}, we further show that normality anchoring stabilizes the induced virtual-task semantics: an independent normal query follows the same context-induced task conditional with high probability. Consequently, normal queries and their virtual-target pairs are likely to remain aligned
with the predictive structure induced by the virtually labeled normality-aware context, providing the structural basis for nominal predictive coherence (C1).

Normality anchoring biases the induced predictive structure toward normal samples, encouraging TFMs to assign higher predictive support to normal query--target pairs through ICL over the virtually labeled normality-aware context. However, this alone does not always guarantee selective task coverage (C2). Depending on which normal regularities are encoded and how they
are realized through the task construction, anomalous queries may still remain compatible with the induced structure, and the effectiveness of a given task can vary across datasets. We therefore construct diverse normality-anchored task candidates that capture different aspects of nominal structure under different
task configurations, and perform data-adaptive task selection to retain those that best satisfy both conditions for each dataset.

\subsection{Normality-Anchored Task Families}
\label{ssec:TaskFamilesandSelection}
%% 필요시 사용.
% For the task $m$, let
% $\widehat{\kappa}_m(\vx;\mathcal{C})
%     :=
%     q_{\theta^{\star}}
%     \!\left(
%         \widehat{y}_m(\vx;\mathcal{C})
%         \mid
%         \vx,
%         \mathcal{D}^{m}_{\mathcal{C}}
%     \right)$
% denote the amortized posterior-predictive support assigned by the pretrained PFN-based TFM $q_{\theta^{\star}}$ to the input--virtual-target pair, where
% $\mathcal{D}^{m}_{\mathcal{C}}$ is the virtually labeled context induced by task $m$. Higher values indicate greater compatibility with the predictive structure induced by the task.

% \subsubsection{Instantiation of Normality-Anchored Task Families}
% \label{sssec:NormalityAnchoredTaskFamilies}
Rather than assuming that a single normality-anchored virtual supervised task can satisfy both conditions above across diverse tabular datasets, we construct a family of complementary virtual task templates from the given normality-aware unlabeled context $\mathcal{C}$. Each instantiated task captures a different aspect of the nominal structure, thereby inducing a distinct context-dependent predictive structure and broadening the range of structural violations that can be exposed from diverse datasets.

We first infer the dataset's attribute profile from $\mathcal{C}$. Specifically, we classify each attribute as numerical, categorical, or constant using a hybrid attribute-type inference procedure inspired by \citet{grinsztajn2026tabpfn25advancingstateart}, and assign the dataset to a specific profile based on the relative composition of its non-constant attributes.
% We first infer the feature profile of the dataset from $\mathcal{C}$. Specifically, we classify each attribute as numerical, categorical, or constant using a hybrid feature-type inference procedure inspired by~\citep{grinsztajn2026tabpfn25advancingstateart} and assign the specific profile based on the relative composition of attributes.
% Based on the relative composition of non-constant numerical and categorical attributes, we assign each dataset to one of five profiles: \emph{numerical-only}, \emph{categorical-only}, \emph{mixed-numerical-dominant}, \emph{mixed-categorical-dominant}, or \emph{mixed}. 
This profile determines how the templates of task families are instantiated for the dataset. Detailed algorithms and profile-assignment criteria are provided in Appendix~\ref{appendix:feature_profile_inference}.

% \paragraph{Normality-anchored task families.}
% Rather than expecting that a single normality-anchored virtual task can provide suitable predictive coverage which  across diverse tabular datasets, we construct a family of complementary virtual tasks. 
% Each task is designed to capture a different aspect of nominal structure, thereby inducing a distinct context-dependent predictive relation and broadening the range of anomalous deviations that can be exposed across diverse datasets. 

%In our framework, 
Given the inferred attribute profile of the dataset, we instantiate a common set of five
complementary virtual-task templates using profile-specific constructions:
(i) \textit{single-attribute}, capturing attribute-wise dependencies between a
selected attribute and the remaining attributes;
(ii) \textit{localized subspace direction}, capturing dependencies within
selected attribute subsets;
(iii) \textit{global direction}, capturing joint dependencies across the full
attribute space;
(iv) \textit{prototype-based organization}, capturing multimodal structure
among normal samples; and
(v) \textit{distributional extremity}, capturing the relative position of a
sample within the normal distribution.
All families are constructed from the normality-aware context and therefore target
complementary aspects of its structure. Detailed profile-specific instantiations
and their correspondence to Definition~\ref{def:ptf_normality_anchored} are
provided in Appendix~\ref{appendix:virtual_task_instantiations}.

\subsection{Data-Adaptive Task Selection}
\label{sec:DataAdaptiveTaskFilter}
Among the task families instantiated from these templates, their effectiveness for TAD depends on the characteristics of the given dataset. As illustrated in Figure~\ref{fig:TaskSelection}, we therefore perform hierarchical data-adaptive task selection to retain the tasks that best satisfy the two conditions defined above. %for each dataset.

First, for each instantiated task $m$ corresponding to a specific template, we consider a predefined set of candidate task-construction configurations that determine the context-induced parameter $\widehat{\psi}_{m,\mathcal{C}}$.
Although all candidates are normality-anchored, the configuration that best satisfies the two conditions above can vary substantially across datasets. We therefore instantiate 4--8 candidates for each task $m$ and select the most suitable configuration in a data-adaptive manner.

%we partition the context into a nominal held-out subset and its complementary train-side context.
\begin{wrapfigure}[19]{r}{0.4\textwidth}
\vspace{-15pt}
    \centering
    \includegraphics[
        width=\linewidth, trim=25mm 30mm 50mm 30mm,
        clip
    ]{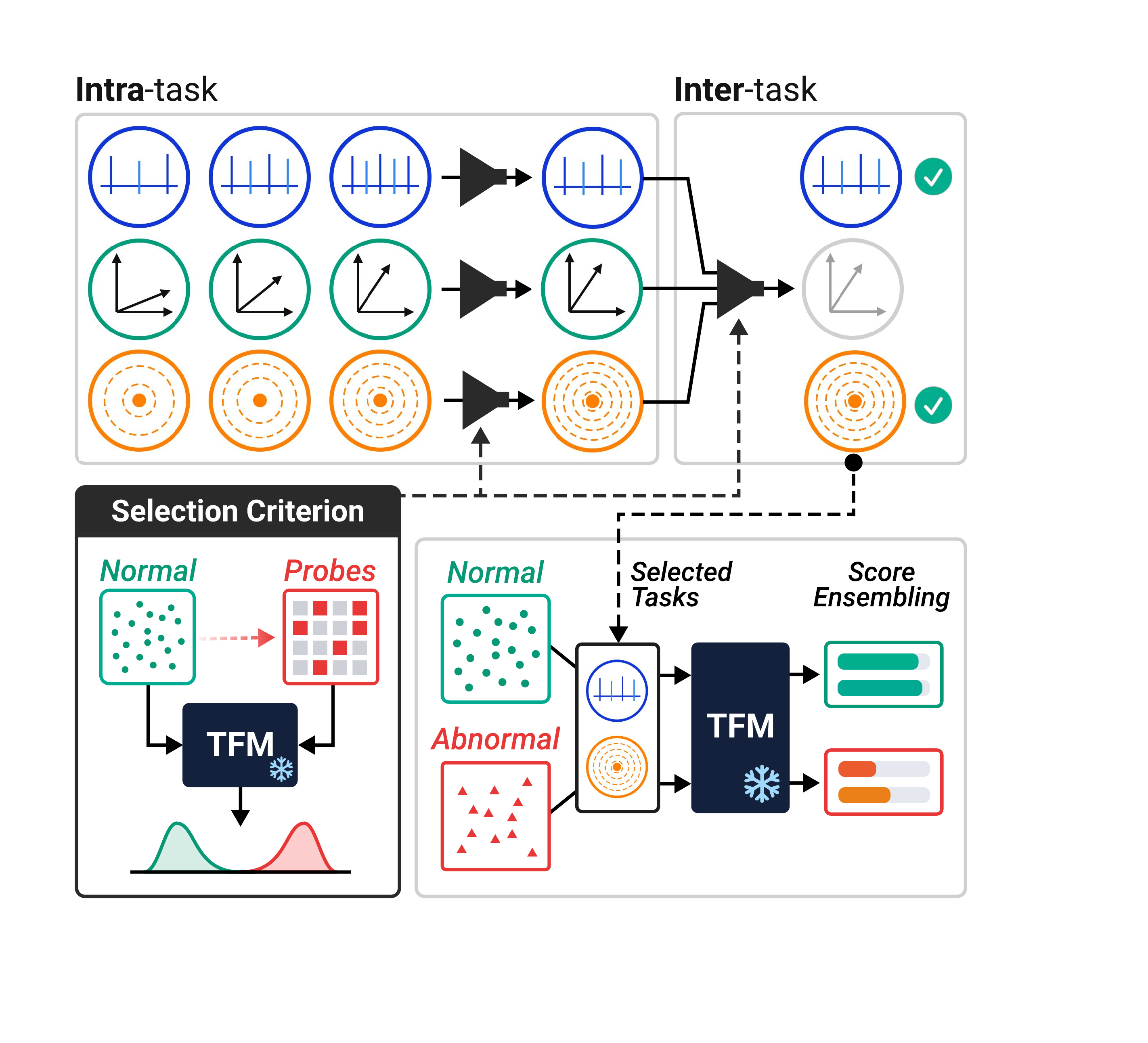}
    \vspace{-20pt}
    \caption{Overview of data-adaptive task selection. Candidates are evaluated with a pretrained TFM using nominal held-outs and structural-violation probes, and the most suitable tasks with proper configurations are retained.}
    \label{fig:TaskSelection}
\end{wrapfigure}
To select the most suitable candidate using only the available context $\mathcal{C}$,
we partition the context into a nominal held-out subset and its complementary train-side context.
% we partition the context into a nominal held-out subset and its complementary train-side context.
For split $s\in \mathcal{S}$, let $\mathcal{H}_s\subset\mathcal{C}$ denote the held-out subset and $\mathcal{C}_{-s}=\mathcal{C}\setminus\mathcal{H}_s$ the train-side context. 
For task $m$ under candidate configuration $c\in\mathcal{C}_m$, let
$\widehat{\psi}_{m,\mathcal{C}}^{\,c}$ denote the corresponding context-induced task parameter and define
% $\widehat{y}_{m}^{\,c}(\vx;\mathcal{C})$ its virtual-labeling rule.
$\widehat{y}_{m}^{\,c}(\vx;\mathcal{C})
=
g_m(\vx;\widehat{\psi}_{m,\mathcal{C}}^{\,c})$.
The virtually labeled train-side context is then
% \begin{wrapfigure}[19]{r}{0.4\textwidth}
% \vspace{-10pt}
%     \centering
%     \includegraphics[
%         width=\linewidth,
%     ]{Figures/Method_Virtual_Task_Detail.pdf}
%     \vspace{-17pt}
%     \caption{Overview of data-adaptive task selection. Candidates are evaluated with a pretrained TFM using nominal held-outs and structural-violation probes, and the most suitable task configurations and families are retained.}
%     \label{fig:TaskSelection}
% \end{wrapfigure}
\begin{equation}
    \mathcal{D}^{m,c}_{\mathcal{C}_{-s}}
    :=
    \left\{
        \left(
            \vx,
            \widehat{y}_{m}^{\,c}(\vx;\mathcal{C})
        \right)
        :
        \vx\in\mathcal{C}_{-s}
    \right\}.
    \label{eq:selection_train_context}
\end{equation}

Beyond the nominal held-outs, we additionally construct perturbed held-out samples calibrated from $\mathcal{C}_{-s}$,
%that are designed to disrupt nominal structure of inputs,
\begin{equation}
    \widetilde{\mathcal{H}}^{(r)}_s
    =
    T_r
    \!\left(
        \mathcal{H}_s;
        \mathcal{C}_{-s}
    \right),
    \qquad
    r\in\mathcal{R}_{\mathcal{C}_{-s}},
    \label{eq:structural_violation_probes}
\end{equation}
where $\mathcal{R}_{\mathcal{C}_{-s}}$ denotes the set of
structure-disrupting operators available under $\mathcal{C}_{-s}$, and $T_r$ denotes the transformation associated with operator $r$.

These samples are deliberately constructed to depart from the normal structure characterized by $\mathcal{C}_{-s}$, thereby inducing non-trivial shifts from the normal distribution. 
They serve as surrogate samples for evaluating whether each candidate exhibits sufficiently selective task coverage (C2), rather than to approximate the unknown test-time anomaly distribution. Details of their construction and distributional effects are provided in Appendix~\ref{appendix:pseudo_ood_generation}.
We evaluate each candidate on both the nominal held-out samples and the structural-violation probes using the pretrained general-purpose TFM
$q_{\theta^\star}$, conditioned on the corresponding labeled context $\mathcal{D}^{m,c}_{\mathcal{C}_{-s}}$.
For any evaluation sample $\vx$ associated with split $s$, its
posterior-predictive support is
\begin{equation}
    \widehat{\kappa}_{m,c,s}(\vx)
    :=
    q_{\theta^\star}
    \!\left(
        %\widehat{y}_{m,c}
        \widehat{y}_{m}^{\,c}(\vx;\mathcal{C})
        \mid
        \vx,
        \mathcal{D}^{m,c}_{\mathcal{C}_{-s}}
    \right).
    \label{eq:heldout_predictive_compatibility}
\end{equation}

We aggregate these supports across held-out splits into the following sets:
%into the following sets:
% \begin{equation}
% \begin{gathered}
%     \mathcal{K}^{\mathrm{nom}}_{m,c}
%     :=
%     \left\{
%         \widehat{\kappa}_{m,c,s}(\vx)
%         :
%         \vx\in\mathcal{H}_s,\;
%         s\in \mathcal{S}
%     \right\},
%     \\
%     \mathcal{K}^{\mathrm{vio}}_{m,c}
%     :=
%     \left\{
%         \widehat{\kappa}_{m,c,s}(\widetilde{\vx})
%         :
%         \widetilde{\vx}\in\widetilde{\mathcal{H}}^{(r)}_s,\;
%        r\in\mathcal{R}_{\mathcal{C}_{-s}},\;
%         s\in \mathcal{S}
%     \right\}.
% \end{gathered}
% \label{eq:heldout_compatibility_sets}
% \end{equation}
\begin{equation}
\begin{aligned}
\mathcal{K}^{\mathrm{nom}}_{m,c}
&:=
\left\{
    \widehat{\kappa}_{m,c,s}(\vx)
    :
    \vx\in\mathcal{H}_s,\;
    s\in\mathcal{S}
\right\},
\\
\mathcal{K}^{\mathrm{vio}}_{m,c}
&:=
\left\{
    \widehat{\kappa}_{m,c,s}(\widetilde{\vx})
    :
    \widetilde{\vx}\in\widetilde{\mathcal{H}}^{(r)}_s,\;
    r\in\mathcal{R}_{\mathcal{C}_{-s}},\;
    s\in\mathcal{S}
\right\}.
\end{aligned}
\label{eq:heldout_compatibility_sets}
\end{equation}
% \begin{equation}
% \begin{aligned}
%     \mathcal{K}^{\mathrm{nom}}_{m,c}
%     &:=
%     \left\{
%         \widehat{\kappa}_{m,c,s}(\mathbf{x})
%         :
%         \mathbf{x}\in\mathcal{H}_s,\;
%         s\in\mathcal{S}
%     \right\}, \\
%     \mathcal{K}^{\mathrm{vio}}_{m,c}
%     &:=
%     \left\{
%         \widehat{\kappa}_{m,c,s}(\widetilde{\mathbf{x}})
%         :
%         \widetilde{\mathbf{x}}\in\widetilde{\mathcal{H}}^{(r)}_s,\;
%         r\in\mathcal{R},\;
%         s\in\mathcal{S}
%     \right\}.
% \end{aligned}
% \label{eq:heldout_compatibility_sets}
% \end{equation}
% where $\mathcal{S}$ denotes the set of held-out splits.
For each candidate $c$, we summarize its nominal coherence (C1) and selective task coverage (C2) by
\begin{equation}
\begin{aligned}
    \vr_{m,c}
    &:=
    \Big(
        \operatorname{Med}(\mathcal{K}^{\mathrm{nom}}_{m,c}),
        % \operatorname{Med}(\mathcal{K}^{\mathrm{vio}}_{m,c}),
        \Var(\mathcal{K}^{\mathrm{nom}}_{m,c}),
        % \operatorname{Var}(\mathcal{K}^{\mathrm{vio}}_{m,c}),
        \rho_{m,c},
        \Delta_{m,c}
    \Big), \\
    \rho_{m,c}
    &:=
    \Pr_{u\sim\mathcal{K}^{\mathrm{nom}}_{m,c},\,
         v\sim\mathcal{K}^{\mathrm{vio}}_{m,c}}
    [u>v],\,
    \Delta_{m,c}
    :=
    Q_{0.25}(\mathcal{K}^{\mathrm{nom}}_{m,c})
    -
    Q_{0.75}(\mathcal{K}^{\mathrm{vio}}_{m,c}),
\end{aligned}
\label{eq:intra_selection_statistics}
\end{equation}
% \begin{wrapfigure}[20]{r}{0.4\textwidth}
% \vspace{-20pt}
%     \centering
%     \includegraphics[
%         width=\linewidth,
%     ]{Figures/Virtual_Task_Detail.pdf}
%     \vspace{-15pt}
%     \caption{Overview of data-adaptive task selection. Candidates are evaluated with a pretrained TFM using nominal held-outs and structural-violation probes, and the most suitable task configurations and families are retained.}
%     \label{fig:TaskSelection}
% \end{wrapfigure}
where $\operatorname{Med}$ and $\Var$ denote the median and variance, respectively, $Q_{\alpha}$ denotes the $\alpha$-quantile.
%, $\rho_{m,c}$ measures the pairwise ordering probability between nominal and structural-violation compatibilities.
% jointly captures the typical compatibility and dispersion of both
% sample types, their pairwise ordering, and their conservative
% inter-distribution separation. 
% Based on this evidence, We perform \emph{intra-task selection} by choosing one representative
% configuration for each task,
Based on these, we perform \emph{intra-task selection} to choose one representative configuration for each task,
\begin{equation}
    c_m^\star
    =
    \operatorname{Select}_{\mathrm{intra}}
    \left(
        \left\{
            (c,\vr_{m,c})
            :
            c\in\mathcal{C}_m
        \right\}
    \right).
    \label{eq:intra_family_selection}
\end{equation}
We then perform \emph{inter-task selection} over the resulting task
representatives to retain only the tasks most suitable for the current dataset,
%over the resulting task representatives,
\begin{equation}
    \mathcal{M}^{\star}
    =
    \operatorname{Select}_{\mathrm{inter}}
    \left(
        \left\{
            \left(
                c_m^\star,
                \vr_{m,c_m^\star}
            \right)
            :
            m\in\mathcal{M}
        \right\}
    \right),
    \label{eq:inter_family_filtering}
\end{equation}
where $\mathcal{M}^{\star}$ denotes the subset of tasks
retained for the current dataset. This second stage data-adaptively retains the tasks most suitable for TAD according to the two proposed conditions. The candidate grids and details of
$\operatorname{Select}_{\mathrm{intra}}$ and
$\operatorname{Select}_{\mathrm{inter}}$ are provided in
Appendix~\ref{appendix:task_filtering}.

% removes tasks whose
% predictive coverage is insufficiently reliable relative to the other
% candidate tasks.

% Consequently, rather than applying the entire predefined task bank uniformly,
% \method retains a dataset-adaptive subset of virtual tasks whose induced
% predictive relations exhibit both nominal coherence and sensitivity to
% controlled structural violations. 

\subsection{Anomaly Detection}
\label{ssec:ad}
% After task selection, we perform anomaly detection using only the retained virtual tasks $m\in\mathcal{M}^{\star}$ with $c_m^\star$.
We perform anomaly detection using only the retained virtual tasks $m\in\mathcal{M}^{\star}$ with their selected configurations $c_m^\star$.
For a query $\vx_q$, the pretrained TFM evaluates the predictive support
on each retained task as
% Let $m_f^{\star}$ denote the representative virtual task selected for family $f\in\mathcal{F}^{\star}$.
% For a query $\vx_q$, each retained task evaluates its predictive support
% using the pretrained PFN-based TFM as
% \begin{equation}
% $    \widehat{\kappa}_{m^{\star}}
%     (\mathcal{C},\vx_q)
$\widehat{\kappa}_{m}(\vx_q;\mathcal{C})
    =
    q_{\theta^\star}
    \left(
        \widehat{y}_{m}^{\,c_m^\star}
        (\vx_q;\mathcal{C})
        %\widehat y_m^{\,c}(\vx_q;\mathcal C)
        \mid
        \vx_q,
        % \mathcal D_{\mathcal C}^{m,c_m^\star}
        \mathcal{D}^{m,c^{\star}_m}_{\mathcal{C}}
    \right).
    $
%     \label{eq:final_task_compatibility}
% \end{equation}
This is then converted into a task-specific anomaly score,
\begin{equation}
    s_{m}
    (\vx_q)
    =
    \varphi
    \left(
        % \widehat{\kappa}_{m^{\star}}
        % (\mathcal{C},\vx_q)
        \widehat{\kappa}_{m}(\vx_q;\mathcal{C})
    \right),
    \label{eq:final_task_anomaly_score}
\end{equation}
% where $\phi$ is strictly decreasing decreasing scoring function such that lower predictive compatibility corresponds to stronger anomaly evidence.
% where $\varphi$ monotonically maps lower predictive support on task $m$ to stronger anomaly evidence.
where $\varphi$ maps lower calibrated predictive support to stronger anomaly evidence.
% where $\varphi$ is a non-increasing scoring function such that lower predictive support on task $m$ corresponds to stronger anomaly evidence.
% Since different retained tasks probe complementary nominal regularities, 
We aggregate task-specific anomaly scores from the retained tasks to obtain the final anomaly
score:
\begin{equation}
    S(\vx_q)
    =
    \operatorname{Ensemble}
    \left(
        \left\{
            s_{m}
            (\vx_q)
        \right\}_{m\in\mathcal{M}^{\star}}
    \right).
    \label{eq:final_anomaly_score}
\end{equation}
The detailed construction of $\varphi$ and the ensemble strategies are provided in Appendix~\ref{appendix:anomaly_scoring}.

\section{Experiment}
\label{sec:experiment}
\paragraph{\method setup.}
% \input{Tables/table_oddbench_top5_aucroc_formatted}
% \input{Figures/AUCROC_elo_avg_rank_dist}
% We instantiate our method with two pretrained PFN-based TFM backbones,
% TabICLv2~\citep{qu2026tabiclv2betterfasterscalable} and
% TabPFNv3~\citep{grinsztajn2026tabpfn3technicalreport}, both of which achieve strong predictive performance on TabArena~\citep{erickson2025tabarenalivingbenchmarkmachine}.
% Each backbone is used with its default configuration.
% All empirical results for \method use TabICLv2 backbone unless otherwise stated.
% We instantiate our method with the pretrained PFN-based TFM backbone,
% TabICLv2~\citep{qu2026tabiclv2betterfasterscalable}, which achieves strong predictive performance on TabArena~\citep{erickson2025tabarenalivingbenchmarkmachine}.
% The backbone is used with its default configuration.
% For generalization, we further provide the AD performance with other backbone TabPFN V2.6 and V3 in Appendix~\ref{app:MethodwDifferentTFM}.
We instantiate \method with the pretrained general-purpose TFM backbone
TabICLv2~\citep{qu2026tabiclv2betterfasterscalable}, which achieves strong
predictive performance on TabArena~\citep{NEURIPS2025_1697e3fb}.
The backbone is used with its default configuration.
%All empirical results for \method use TabICLv2 backbone unless otherwise stated.

To account for the predictive characteristics of each backbone, we perform a one-time backbone-specific calibration of the framework-level hyperparameters governing $\operatorname{Select}_{\mathrm{intra}}$, $\operatorname{Select}_{\mathrm{inter}}$, and $\operatorname{Ensemble}$ using a set of real-world datasets from ADBench~\citep{NEURIPS2022_cf93972b}, which are disjoint from the main benchmark. The hyperparameters yielding the best performance on them are then fixed for each backbone before evaluation and applied unchanged to all unseen datasets. No dataset-specific fitting or further adaptation is performed.
Details are provided in Appendix~\ref{Appendix:Configuration Search}.
% \dy{To account for the predictive characteristics of each TFM backbone, we perform a one-time framework-level function hyperparameter fitting search for $\operatorname{Select}_{\mathrm{intra}}$, $\operatorname{Select}_{\mathrm{inter}}$, and $\operatorname{Ensemble}$ using a set of real-world datasets from~\citet{han2022adbenchanomalydetectionbenchmark}, which are disjoint from ODDBench. The hyperparameters with the best average performance over AUCROC and AUCPR are then fixed for each TFM backbone and applied to all unseen evaluation datasets without further tuning or adaptation. The backbone parameters remain frozen throughout this process. Details are provided in Appendix~\ref{Appendix:Configuration Search}.}
% \subsection{Experimental Settings}
% \label{ssec:experimentalsetting}
% \paragraph{Datasets.}
% We evaluate all baselines, including \method, on ODDBench~\citep{Ding_2026}, which contains 790 real-world tabular anomaly detection datasets spanning diverse application domains and semantic anomaly types, such as fraud and system faults.
\paragraph{Experimental Settings.}
All baselines, including ours, are evaluated on ODDBench~\citep{Ding_2026}, which contains 790 real-world TAD datasets. 
%spanning diverse application domains and semantic anomaly types, such as fraud and system faults.
For each dataset, we use the same normal-only split as the training set for conventional baselines and the context set for PFN-based methods, following~\citet{marszalek2026tacticnavigatingunknowntabular}. Details of the dataset setup are provided in
Appendix~\ref{Appendix:Datasets}.

% \paragraph{Baselines.}
We compare \method with 30 baselines, including 25 conventional machine learning and deep learning methods adopted from the benchmark suite of~\citet{ICLR2024_6dfd16ff} and five recent TFM-based TAD baselines. Details of the baselines and their settings are provided in Appendix~\ref{Appendix:Baselines}.

% We evaluate detection performance using both AUCROC and AUCPR on each dataset.
Detection performance is evaluated using both AUCROC and AUCPR on each dataset.
Given the large number and heterogeneity of datasets, we complement raw performance averages with aggregate comparison metrics, such as Elo scores, to better capture relative performance across baselines. All experiments are repeated over five random seeds. Further details are provided in
Appendix~\ref{Appendix:EvaluationMetrics}.

\begin{table*}[t]
\centering
\caption{
Overall performance on ODDBench, showing the five methods with the highest average AUCROC among 31 baselines including \method. Results are averaged over five random seeds.
Parentheses indicate ranks among all 31 methods, and Total Rank is the mean rank across eight evaluation metrics; six representative metrics are shown. Best, second-best, and third-best results are highlighted in \textcolor{red}{\textbf{red bold}},
\textcolor{blue}{\underline{blue underline}}, and
\textcolor{green!60!black}{green}, respectively.
Our method achieves the strongest overall performance across the reported
metrics. Full results are provided in Appendix~\ref{app:ExtendedOverallPerformance}.
}
\vspace{-5pt}
\label{tab:oddbench-top5-aucroc}
% \large
\resizebox{\textwidth}{!}{%
\begin{tabular}{lrrrrrr}
\toprule
\textbf{Methods}
& \begin{tabular}[c]{@{}c@{}}\textbf{Avg. Rank}\\\textbf{(AUCROC) $\downarrow$}\end{tabular}
& \begin{tabular}[c]{@{}c@{}}\textbf{Avg. Rank}\\\textbf{(AUCPR) $\downarrow$}\end{tabular}
& \begin{tabular}[c]{@{}c@{}}\textbf{ELO}\\\textbf{(AUCROC) $\uparrow$}\end{tabular}
& \begin{tabular}[c]{@{}c@{}}\textbf{ELO}\\\textbf{(AUCPR) $\uparrow$}\end{tabular}
& \begin{tabular}[c]{@{}c@{}}\textbf{Top3 Ratio(\%)}\\\textbf{(AUCROC) $\uparrow$}\end{tabular}
& \begin{tabular}[c]{@{}c@{}}\textbf{Top3 Ratio(\%)}\\\textbf{(AUCPR) $\uparrow$}\end{tabular} \\
% & \begin{tabular}[c]{@{}c@{}}\textbf{Total}\\\textbf{Rank $\downarrow$}\end{tabular} \\
\midrule

DTE-NP
& \textcolor{blue}{\underline{$9.09_{\pm 0.03}\;(2)$}}
& \textcolor{blue}{\underline{$9.37_{\pm 0.02}\;(2)$}}
& \textcolor{blue}{\underline{$1167.7_{\pm 1.0}\;(2)$}}
& \textcolor{blue}{\underline{$1160.1_{\pm 0.7}\;(2)$}}
& \textcolor{blue}{\underline{$25.3_{\pm 0.6}\;(2)$}}
& $20.3_{\pm 0.5}\;(4)$ \\
% & \textcolor{blue}{\underline{$2.25\;(2)$}} \\

KNN
& \textcolor{green!60!black}{$9.83_{\pm 0.03}\;(3)$}
& \textcolor{green!60!black}{$10.38_{\pm 0.02}\;(3)$}
& \textcolor{green!60!black}{$1144.7_{\pm 0.9}\;(3)$}
& \textcolor{green!60!black}{$1130.5_{\pm 0.7}\;(3)$}
& $16.8_{\pm 0.4}\;(8)$
& $13.7_{\pm 0.6}\;(10)$ \\
% & \textcolor{green!60!black}{$4.75\;(3)$} \\

% FeatureBagging
% & $12.14_{\pm 0.20}\;(4)$
% & $13.19_{\pm 0.21}\;(10)$
% & $1086.9_{\pm 4.8}\;(4)$
% & $1061.4_{\pm 5.2}\;(10)$
% & $19.0_{\pm 1.1}\;(6)$
% & $18.0_{\pm 1.1}\;(8)$
% & $7.00\;(6)$ \\

TACTIC-Clean
& $12.22_{\pm 0.05}\;(5)$
& $11.66_{\pm 0.05}\;(4)$
& $1086.1_{\pm 1.4}\;(6)$
& $1100.2_{\pm 1.3}\;(4)$
& $17.9_{\pm 0.4}\;(7)$
& $19.0_{\pm 0.5}\;(6)$\\
%& $5.50\;(4)$ \\

OUTFORMER & $13.32_{\pm 0.05}\;(10)$ & $12.19_{\pm 0.02}\;(6)$ & $1059.1_{\pm 1.4}\;(10)$ & $1087.3_{\pm 0.5}\;(5)$ & $19.9_{\pm 0.4}\;(4)$ & \textcolor{green!60!black}{$21.5_{\pm 0.3}\;(3)$} \\ % & $6.00\;(5)$ \\

\midrule

% \textbf{\method ($\mathcal{S}=1$)}
% & \textcolor{red}{$\bm{8.39_{\pm 0.10}\;(1)}$}
% & \textcolor{red}{$\bm{7.82_{\pm 0.06}\;(1)}$}
% & \textcolor{red}{$\bm{1189.5_{\pm 3.2}\;(1)}$}
% & \textcolor{red}{$\bm{1208.1_{\pm 2.2}\;(1)}$}
% & \textcolor{red}{$\bm{44.9_{\pm 1.3}\;(1)}$}
% & \textcolor{red}{$\bm{46.3_{\pm 1.6}\;(1)}$}
% & \textcolor{red}{$\bm{1.00\;(1)}$} \\
\textbf{\method}
& \textcolor{red}{$\bm{8.39_{\pm 0.10}\;(1)}$}
& \textcolor{red}{$\bm{7.82_{\pm 0.06}\;(1)}$}
& \textcolor{red}{$\bm{1189.5_{\pm 3.2}\;(1)}$}
& \textcolor{red}{$\bm{1208.1_{\pm 2.2}\;(1)}$}
& \textcolor{red}{$\bm{44.9_{\pm 1.3}\;(1)}$}
& \textcolor{red}{$\bm{46.3_{\pm 1.6}\;(1)}$}\\
% & \textcolor{red}{$\bm{1.00\;(1)}$} \\
% \textbf{\method (TabPFNV3)}
% & \textcolor{red}{$\bm{8.39_{\pm 0.10}\;(1)}$}
% & \textcolor{red}{$\bm{7.82_{\pm 0.06}\;(1)}$}
% & \textcolor{red}{$\bm{1189.5_{\pm 3.2}\;(1)}$}
% & \textcolor{red}{$\bm{1208.1_{\pm 2.2}\;(1)}$}
% & \textcolor{red}{$\bm{44.9_{\pm 1.3}\;(1)}$}
% & \textcolor{red}{$\bm{46.3_{\pm 1.6}\;(1)}$}
% & \textcolor{red}{$\bm{1.00\;(1)}$} \\
% \textbf{\method (Heldout 1)}
% & \textcolor{red}{$\bm{8.39_{\pm 0.10}\;(1)}$}
% & \textcolor{red}{$\bm{7.82_{\pm 0.06}\;(1)}$}
% & \textcolor{red}{$\bm{1189.5_{\pm 3.2}\;(1)}$}
% & \textcolor{red}{$\bm{1208.1_{\pm 2.2}\;(1)}$}
% & \textcolor{red}{$\bm{44.9_{\pm 1.3}\;(1)}$}
% & \textcolor{red}{$\bm{46.3_{\pm 1.6}\;(1)}$}
% & \textcolor{red}{$\bm{1.00\;(1)}$} \\

\bottomrule
\end{tabular}%
}
\vspace{-5pt}
\end{table*}

% \parbox{\textwidth}{\scriptsize\textit{Notes.} Values are mean $\pm$ standard deviation across five seeds. AUC and Top-3 Ratio are reported in \%. Parentheses denote relative ranks among all 31 models. Total Rank is the mean of the eight parenthetical ranks. Best, second-best, and third-best results are shown in \textcolor{red}{\textbf{red bold}}, \textcolor{blue}{\underline{blue underline}}, and \textcolor{green!60!black}{green}, respectively.}
\begin{figure*}[t]
    \centering
    \includegraphics[width=0.95\linewidth]{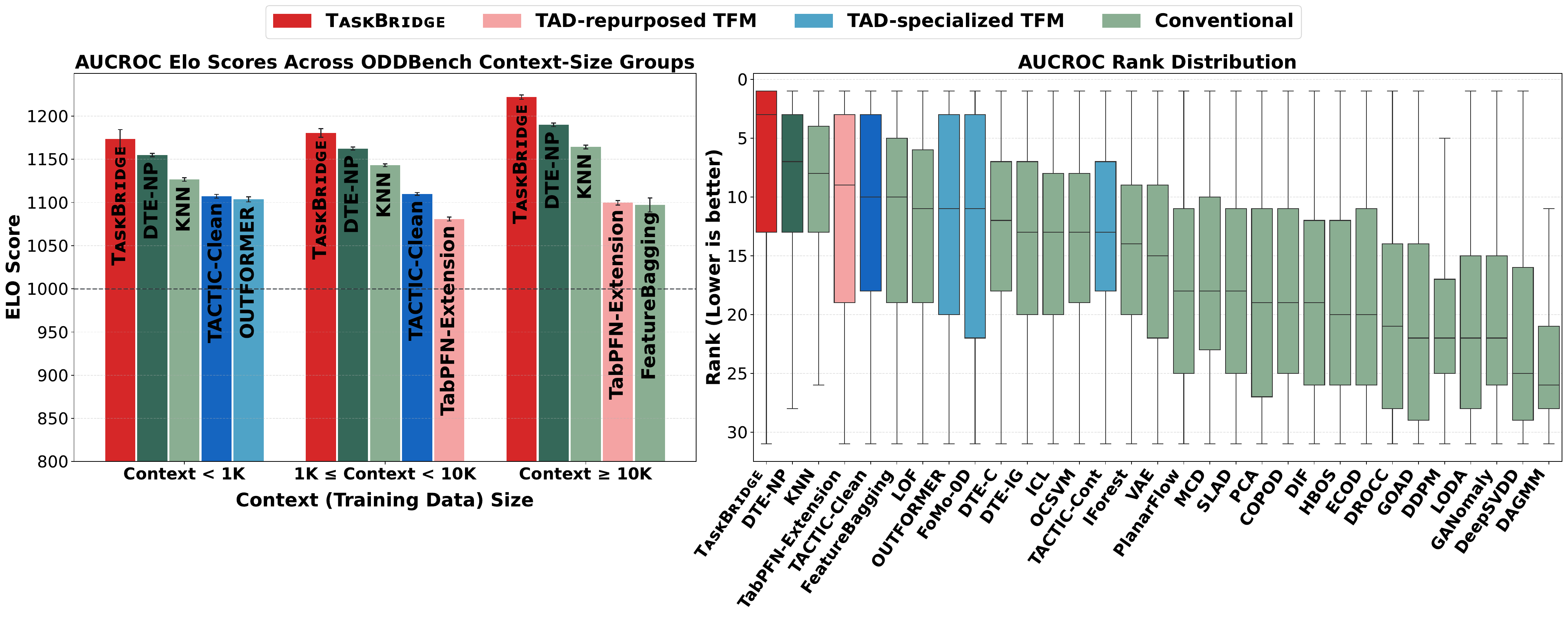}
    \vspace{-10pt}
    % \caption{Elo score comparision between total 31 baselines. (Extended version of right-side figure of Figure~\ref{fig:overall_comparison_intro}.) \method achieves the highest Elo scores in both AUCROC and AUCPR among all baselines, demonstrating our method steadily get robust ad performance across diverse table datasets.}
    \caption{(Left) Elo scores across three OODBench dataset groups with different context (training data) size 
($<1$K, $1$K--$10$K, and $\geq10$K), showing the five methods with the highest
average Elo scores in each group.
(Right) Per-dataset AUCROC rank distributions across all baselines; colors follow
Figure~\ref{fig:overall_comparison_intro}.
\method achieves the highest Elo score across all dataset groups and the
strongest overall rank distribution. Corresponding AUCPR results are provided
in Appendix~\ref{app:ExtendedOverallPerformance}.}
\label{fig:aucroc_elo_rank_dist}
    \vspace{-15pt}
\end{figure*}
\subsection{Overall Performance.}
\label{ssec:overallperformance}
As shown in Table~\ref{tab:oddbench-top5-aucroc} and Figure~\ref{fig:aucroc_elo_rank_dist}, \method achieves the strongest overall anomaly-detection performance among 31 methods across 790 datasets, ranking highest across all comparison metrics.
In particular, the left panel of Figure~\ref{fig:aucroc_elo_rank_dist} shows that \method remains consistently strong across dataset groups with diverse training/context resource sizes.
These results demonstrate that \method effectively harnesses a pretrained general-purpose TFM for TAD by bridging supervised ICL and anomaly detection through data-adaptive selection of suitable normality-anchored virtual tasks, without dataset-specific tuning or TFM retraining.
Full baseline/metric results for Table~\ref{tab:oddbench-top5-aucroc} and the corresponding AUCPR results for Figure~\ref{fig:aucroc_elo_rank_dist} are provided in Appendix~\ref{app:ExtendedOverallPerformance}.

% We provide the complete performance table, Elo-score comparisons, and per-dataset rank distributions in Appendix~\ref{appendix:OverallPerformance}, which further demonstrate the consistency of \method across heterogeneous
% datasets.
%while remaining substantially less costly than methods that require additional model training.
\begin{figure*}[t]
    \centering

    \begin{subfigure}[c]{0.43\linewidth}
        \centering
        \includegraphics[width=\linewidth]{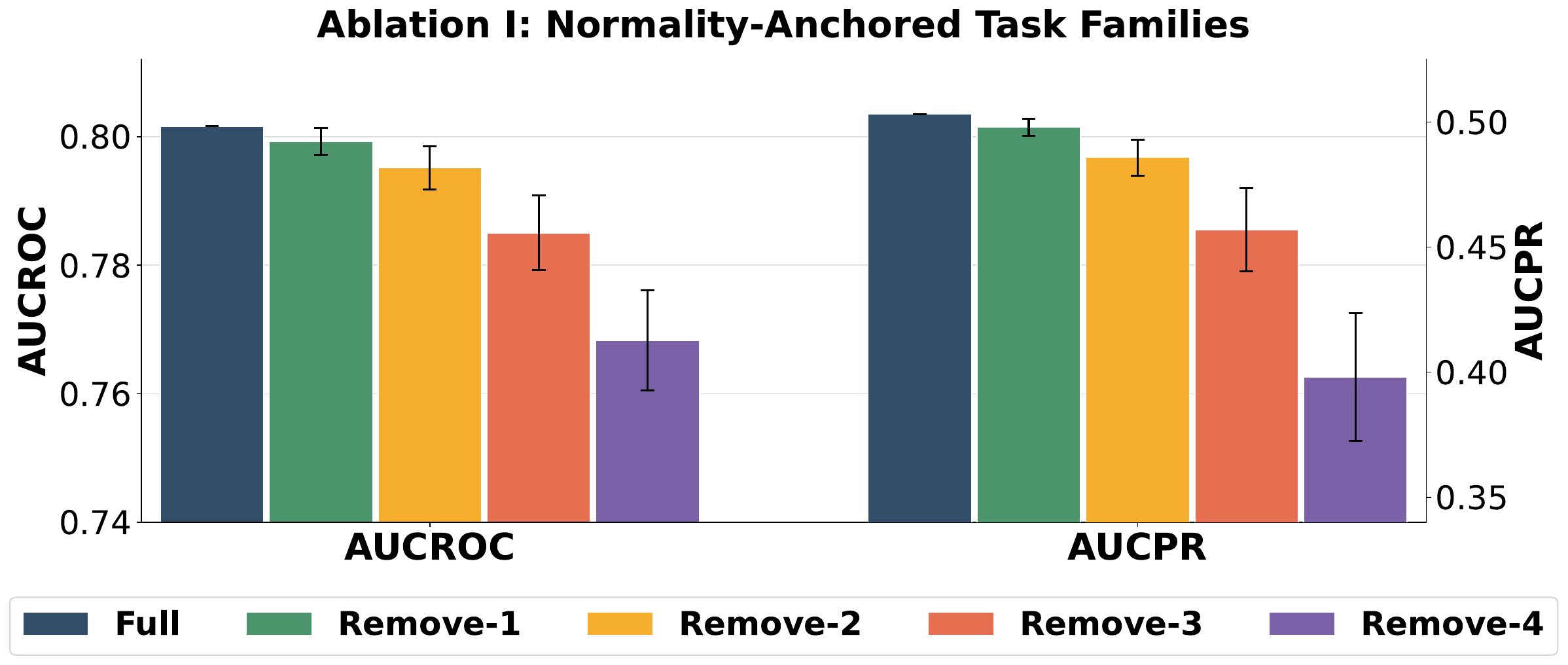}
        \label{fig:ablation1}
    \end{subfigure}
    % \hfill
    \begin{subfigure}[c]{0.43\linewidth}
        \centering
        \includegraphics[
            width=\linewidth,
            % trim=0 0 0 0.2cm,
            % clip
        ]{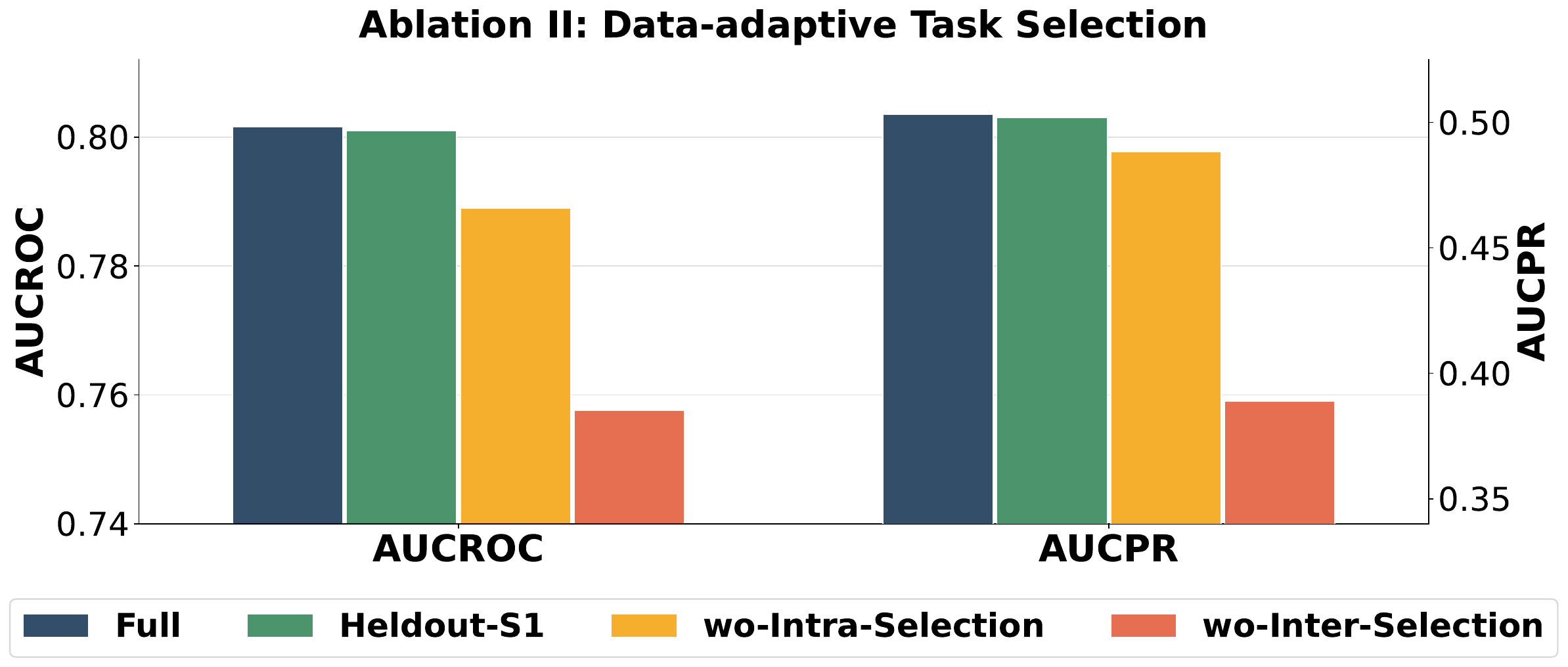}
        \label{fig:ablation2}
    \end{subfigure}

\vspace{-10pt}
\caption{
Ablation studies of \method.
(Left) Effect of the instantiated normality-anchored virtual tasks.
Remove-$i$ denotes removing $i$ tasks from the five task templates introduced
in Section~\ref{ssec:TaskFamilesandSelection}; error bars indicate the
variation across different removal combinations. The full task family achieves the best average performance.
(Right) Effect of data-adaptive task selection. We compare the full procedure
with a single held-out split, without intra-task selection, and without inter-task selection. The full selection procedure achieves the best overall
performance.
}
\label{fig:ablation-study}
\vspace{-10pt}
\end{figure*} 
\subsection{Ablation Studies}
\label{ssec:ablationstudy}
% We conduct ablation studies on two key components of \method: the normality-anchored virtual-task templates and data-adaptive task selection. 
% As shown in Figure~\ref{fig:ablation-study}, using the full set of task templates yields the best overall performance, while variants with one or more templates removed remain competitive, indicating that complementary task families provide broad anomaly coverage, while the task-selection procedure can still identify informative tasks from the remaining families. 
We ablate two key components of \method: the normality-anchored virtual-task templates and data-adaptive task selection. As shown in Figure~\ref{fig:ablation-study}, using the full set of task templates yields the best overall performance, while variants with one or more templates removed remain competitive. 
This supports the complementary anomaly coverage of the task families and the effectiveness of the task-selection procedure, which can still identify informative tasks from the remaining candidates.
% For the task selection, removing each selection stage degrades performance, with a larger drop observed when inter-task selection is removed. This highlights the importance of retaining well suited instantiated tasks  to each dataset. Interestingly, reducing the number of held-out splits utilized during task selection from the default setting $S=3$ to $S=1$ preserves the detection performance, which can lead substantially reducing task-selection cost while effective AD performance.
For task selection, removing either the intra- or inter-task stage degrades performance, with a larger drop when inter-task selection is removed, highlighting the importance of retaining dataset-suitable tasks. Reducing the number of held-out splits from the default $S=3$ to $S=1$ largely preserves detection performance while substantially lowering task-selection cost.
Further ablation details, including results for each template-removal combination, are provided in Appendix~\ref{app:AblationDetails}.
% Detailed results for different removal combinations are provided in Appendix~\ref{app:AblationDetails}.
% Further details of task-selection ablation are provided in Appendix~\ref{app:AblationDetails}, and the cost reducing effect for single held-out reported in Appendix~\ref{app:InferenceEfficiency}.
\subsection{Further Experiments}
We further analyze the inference efficiency of \method beyond the relative comparison in Figure~\ref{fig:overall_comparison_intro}. As detailed in Appendix~\ref{app:InferenceEfficiency}, task selection with the default three held-out splits takes less than one minute for most ODDBench datasets, even at context sizes near $100$K, while using a single split reduces this to below 20 seconds for most datasets while largely preserving performance, as shown in Section~\ref{ssec:ablationstudy}. 
After task selection, inference requires only milliseconds per sample.
%After task selection, detection with the retained tasks requires only millisecond-level inference per sample.

We also evaluate robustness to context contamination and limited context budgets in Appendices~\ref{app:ContaminationRobustness} and
\ref{app:ContextSizeRobustness}. 
As shown in Figures~\ref{fig:ContextContamination} and
\ref{fig:ContextSizeRobustness}, \method remains competitive under contaminated contexts compared with other TFM-based approaches, and retains strong performance even when the available normality-aware context is substantially reduced, remaining competitive with baselines that use their original context/training resources.
Definite performance degradation under heavier contamination also motivates improving task construction under imperfect contexts.

Finally, we assess backbone generalization by replacing the default TabICLv2 backbone with alternative pretrained TFM backbones, TabPFN~v2.6 and TabPFN~v3, which also show strong predictive performance on TabArena~\citep{NEURIPS2025_1697e3fb}, as reported in Appendix~\ref{app:MethodwDifferentTFM}. As shown in Tables~\ref{tab:oddbench-31-baselines-tabpfnv2p6} and \ref{tab:oddbench-31-baselines-tabpfnv3}, \method with both backbones maintains strong overall performance across ODDBench among 31 TAD baselines, demonstrating that its task construction,
selection, and anomaly-scoring framework generalizes beyond a single pretrained TFM backbone.
\section{Conclusion}
In this work, we introduced \method, a new framework for efficiently repurposing pretrained general-purpose TFMs for unsupervised tabular anomaly detection.
% \method bridges unsupervised TAD and supervised in-context prediction of TFMs by constructing normality-anchored virtual supervised tasks from an unlabeled nominal context following the conditions suitable for anomaly detection under which nominal query--target pairs remain predictively compatible while anomalous pairs tend to receive lower support from the pretrained TFM.
It bridges unsupervised TAD and supervised in-context prediction of TFMs by constructing normality-anchored virtual supervised tasks from an unlabeled normality-aware context, designed to satisfy AD-oriented conditions under which normal query--target pairs remain predictively compatible, whereas anomalous pairs tend to receive lower support from the pretrained TFM.
To adapt this mechanism to each unseen dataset, \method further employs data-adaptive task selection to retain the tasks with the most suitable predictive coverage.
% To further adapt detection to each unseen dataset,  \method employs data-adaptive task selection to retain tasks that assign high predictive support to nominal samples while separating them from anomalies.
Across 790 real-world datasets, \method achieves the strongest overall performance among 31 methods while preserving efficient inference with repurposed TFMs. 

A limitation of our work is that \method is designed to repurpose general-purpose PFN-based TFMs for TAD under the assumption that the context consists of clean normal samples. As shown in our appendix experiments, performance degrades when the context is heavily contaminated with anomalies. An important direction for future work is therefore to extend \method to settings where the available context is not restricted to clean normal samples, including fully unsupervised and semi-supervised TAD. Another promising direction is to extend the framework to more structured tabular domains, such as relational databases.

\bibliography{iclr2027_conference}

@misc{muller2024transformersbayesianinference,
      title={Transformers Can Do Bayesian Inference}, 
      author={Samuel Müller and Noah Hollmann and Sebastian Pineda Arango and Josif Grabocka and Frank Hutter},
      year={2024},
      eprint={2112.10510},
      archivePrefix={arXiv},
      primaryClass={cs.LG},
      url={https://arxiv.org/abs/2112.10510}, 
}

@misc{qu2026tabiclv2betterfasterscalable,
      title={TabICLv2: A better, faster, scalable, and open tabular foundation model}, 
      author={Jingang Qu and David Holzmüller and Gaël Varoquaux and Marine Le Morvan},
      year={2026},
      eprint={2602.11139},
      archivePrefix={arXiv},
      primaryClass={cs.LG},
      url={https://arxiv.org/abs/2602.11139}, 
}

@misc{grinsztajn2026tabpfn25advancingstateart,
      title={TabPFN-2.5: Advancing the State of the Art in Tabular Foundation Models}, 
      author={Léo Grinsztajn and Klemens Flöge and Oscar Key and Felix Birkel and Philipp Jund and Brendan Roof and Benjamin Jäger and Dominik Safaric and Simone Alessi and Adrian Hayler and Mihir Manium and Rosen Yu and Felix Jablonski and Shi Bin Hoo and Anurag Garg and Jake Robertson and Magnus Bühler and Vladyslav Moroshan and Lennart Purucker and Clara Cornu and Lilly Charlotte Wehrhahn and Alessandro Bonetto and Bernhard Schölkopf and Sauraj Gambhir and Noah Hollmann and Frank Hutter},
      year={2026},
      eprint={2511.08667},
      archivePrefix={arXiv},
      primaryClass={cs.LG},
      url={https://arxiv.org/abs/2511.08667}, 
}

@misc{ding2026zeroheroadvancingzeroshot,
      title={From Zero to Hero: Advancing Zero-Shot Foundation Models for Tabular Outlier Detection}, 
      author={Xueying Ding and Haomin Wen and Simon Klüttermann and Leman Akoglu},
      year={2026},
      eprint={2602.03018},
      archivePrefix={arXiv},
      primaryClass={cs.LG},
      url={https://arxiv.org/abs/2602.03018}, 
}

@misc{shen2025fomo0dfoundationmodelzeroshot,
      title={FoMo-0D: A Foundation Model for Zero-shot Tabular Outlier Detection}, 
      author={Yuchen Shen and Haomin Wen and Leman Akoglu},
      year={2025},
      eprint={2409.05672},
      archivePrefix={arXiv},
      primaryClass={cs.LG},
      url={https://arxiv.org/abs/2409.05672}, 
}

@misc{marszalek2026tacticnavigatingunknowntabular,
      title={TACTIC for Navigating the Unknown: Tabular Anomaly deteCTion via In-Context inference}, 
      author={Patryk Marszałek and Tomasz Kuśmierczyk and Marek Śmieja},
      year={2026},
      eprint={2603.14171},
      archivePrefix={arXiv},
      primaryClass={cs.LG},
      url={https://arxiv.org/abs/2603.14171}, 
}

@misc{hollmann2023tabpfntransformersolvessmall,
      title={TabPFN: A Transformer That Solves Small Tabular Classification Problems in a Second}, 
      author={Noah Hollmann and Samuel Müller and Katharina Eggensperger and Frank Hutter},
      year={2023},
      eprint={2207.01848},
      archivePrefix={arXiv},
      primaryClass={cs.LG},
      url={https://arxiv.org/abs/2207.01848}, 
}

@inproceedings{NEURIPS2022_cf93972b,
 author = {Han, Songqiao and Hu, Xiyang and Huang, Hailiang and Jiang, Minqi and Zhao, Yue},
 booktitle = {Advances in Neural Information Processing Systems},
 doi = {10.52202/068431-2329},
 editor = {S. Koyejo and S. Mohamed and A. Agarwal and D. Belgrave and K. Cho and A. Oh},
 pages = {32142--32159},
 publisher = {Curran Associates, Inc.},
 title = {ADBench: Anomaly Detection Benchmark},
 url = {https://proceedings.neurips.cc/paper_files/paper/2022/file/cf93972b116ca5268827d575f2cc226b-Paper-Datasets_and_Benchmarks.pdf},
 volume = {35},
 year = {2022}
}

@article{10.1145/3464423,
author = {Fernando, Tharindu and Gammulle, Harshala and Denman, Simon and Sridharan, Sridha and Fookes, Clinton},
title = {Deep Learning for Medical Anomaly Detection – A Survey},
year = {2021},
issue_date = {September 2022},
publisher = {Association for Computing Machinery},
address = {New York, NY, USA},
volume = {54},
number = {7},
issn = {0360-0300},
url = {https://doi.org/10.1145/3464423},
doi = {10.1145/3464423},
journal = {ACM Comput. Surv.},
month = jul,
articleno = {141},
numpages = {37}
}

@article{ALHASHEDI2021100402,
title = {Financial fraud detection applying data mining techniques: A comprehensive review from 2009 to 2019},
journal = {Computer Science Review},
volume = {40},
pages = {100402},
year = {2021},
issn = {1574-0137},
doi = {https://doi.org/10.1016/j.cosrev.2021.100402},
url = {https://www.sciencedirect.com/science/article/pii/S1574013721000423},
author = {Khaled Gubran Al-Hashedi and Pritheega Magalingam}
}

@article{https://doi.org/10.1002/ett.4150,
author = {Ahmad, Zeeshan and Shahid Khan, Adnan and Wai Shiang, Cheah and Abdullah, Johari and Ahmad, Farhan},
title = {Network intrusion detection system: A systematic study of machine learning and deep learning approaches},
journal = {Transactions on Emerging Telecommunications Technologies},
volume = {32},
number = {1},
pages = {e4150},
doi = {https://doi.org/10.1002/ett.4150},
url = {https://onlinelibrary.wiley.com/doi/abs/10.1002/ett.4150},
eprint = {https://onlinelibrary.wiley.com/doi/pdf/10.1002/ett.4150},
year = {2021}
}

@INPROCEEDINGS{4781136,
  author={Liu, Fei Tony and Ting, Kai Ming and Zhou, Zhi-Hua},
  booktitle={2008 Eighth IEEE International Conference on Data Mining}, 
  title={Isolation Forest}, 
  year={2008},
  volume={},
  number={},
  pages={413-422},
  doi={10.1109/ICDM.2008.17}}

@inproceedings{10.1145/342009.335388,
author = {Breunig, Markus M. and Kriegel, Hans-Peter and Ng, Raymond T. and Sander, J{\"o}rg},
title = {LOF: identifying density-based local outliers},
year = {2000},
isbn = {1581132174},
publisher = {Association for Computing Machinery},
address = {New York, NY, USA},
url = {https://doi.org/10.1145/342009.335388},
doi = {10.1145/342009.335388},
booktitle = {Proceedings of the 2000 ACM SIGMOD International Conference on Management of Data},
pages = {93–104},
numpages = {12},
location = {Dallas, Texas, USA},
series = {SIGMOD '00}
}

@inproceedings{ICLR2024_6dfd16ff,
 author = {Livernoche, Victor and Jain, Vineet and Hezaveh, Yashar and Ravanbakhsh, Siamak},
 booktitle = {International Conference on Learning Representations},
 editor = {B. Kim and Y. Yue and S. Chaudhuri and K. Fragkiadaki and M. Khan and Y. Sun},
 pages = {25836--25866},
 title = {On Diffusion Modeling for Anomaly Detection},
 url = {https://proceedings.iclr.cc/paper_files/paper/2024/file/6dfd16ff880a63fee9f6469fee58a496-Paper-Conference.pdf},
 volume = {2024},
 year = {2024}
}

@inproceedings{Ding_2026,
   title={MacrOData: New Benchmarks of Thousands of Datasets for Tabular Outlier Detection},
   url={http://dx.doi.org/10.1145/3770855.3817520},
   DOI={10.1145/3770855.3817520},
   booktitle={Proceedings of the 32nd ACM SIGKDD Conference on Knowledge Discovery and Data Mining V.2},
   publisher={ACM},
   author={Ding, Xueying and Klüttermann, Simon and Wen, Haomin and Chen, Yilong and Akoglu, Leman},
   year={2026},
   month=Aug, pages={8777–8788} }

@inproceedings{10.1145/3701716.3715196,
author = {Chen, Sihan and Qian, Zhuangzhuang and Siu, Wingchun and Hu, Xingcan and Li, Jiaqi and Li, Shawn and Qin, Yuehan and Yang, Tiankai and Xiao, Zhuo and Ye, Wanghao and Zhang, Yichi and Dong, Yushun and Zhao, Yue},
title = {PyOD 2: A Python Library for Outlier Detection with LLM-powered Model Selection},
year = {2025},
isbn = {9798400713316},
publisher = {Association for Computing Machinery},
address = {New York, NY, USA},
url = {https://doi.org/10.1145/3701716.3715196},
doi = {10.1145/3701716.3715196},
booktitle = {Companion Proceedings of the ACM on Web Conference 2025},
pages = {2807–2810},
numpages = {4},
location = {Sydney NSW, Australia},
series = {WWW '25}
}

@misc{wei2026icladincontextlearningunified,
      title={ICLAD: In-Context Learning for Unified Tabular Anomaly Detection Across Supervision Regimes}, 
      author={Jack Yi Wei and Narges Armanfard},
      year={2026},
      eprint={2603.19497},
      archivePrefix={arXiv},
      primaryClass={cs.LG},
      url={https://arxiv.org/abs/2603.19497}, 
}

@article{elo1967proposed,
  title={The proposed uscf rating system, its development, theory, and applications},
  author={Elo, Arpad E},
  journal={Chess life},
  volume={22},
  number={8},
  pages={242--247},
  year={1967}
}

@inproceedings{NEURIPS2025_1697e3fb,
 author = {Erickson, Nick and Purucker, Lennart and Tschalzev, Andrej and Holzm\"{u}ller, David and Desai, Prateek and Salinas, David and Hutter, Frank},
 booktitle = {Advances in Neural Information Processing Systems},
 doi = {10.52202/085713-0519},
 editor = {D. Belgrave and C. Zhang and H. Lin and R. Pascanu and P. Koniusz and M. Ghassemi and N. Chen},
 pages = {},
 publisher = {Curran Associates, Inc.},
 title = {TabArena: A Living Benchmark for Machine Learning on Tabular Data},
 url = {https://proceedings.neurips.cc/paper_files/paper/2025/file/1697e3fb412da11dc9488249f9e7bbc9-Paper-Datasets_and_Benchmarks_Track.pdf},
 volume = {38, Main Conference},
 year = {2025}
}

@misc{xu2026needtrainrdbfoundation,
      title={No Need to Train Your RDB Foundation Model}, 
      author={Linjie Xu and Yanlin Zhang and Quan Gan and Minjie Wang and David Wipf},
      year={2026},
      eprint={2602.13697},
      archivePrefix={arXiv},
      primaryClass={cs.AI},
      url={https://arxiv.org/abs/2602.13697}, 
}

@misc{hoo2026tablestimeextendingtabpfnv2,
      title={From Tables to Time: Extending TabPFN-v2 to Time Series Forecasting}, 
      author={Shi Bin Hoo and Samuel Müller and David Salinas and Frank Hutter},
      year={2026},
      eprint={2501.02945},
      archivePrefix={arXiv},
      primaryClass={cs.LG},
      url={https://arxiv.org/abs/2501.02945}, 
}

@misc{grinsztajn2026tabpfn3technicalreport,
      title={TabPFN-3: Technical Report}, 
      author={Léo Grinsztajn and Klemens Flöge and Oscar Key and Felix Birkel and Philipp Jund and Brendan Roof and Mihir Manium and Shi Bin Hoo and Magnus Bühler and Anurag Garg and Dominik Safaric and Jake Robertson and Benjamin Jäger and Simone Alessi and Adrian Hayler and Vladyslav Moroshan and Lennart Purucker and Philipp Singer and Alan Arazi and Julien Siems and Jan Hendrik Metzen and Georg Grab and Nick Erickson and Siyuan Guo and Eliott Kalfon and Simon Bing and David Salinas and Clara Cornu and Lilly Charlotte Wehrhahn and Diana Kriuchkova and Kursat Kaya and Lydia Sidhoum and Marie Salmon and Jerry Chen and Madelon Hulsebos and Yann LeCun and Samuel Müller and Bernhard Schölkopf and Sauraj Gambhir and Noah Hollmann and Frank Hutter},
      year={2026},
      eprint={2605.13986},
      archivePrefix={arXiv},
      primaryClass={cs.LG},
      url={https://arxiv.org/abs/2605.13986}, 
}

@misc{eggert2023tablibdataset627mtables,
      title={TabLib: A Dataset of 627M Tables with Context}, 
      author={Gus Eggert and Kevin Huo and Mike Biven and Justin Waugh},
      year={2023},
      eprint={2310.07875},
      archivePrefix={arXiv},
      primaryClass={cs.CL},
      url={https://arxiv.org/abs/2310.07875}, 
}

@article{10.1214/aos/1079120141,
author = {David R. Hunter},
title = {{MM algorithms for generalized Bradley-Terry models}},
volume = {32},
journal = {The Annals of Statistics},
number = {1},
publisher = {Institute of Mathematical Statistics},
pages = {384 -- 406},
year = {2004},
doi = {10.1214/aos/1079120141},
URL = {https://doi.org/10.1214/aos/1079120141}
}

@article{Maziarka_2021,
   title={OneFlow: One-class flow for anomaly detection based on a minimal volume region},
   ISSN={1939-3539},
   url={http://dx.doi.org/10.1109/TPAMI.2021.3108223},
   DOI={10.1109/tpami.2021.3108223},
   journal={IEEE Transactions on Pattern Analysis and Machine Intelligence},
   publisher={Institute of Electrical and Electronics Engineers (IEEE)},
   author={Maziarka, Lukasz and Smieja, Marek and Sendera, Marcin and Struski, Lukasz and Tabor, Jacek and Spurek, Przemyslaw},
   year={2021},
   pages={1–1} }

@InProceedings{Graham_2023_CVPR,
    author    = {Graham, Mark S. and Pinaya, Walter H.L. and Tudosiu, Petru-Daniel and Nachev, Parashkev and Ourselin, Sebastien and Cardoso, Jorge},
    title     = {Denoising Diffusion Models for Out-of-Distribution Detection},
    booktitle = {Proceedings of the IEEE/CVF Conference on Computer Vision and Pattern Recognition (CVPR) Workshops},
    month     = {June},
    year      = {2023},
    pages     = {2948-2957}
}

@article{10.1145/3606274.3606277,
author = {Ma, Martin Q. and Zhao, Yue and Zhang, Xiaorong and Akoglu, Leman},
title = {The Need for Unsupervised Outlier Model Selection: A Review and Evaluation of Internal Evaluation Strategies},
year = {2023},
issue_date = {June 2023},
publisher = {Association for Computing Machinery},
address = {New York, NY, USA},
volume = {25},
number = {1},
issn = {1931-0145},
url = {https://doi.org/10.1145/3606274.3606277},
doi = {10.1145/3606274.3606277},
journal = {SIGKDD Explor. Newsl.},
month = jul,
pages = {19–35},
numpages = {17}
}

@inproceedings{NEURIPS2022_3e9113e2,
 author = {Ding, Xueying and Zhao, Lingxiao and Akoglu, Leman},
 booktitle = {Advances in Neural Information Processing Systems},
 doi = {10.52202/068431-0698},
 editor = {S. Koyejo and S. Mohamed and A. Agarwal and D. Belgrave and K. Cho and A. Oh},
 pages = {9603--9616},
 publisher = {Curran Associates, Inc.},
 title = {Hyperparameter Sensitivity in Deep Outlier Detection: Analysis and a Scalable Hyper-Ensemble Solution},
 url = {https://proceedings.neurips.cc/paper_files/paper/2022/file/3e9113e2bc2e700baa7d765470f140e1-Paper-Conference.pdf},
 volume = {35},
 year = {2022}
}

@inproceedings{NIPS2017_3f5ee243,
 author = {Vaswani, Ashish and Shazeer, Noam and Parmar, Niki and Uszkoreit, Jakob and Jones, Llion and Gomez, Aidan N and Kaiser, \L ukasz and Polosukhin, Illia},
 booktitle = {Advances in Neural Information Processing Systems},
 editor = {I. Guyon and U. Von Luxburg and S. Bengio and H. Wallach and R. Fergus and S. Vishwanathan and R. Garnett},
 pages = {},
 publisher = {Curran Associates, Inc.},
 title = {Attention is All you Need},
 url = {https://proceedings.neurips.cc/paper_files/paper/2017/file/3f5ee243547dee91fbd053c1c4a845aa-Paper.pdf},
 volume = {30},
 year = {2017}
}
\bibliographystyle{iclr2027_conference}

\appendix
\section{Method Details}
\label{Appendix:MethodDetails}
\subsection{Consistency of Normality-Anchored Virtual Tasks}
\label{app:ProofofConditionalTaskConsistency}
% The following proposition formalizes how the normality-anchored virtual task defined in Definition~\ref{def:ptf_normality_anchored} controls the conditional task structure encountered by a nominal query. 
We formalize how normality anchoring stabilizes the virtual-task semantics for
normal queries. Specifically, the following proposition shows that, under a
local-stability condition, the task conditional induced by a finite normality-aware
context agrees with its population-level counterpart with high probability.
% The proof is provided in Appendix~\ref{app:ProofofConditionalTaskConsistency}.
\begin{proposition}
[Consistency of the context-induced task conditional for normal queries]
\label{prop:nominal_conditional_task_consistency}
Under Definition~\ref{def:ptf_normality_anchored}, recall that $\widehat{y}_m(\vx;\mathcal{C})=g_m(\vx;\widehat{\psi}_{m,\mathcal{C}})$.
Define the
deterministic task conditionals induced by the context-estimated
and population-level parameters as
\begin{equation}
    \widehat{\pi}_{m,\mathcal{C}}(y\mid\vx)
    :=
    \1
    \left\{
        y
        =
        g_m(\vx;\widehat{\psi}_{m,\mathcal{C}})
    \right\},
    \qquad
    \pi_m^0(y\mid\vx)
    :=
    \1
    \left\{
        y
        =
        g_m(\vx;\psi_m^0)
    \right\}.
    \label{eq:context_and_population_task_conditionals}
\end{equation}
Suppose that the virtual-task conditional is locally stable at
$\psi_m^0$ under $P_0$. Specifically, let
\begin{equation}
    \omega_m(t)
    :=
    \sup_{\psi:\,d_m(\psi,\psi_m^0)\le t}
    \Pr_{\rvx\sim P_0}
    \left[
        g_m(\rvx;\psi)
        \neq
        g_m(\rvx;\psi_m^0)
    \right],
    \label{eq:conditional_task_stability}
\end{equation}
and assume that $\omega_m(t)\to0$ as $t\to0$.

Then, for an independent normal query
$\rvx_q\sim P_0$,
\begin{equation}
    \Pr_{\substack{
        \mathcal{C}\sim P_0^n\\
        \rvx_q\sim P_0
    }}
    \left[
        \widehat{\pi}_{m,\mathcal{C}}
        (\cdot\mid\rvx_q)
        =
        \pi_m^0(\cdot\mid\rvx_q)
    \right]
    =
    % &\qquad=
    \Pr_{\substack{
        \mathcal{C}\sim P_0^n\\
        \rvx_q\sim P_0
    }}
    \left[
        \widehat{y}_m(\rvx_q;\mathcal{C})
        =
        g_m(\rvx_q;\psi_m^0)
    \right]
    % &\qquad
    \ge
    1
    -
    \omega_m(\epsilon_{m,n})
    -
    \delta_{m,n}.
    \label{eq:nominal_conditional_task_consistency}
\end{equation}
\end{proposition}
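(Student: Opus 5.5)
The equality in \eqref{eq:nominal_conditional_task_consistency} is immediate from the definitions. Both $\widehat{\pi}_{m,\mathcal{C}}(\cdot\mid\vx)$ and $\pi_m^0(\cdot\mid\vx)$ are point masses, at $g_m(\vx;\widehat{\psi}_{m,\mathcal{C}})=\widehat{y}_m(\vx;\mathcal{C})$ and at $g_m(\vx;\psi_m^0)$ respectively. Two point masses coincide as functions of $y$ exactly when their atoms coincide, so the two events are identical.

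For the inequality, I would bound the complementary event $E:=\{\widehat{y}_m(\rvx_q;\mathcal{C})\neq g_m(\rvx_q;\psi_m^0)\}$. Let $G:=\{d_m(\widehat{\psi}_{m,\mathcal{C}},\psi_m^0)\le\epsilon_{m,n}\}$ be the good-context event, which depends only on $\mathcal{C}$. Then
\begin{equation*}
\Pr[E]\le \Pr[G^c]+\Pr[E\cap G],
\end{equation*}
and \eqref{eq:ptf_anchor_consistency} of Definition~\ref{def:ptf_normality_anchored} gives $\Pr[G^c]\le\delta_{m,n}$.

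For $\Pr[E\cap G]$ I would condition on $\mathcal{C}$ and use that $\rvx_q\sim P_0$ is independent of $\mathcal{C}$. By Fubini/Tonelli,
\begin{equation*}
\Pr[E\cap G]=\E_{\mathcal{C}}\!\left[\1\{G\}\,h(\widehat{\psi}_{m,\mathcal{C}})\right],
\qquad
h(\psi):=\Pr_{\rvx\sim P_0}\!\left[g_m(\rvx;\psi)\neq g_m(\rvx;\psi_m^0)\right].
\end{equation*}
On $G$ the parameter $\widehat{\psi}_{m,\mathcal{C}}$ lies in the ball of radius $\epsilon_{m,n}$ around $\psi_m^0$. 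So $h(\widehat{\psi}_{m,\mathcal{C}})\le\omega_m(\epsilon_{m,n})$ by the definition \eqref{eq:conditional_task_stability} of $\omega_m$ as a supremum over that ball. This gives $\Pr[E\cap G]\le\omega_m(\epsilon_{m,n})\Pr[G]\le\omega_m(\epsilon_{m,n})$. Combining the two bounds yields $\Pr[E^c]\ge 1-\omega_m(\epsilon_{m,n})-\delta_{m,n}$.

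The only delicate step is the conditioning. It needs measurability: $(\vx,\mathcal{C})\mapsto\1\{g_m(\vx;\widehat{\psi}_{m,\mathcal{C}})\neq g_m(\vx;\psi_m^0)\}$ should be jointly measurable so that Fubini applies. If $\omega_m$ were not measurable in $\psi$ one would instead use outer probabilities, but the pointwise supremum bound avoids needing measurability of $\omega_m$ itself. I would state joint measurability of $g_m$ and $A_m$ as a standing assumption. The assumption $\omega_m(t)\to0$ is not needed for the finite-$n$ inequality. It only ensures the right-hand side tends to $1$, since $\epsilon_{m,n},\delta_{m,n}\to0$. I would note this as a closing remark.
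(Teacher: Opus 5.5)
Your proof is correct and takes the same route as the paper. You split on the anchoring event $\{d_m(\widehat{\psi}_{m,\mathcal{C}},\psi_m^0)\le\epsilon_{m,n}\}$, use independence of $\rvx_q$ and the supremum defining $\omega_m$ to bound the query-wise disagreement on that event by $\omega_m(\epsilon_{m,n})$, bound the complement by $\delta_{m,n}$, and identify target disagreement with disagreement of the point-mass conditionals. Your $\Pr[E\cap G]$ bookkeeping is slightly cleaner than the paper's conditional probability $\Pr(\mathcal{D}_{m,n}\mid\mathcal{E}_{m,n})$ because it avoids conditioning on a possibly null event, and your measurability remark and your observation that $\omega_m(t)\to0$ matters only for the limit are both accurate.
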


\begin{proof}
Define the successful anchoring event
\begin{equation}
    \mathcal{E}_{m,n}
    :=
    \left\{
        d_m
        \left(
            \widehat{\psi}_{m,\mathcal{C}},
            \psi_m^0
        \right)
        \le
        \epsilon_{m,n}
    \right\}.
    \label{eq:successful_anchoring_event}
\end{equation}
By Definition~\ref{def:ptf_normality_anchored},
\begin{equation}
    \Pr_{\mathcal{C}\sim P_0^n}
    \left[
        \mathcal{E}_{m,n}^{c}
    \right]
    \le
    \delta_{m,n}.
    \label{eq:anchoring_event_failure}
\end{equation}
Consider any realized context $\mathcal{C}$ for which
$\mathcal{E}_{m,n}$ holds. Then
$d_m
    \left(
        \widehat{\psi}_{m,\mathcal{C}},
        \psi_m^0
    \right)
    \le
    \epsilon_{m,n}$.
By the definition of $\omega_m$ in
Equation~\ref{eq:conditional_task_stability},
\begin{equation}
    \Pr_{\rvx_q\sim P_0}
    \left[
        \widehat{y}_m(\rvx_q;\mathcal{C})
        \neq
        g_m(\rvx_q;\psi_m^0)
    \right]
    \le
    \omega_m(\epsilon_{m,n}).
    \label{eq:conditional_disagreement_under_anchoring}
\end{equation}
Let
\begin{equation}
    \mathcal{D}_{m,n}
    :=
    \left\{
        \widehat{y}_m(\rvx_q;\mathcal{C})
        \neq
        g_m(\rvx_q;\psi_m^0)
    \right\}
    \label{eq:nominal_target_disagreement_event}
\end{equation}
denote the event that the context-induced and population-level
task conditionals assign different virtual targets to the
normal query.

Equation~\ref{eq:conditional_disagreement_under_anchoring}
holds for every realized context satisfying
$\mathcal{E}_{m,n}$. Therefore, averaging the corresponding
query-wise disagreement probabilities over all such context
realizations gives
\begin{equation}
    \Pr
    \left(
        \mathcal{D}_{m,n}
        \mid
        \mathcal{E}_{m,n}
    \right)
    =
    \E_{\mathcal{C}\mid\mathcal{E}_{m,n}}
    \left[
        \Pr_{\rvx_q\sim P_0}
        \left[
            \widehat{y}_m(\rvx_q;\mathcal{C})
            \neq
            g_m(\rvx_q;\psi_m^0)
        \right]
    \right]
    \le
    \omega_m(\epsilon_{m,n}).
    \label{eq:disagreement_given_successful_anchoring}
\end{equation}
Here, the expectation is taken over context realizations conditioned on successful anchoring. 
Since each query-wise disagreement probability inside the expectation is bounded by $\omega_m(\epsilon_{m,n})$, their conditional average satisfies the same bound.

When $\mathcal{E}_{m,n}^{c}$ occurs, the estimated task
parameter is not guaranteed to lie within
$\epsilon_{m,n}$ of $\psi_m^0$. We therefore use the
worst-case bound
\begin{equation}
    \Pr
    \left(
        \mathcal{D}_{m,n}
        \mid
        \mathcal{E}_{m,n}^{c}
    \right)
    \le
    1.
    \label{eq:disagreement_given_failed_anchoring}
\end{equation}

Applying the law of total probability over the successful and
failed anchoring events yields
\begin{align}
    \Pr(\mathcal{D}_{m,n})
    &=
    \Pr
    \left(
        \mathcal{D}_{m,n}
        \mid
        \mathcal{E}_{m,n}
    \right)
    \Pr(\mathcal{E}_{m,n})
    \nonumber\\
    &\quad+
    \Pr
    \left(
        \mathcal{D}_{m,n}
        \mid
        \mathcal{E}_{m,n}^{c}
    \right)
    \Pr(\mathcal{E}_{m,n}^{c})
    \nonumber\\
    &\le
    \omega_m(\epsilon_{m,n})
    \Pr(\mathcal{E}_{m,n})
    +
    \Pr(\mathcal{E}_{m,n}^{c})
    \nonumber\\
    &\le
    \omega_m(\epsilon_{m,n})
    +
    \delta_{m,n},
    \label{eq:overall_conditional_structure_disagreement}
\end{align}
where the final inequality follows from
$\Pr(\mathcal{E}_{m,n})\le1$ and
Equation~\ref{eq:anchoring_event_failure}.

By the deterministic definitions in
Equation~\ref{eq:context_and_population_task_conditionals},
the target-disagreement event $\mathcal{D}_{m,n}$ is equivalent
to
$\left\{
        \widehat{\pi}_{m,\mathcal{C}}
        (\cdot\mid\rvx_q)
        \neq
        \pi_m^0(\cdot\mid\rvx_q)
    \right\}$.
Taking complements in
Equation~\ref{eq:overall_conditional_structure_disagreement}
therefore gives
\begin{equation}
    \Pr_{\substack{
        \mathcal{C}\sim P_0^n\\
        \rvx_q\sim P_0
    }}
    \left[
        \widehat{\pi}_{m,\mathcal{C}}
        (\cdot\mid\rvx_q)
        =
        \pi_m^0(\cdot\mid\rvx_q)
    \right]
    \ge
    1
    -
    \omega_m(\epsilon_{m,n})
    -
    \delta_{m,n},
\end{equation}
which proves
Equation~\ref{eq:nominal_conditional_task_consistency}.
% This proposition shows that normality anchoring stabilizes the virtual-task semantics: as the context-estimated parameter approaches its population counterpart, a fresh nominal query follows the same task conditional with high probability. Consequently, nominal query--target pairs remain aligned with the predictive structure induced by the virtually labeled nominal context.
% Taking complements proves Equation~\eqref{eq:nominal_conditional_task_consistency}.
% Moreover, Definition~\ref{def:ptf_normality_anchored} gives
% $\epsilon_{m,n}\to0$ and $\delta_{m,n}\to0$, while local
% stability gives
% $\omega_m(\epsilon_{m,n})\to0$. Thus, the lower bound in
% Equation~\eqref{eq:nominal_conditional_task_consistency}
% approaches one.
\end{proof}
Thus, normality anchoring stabilizes the virtual-task semantics for normal
queries: as the context-estimated task parameter approaches its population
counterpart, a fresh normal query follows the same task conditional with high
probability. Consequently, normal query--target pairs are likely to
remain aligned with the predictive structure induced by the virtually labeled
normal context, providing the structural basis for nominal predictive
coherence.

\subsection{Feature-Type and Data-Profile Inference}
\label{appendix:feature_profile_inference}

Before constructing profile-specific virtual tasks, we infer the type of each
observed attribute directly from the normality-aware context. Attribute-type inference is performed before preprocessing so that the original
cardinality structure of each attribute is preserved.

Let
\(
    \mathcal{C}=\{\vx_i\}_{i=1}^{n}
\)
denote the context with $d$ observed attributes. For attribute $j$, let
$n_j$ denote the number of non-missing observations and $u_j$ the number of unique values among them. 
% We additionally define the integer-likeness
% indicator
% \begin{equation}
%     I_j^{\mathrm{int}}
%     =
%     \mathbbm{1}
%     \left[
%         x_{ij}\approx\operatorname{round}(x_{ij}),
%         \quad
%         \forall i \text{ such that } x_{ij} \text{ is observed}
%     \right].
% \end{equation}
An attribute is regarded as constant if it contains no valid observations or
if $u_j\leq1$. Otherwise, we define an adaptive cardinality threshold
\begin{equation}
    t_j
    =
    \min
    \left(
        20,\,
        \max\left(2,\left\lceil\sqrt{n_j}\right\rceil\right)
    \right).
    \label{eq:feature_type_dynamic_threshold}
\end{equation}
Attribute $j$ is classified as categorical if
\begin{equation}
    \left(u_j < 10 \;\lor\; u_j \leq t_j\right)
    \;\land\;
    \left(I_j^{\mathrm{int}} = 1 \;\lor\; u_j \leq t_j\right)
    \;\land\;
    \frac{u_j}{n_j} \leq 0.5,
    \label{eq:categorical_feature_inference}
\end{equation}
where $I_j^{\mathrm{int}}$ is an indicator of whether the observed values of
attribute $j$ are integer-like.
All remaining non-constant attributes are classified as numerical.
This hybrid criterion combines cardinality, integer-likeness, and the
proportion of unique values to distinguish low-cardinality categorical
attributes from numerical ones.

\paragraph{Attribute-profile assignment.}
Let
$d_{\mathrm{num}}$, $d_{\mathrm{cat}}$, and $d_{\mathrm{const}}$
denote the numbers of attributes inferred as numerical, categorical, and
constant, respectively. Since constant attributes do not provide a meaningful
type preference, we exclude them when computing the relative composition of
numerical and categorical attributes. Specifically, letting
% \begin{equation}
$d_{\mathrm{use}}=d_{\mathrm{num}}+d_{\mathrm{cat}},$
% \end{equation}
we define
\begin{equation}
    r_{\mathrm{num}}
    =
    \frac{d_{\mathrm{num}}}{d_{\mathrm{use}}},
    \qquad
    r_{\mathrm{cat}}
    =
    \frac{d_{\mathrm{cat}}}{d_{\mathrm{use}}},
    \label{eq:feature_profile_ratios}
\end{equation}
whenever $d_{\mathrm{use}}>0$.

The feature profile of each dataset is then determined as
\begin{equation}
\operatorname{Profile}(\mathcal{C})
=
\begin{cases}
\texttt{numerical-only},
    & d_{\mathrm{cat}}=0,\; d_{\mathrm{num}}>0,
    \\[1mm]
\texttt{categorical-only},
    & d_{\mathrm{num}}=0,\; d_{\mathrm{cat}}>0,
    \\[1mm]
\texttt{mixed-numerical-dominant},
    & d_{\mathrm{num}},d_{\mathrm{cat}}>0
      \ \text{and}\ 
      r_{\mathrm{num}}\geq\tau_{\mathrm{dom}},
    \\[1mm]
\texttt{mixed-categorical-dominant},
    & d_{\mathrm{num}},d_{\mathrm{cat}}>0
      \ \text{and}\ 
      r_{\mathrm{cat}}\geq\tau_{\mathrm{dom}},
    \\[1mm]
\texttt{mixed},
    & \text{otherwise}.
\end{cases}
\label{eq:data_profile_assignment}
\end{equation}
We use the dominance threshold
\(
    \tau_{\mathrm{dom}}=0.80
\)
in all experiments. The inferred profile is used only to determine the appropriate instantiations of virtual-task templates for the observed attribute composition.
% Specifically, we use automatic profile-based routing, while the subsequent task construction and selection are performed within the corresponding active task families. Constant attributes do not contribute to the numerical--categorical dominance ratio.

\subsection{Profile-Specific Normality-Anchored Virtual-Task Instantiations}
\label{appendix:virtual_task_instantiations}

We describe the concrete instantiations of the five normality-anchored virtual-task templates
introduced in Section~\ref{ssec:TaskFamilesandSelection}. Let
$\mathcal{J}_{\mathrm{num}}$ and $\mathcal{J}_{\mathrm{cat}}$ denote the sets
of numerical and categorical attributes inferred from the context $\mathcal{C}$, respectively. Each task construction produces a discrete virtual
target so that the resulting labeled context can be directly processed by a pretrained TFM backbone. While the underlying templates are shared
across attribute profiles, the concrete operator is adapted to the profile of the observed attributes. For notation, we use $\operatorname{QBin}_{K}(z;\mathcal{C})$ to denote the assignment of a scalar statistic $z$ to one of $K$ classes according to empirical quantile boundaries estimated from $\mathcal{C}$.

\paragraph{\textcircled{1} Single attribute: Reg2Class.}
This template probes attribute-wise predictive dependencies in the normal distribution by treating an individual attribute as the prediction target and the remaining attributes as predictors. We instantiate this template using \emph{Reg2Class}, a regression-to-classification construction motivated by the synthetic classification-task generation paradigm commonly used in general-purpose TFMs~\citep{qu2026tabiclv2betterfasterscalable, hollmann2023tabpfntransformersolvessmall}. Rather than directly predicting a continuous attribute, Reg2Class discretizes its values using context-derived boundaries to form a classification target.

For a numerical target attribute $j$, we construct
\begin{equation}
    \widehat{y}_{m}(\vx;\mathcal{C})
    =
    \operatorname{QBin}_{K}
    \left(
        x_j;\mathcal{C}
    \right),
    \label{eq:reg2class_num}
\end{equation}
where the class boundaries are empirical quantiles of attribute $j$ in the context. Candidate numerical attributes are prioritized according to their variation.
% , and attributes that are constant or fail to induce at least two
% classes are discarded.

For a categorical target attribute, the observed categories themselves define
the virtual classes:
\begin{equation}
    \widehat{y}_{m}(\vx;\mathcal{C})
    =
    \operatorname{CatIndex}_{\mathcal{C}}(x_j),
    \label{eq:reg2class_cat}
\end{equation}
where $\operatorname{CatIndex}_{\mathcal{C}}$ maps categories observed in the
context to discrete class indices. Candidate categorical attributes are
screened according to their cardinality and empirical category structure so
that degenerate or excessively sparse classification tasks are avoided.

Accordingly, Reg2Class is instantiated using numerical attributes for the
\emph{numerical-only} profile and categorical attributes for the
\emph{categorical-only} profile. For the \emph{mixed} profile, both numerical
and categorical candidates are considered, with the corresponding
numerical-binning or categorical-mapping rule applied according to the
selected target attribute. For mixed profiles dominated by one attribute type, the dominant attribute group is preferentially used to construct the
axis-aligned task.

\paragraph{\textcircled{2} Localized subspace direction: Masked Projection.}
The second task template probes localized predictive dependencies of the normal distribution that are expressed within a subset of attributes.
The underlying motivation is that an anomalous observation may violate the predictive structure of a particular subspace while remaining compatible with the normal regularities expressed by the remaining attributes. We therefore construct a scalar projection from a randomly sampled attribute subset and discretize the resulting statistic into virtual classes.

For numerical attributes, Masked Random Subspace Projection (MRSP) samples
a subset
$
    \mathcal{S}_m\subseteq\mathcal{J}_{\mathrm{num}}
$
and constructs
\begin{equation}
    z_m(\vx)
    =
    \vw_m^\top
    T_{\mathcal{C}}
    \left(
        \vx_{\mathcal{S}_m}
    \right),
    \qquad
    \widehat{y}_{m}(\vx;\mathcal{C})
    =
    \operatorname{QBin}_{K}
    \left(
        z_m(\vx);\mathcal{C}
    \right),
    \label{eq:mrsp_rule}
\end{equation}
where $T_{\mathcal{C}}$ denotes the context-fitted numerical transformation
and $\vw_m$ is a randomly sampled projection vector that is fixed for
the instantiated task. 
% Thus, different sampled subspaces provide localized views of the numerical dependency structure.

For categorical attributes, we instead use Masked Categorical Random
Projection (MCRP). Given a sampled categorical subset, each observed
attribute--category token is assigned a random weight modulated by its empirical rarity in the context. 
% \dy{For category value $v$ of attribute $j$, we use a smoothed rarity statistic of the form}
% \begin{equation}
%     r_j(v)
%     =
%     \log
%     \frac{
%         n+\alpha(K_j+1)
%     }{
%         N_j(v)+\alpha
%     },
% \end{equation}
% and form
% \begin{equation}
%     z_m(\mathbf{x})
%     =
%     \frac{1}{\sqrt{|\mathcal{S}_m|}}
%     \sum_{j\in\mathcal{S}_m}
%     r_j(x_j)\,\xi_{j,x_j},
%     \label{eq:masked_cat_projection}
% \end{equation}
% where $\xi_{j,v}$ is a task-specific random weight. 
For category value $v$ of attribute $j$, we define the smoothed rarity statistic
\begin{equation}
    r_j(v)
    =
    \log
    \frac{
        n+\alpha(K_j+1)
    }{
        N_j(v)+\alpha
    },
\end{equation}
where $n$ is the number of context samples, $K_j$ is the number of categories observed for attribute $j$, $N_j(v)$ denotes the number of context samples with category value $v$, and $\alpha>0$ is a smoothing constant.
We then construct
\begin{equation}
    z_m(\vx)
    =
    \frac{1}{\sqrt{|\mathcal{S}_m|}}
    \sum_{j\in\mathcal{S}_m}
    r_j(x_j)\,\xi_{j,x_j},
    \label{eq:masked_cat_projection}
\end{equation}
where $\mathcal{S}_m$ is the sampled categorical attribute subset and
$\xi_{j,v}$ is a task-specific random weight.
%associated with category $v$ of attribute $j$.
The resulting scalar score is again partitioned using context-derived quantiles $\operatorname{QBin}_{K}$.

For mixed attributes, Masked Mixed Random Projection (MMRP) first converts numerical attributes into context-dependent quantile tokens while retaining categorical values as categorical tokens. The resulting heterogeneous token view is then processed using the same rarity-weighted random projection mechanism for categorical attributes.

Therefore, the concrete masked operator is selected according to the inferred profile.
Numerical-only data use MRSP, categorical-only data use MCRP, and balanced mixed data use MMRP. For
mixed-numerical-dominant data, we retain both the numerical MRSP and the
MMRP; analogously, mixed-categorical-dominant data use both the
MCRP and MMRP. 
% In the dominant mixed profiles, the sampled subset range is enlarged to cover from approximately $30\%$ to $100\%$ of the corresponding dominant feature group.

\paragraph{\textcircled{3} Global direction: Global Projection.}
This template probes global predictive dependencies of the normal distribution that are expressed across a broader set of attributes. The underlying motivation is that some anomalous observations may violate distributed multivariate relationships that are not confined to a localized subspace. Accordingly, this family captures global directional predictive structure by constructing projections over the full feature view. % rather than a sampled attribute subset.

For numerical-only data, Global Random Projection uses all available
numerical attributes:
\begin{equation}
    z_m(\vx)
    =
    \vw_m^\top
    T_{\mathcal{C}}(\vx_{\mathcal{J}_{\mathrm{num}}}),
    \qquad
    \widehat{y}_{m}(\vx;\mathcal{C})
    =
    \operatorname{QBin}_{K}
    \left(
        z_m(\vx);\mathcal{C}
    \right).
\end{equation}
For categorical-only data, Global Categorical Random Projection applies the
rarity-weighted categorical projection to the complete categorical feature
set rather than a sampled subset.

For mixed data, Mixed Random Projection uses the complete mixed token view:
numerical attributes are converted to context-derived quantile tokens and
categorical attributes retain their categorical identities. A
rarity-weighted random projection over all resulting tokens then defines the
scalar statistic used for virtual-class construction.

In the current implementation, numerical-only and categorical-only profiles
use their corresponding type-specific global projections, whereas the mixed
and both mixed-dominant profiles use the Mixed Random Projection. 
% In
% particular, the dominant mixed profiles do not additionally include a
% type-specific non-masked global projection; their dominant-type structure is
% instead represented through the corresponding masked family together with
% the global mixed view.

\paragraph{\textcircled{4} Prototype-based multimodal organization: Clustering.}
This template probes multimodal predictive structure of the normal distribution through context-fitted prototypes. The underlying motivation is that normal data may consist of multiple modes with distinct attribute relationships, such that an anomalous sample can become incompatible with the prototype structure induced by the context. Unlike projection-based tasks, clustering represents this structure through multiple nominal prototypes rather than a single projected ordering.

For numerical-only data, we apply $K$-means to the context-transformed
numerical attributes and use the resulting cluster index as the virtual
target:
\begin{equation}
    \widehat{y}_{m}(\vx;\mathcal{C})
    =
    \arg\min_{k\in[K]}
    \left\|
        T_{\mathcal{C}}(\vx)
        -
        \widehat{\vmu}_{k,\mathcal{C}}
    \right\|_2^2.
    \label{eq:kmeans_virtual_rule}
\end{equation}

% For categorical-only data, we replace Euclidean clustering with categorical
% $K$-modes. Each prototype consists of the feature-wise modal categories of
% its assigned context samples, and assignment is based on categorical mismatch:
% \begin{equation}
%     d_{\mathrm{cat}}
%     (\mathbf{x},\mathbf{p}_k)
%     =
%     \sum_{j\in\mathcal{J}_{\mathrm{cat}}}
%     \mathbbm{1}[x_j\neq p_{k,j}].
% \end{equation}

For categorical-only data, we replace Euclidean clustering with categorical
$K$-modes. Let $\widehat{\vp}_{k,\mathcal{C}}$ denote the
attribute-wise modal-category prototype estimated from the context samples
assigned to cluster $k$. The categorical discrepancy is the Hamming distance
\begin{equation}
    d_{\mathrm{cat}}
    (\vx,\vp_k)
    =
    \sum_{j\in\mathcal{J}_{\mathrm{cat}}}
    \1[x_j\neq p_{k,j}].
    \label{eq:kmodes_categorical_distance}
\end{equation}
The categorical virtual target is then the index of the nearest modal
prototype:
\begin{equation}
    \widehat{y}_{m}(\vx;\mathcal{C})
    =
    \arg\min_{k\in[K]}
    d_{\mathrm{cat}}
    \left(
        \vx_{\mathcal{J}_{\mathrm{cat}}},
        \widehat{\vp}_{k,\mathcal{C}}
    \right).
    \label{eq:kmodes_virtual_rule}
\end{equation}
Thus, two samples receive the same virtual class when they are assigned to the
same context-fitted categorical prototype.
%, rather than when they share any single category value. 
The prototypes are refined by alternating nearest-prototype assignments and attribute-wise modal updates. 
% If small-cluster merging
% is enabled, a cluster whose context support falls below the prescribed minimum
% size is mapped to its nearest sufficiently supported prototype; this mapping
% is then applied consistently to both context and query assignments.

For mixed profiles, we use a $K$-prototypes-style construction on a
context-fitted mixed representation.
First, numerical attributes are converted to context-derived quantile tokens, as in mixed random projection.
% Specifically, numerical attributes are first transformed using statistics estimated from $\mathcal{C}$ and then discretized into context quantile bins. 
Let $\widetilde{\vx}_{\mathrm{num},\mathcal{C}}$
denote the resulting numerical bin-code vector, while
$\vx_{\mathcal{J}_{\mathrm{cat}}}$ retains the categorical values. Each mixed
prototype contains the coordinate-wise median numerical bin code and the
attribute-wise modal categorical value of its assigned context samples. We
define
\begin{equation}
    d_{\mathrm{num}}
    (\vx,\vp_k;\mathcal{C})
    =
    \frac{1}{|\mathcal{J}_{\mathrm{num}}|}
    \sum_{j\in\mathcal{J}_{\mathrm{num}}}
    \left(
        \widetilde{x}_{j,\mathcal{C}}-p_{k,j}
    \right)^2
\end{equation}
and the normalized categorical mismatch
\begin{equation}
    \overline{d}_{\mathrm{cat}}
    (\vx,\vp_k)
    =
    \frac{1}{|\mathcal{J}_{\mathrm{cat}}|}
    \sum_{j\in\mathcal{J}_{\mathrm{cat}}}
    \1[x_j\neq p_{k,j}].
\end{equation}
The mixed prototype distance and corresponding virtual target are
\begin{equation}
    d_{\mathrm{mix}}
    (\vx,\vp_k;\mathcal{C})
    =
    d_{\mathrm{num}}
    (\vx,\vp_k;\mathcal{C})
    +
    % \gamma\,
    \overline{d}_{\mathrm{cat}}
    (\vx,\vp_k),
    \label{eq:kprototypes_mixed_distance}
\end{equation}
\begin{equation}
    \widehat{y}_{m}(\vx;\mathcal{C})
    =
    \arg\min_{k\in[K]}
    d_{\mathrm{mix}}
    \left(
        \vx,
        \widehat{\vp}_{k,\mathcal{C}};\mathcal{C}
    \right).
    \label{eq:kprototypes_virtual_rule}
\end{equation}
% where $\gamma$ controls the relative contribution of categorical mismatch
% (the implementation uses $\gamma=1$ unless configured otherwise). 
As with categorical $K$-modes, the prototypes are obtained by alternating assignment and prototype-update steps. 
%and optional small-cluster merging is applied after fitting.

Accordingly, the numerical-only, categorical-only, and mixed profiles use
$K$-means, categorical $K$-modes, and the context-discretized mixed
$K$-prototypes construction, respectively. Both mixed-dominant profiles also
use the mixed $K$-prototypes construction.

% \dy{For mixed profiles, we use a $K$-prototypes-style construction that combines the numerical and categorical distances. Each prototype contains numerical location statistics and categorical modal values, and the mixed distance is
% of the form
% \dy{\begin{equation}
%     d_{\mathrm{mix}}(\mathbf{x},\mathbf{p}_k)
%     =
%     d_{\mathrm{num}}
%     (\mathbf{x}_{\mathrm{num}},\mathbf{p}_{k,\mathrm{num}})
%     +\,
%     d_{\mathrm{cat}}
%     (\mathbf{x}_{\mathrm{cat}},\mathbf{p}_{k,\mathrm{cat}}),
% \end{equation}}
% with the nearest prototype defining the virtual class.}

% Accordingly, the numerical-only, categorical-only, and mixed profiles use
% $K$-means, categorical $K$-modes, and mixed $K$-prototypes, respectively.
% Both mixed-dominant profiles also use the mixed $K$-prototypes construction.
% rather than a dominant-type-specific clustering rule, allowing the task to
% retain information from both feature types.

\paragraph{Distributional extremity: Radial and Rarity-based Constructions.}
The last fifth template probes predictive structures associated with
context-relative distributional extremity. The underlying motivation is that the normality-aware context induces not only relational structure among attributes, but also a characteristic organization of samples according to their relative position within the nominal support. An anomalous sample exhibiting a scale shift or other support-relative deviation may therefore be assigned an extremity level that is incompatible with the predictive structure induced by the context.

For numerical-only data, Radial Bin computes the radial magnitude of the
context-transformed numerical vector,
\begin{equation}
    \rho(\vx)
    =
    \left\|
        T_{\mathcal{C}}(\vx)
    \right\|_2,
\end{equation}
and assigns the corresponding virtual target using context-derived quantile
bins:
$\widehat{y}_{m}(\vx;\mathcal{C})
    =
    \operatorname{QBin}_{K}
    \left(
        \rho(\vx);\mathcal{C}
    \right)$.
%This construction partitions the context according to relative radial position rather than a particular coordinate or projection direction.

For categorical-only data, Categorical Rarity Bin estimates the smoothed
empirical occurrence probability of each category from the normality-aware
context. Specifically, for category value $v$ of attribute $j$, we define
\begin{equation}
    \widehat{p}_{j,\mathcal{C}}(v)
    =
    \frac{
        N_j(v)+\alpha
    }{
        n+\alpha(K_j+1)
    },
\end{equation}
where $n$ is the number of context samples, $K_j$ is the number of categories
observed for attribute $j$, $N_j(v)$ denotes the number of context samples
taking value $v$ at attribute $j$, and $\alpha>0$ is a smoothing constant.
A sample-level rarity statistic is then constructed as
\begin{equation}
    R(\vx)
    =
    -
    \frac{1}{|\mathcal{J}_{\mathrm{cat}}|}
    \sum_{j\in\mathcal{J}_{\mathrm{cat}}}
    \log
    \widehat{p}_{j,\mathcal{C}}(x_j),
    \label{eq:categorical_rarity_score}
\end{equation}
where $x_j$ is the category value of sample $\vx$ at attribute $j$.
Thus, categories that occur less frequently in the normality-aware context contribute
more strongly to $R(\vx)$. The resulting rarity statistic is
subsequently discretized using context-derived empirical quantiles to form
the virtual classification target.

For mixed profiles, Mixed Categorical Rarity Bin first converts numerical
attributes into context-derived quantile tokens and combines them with the
original categorical tokens. The rarity score is then evaluated over this
joint token representation and discretized into virtual classes. This
construction is used for the balanced mixed profile as well as both
mixed-dominant profiles.

\paragraph{Normality-anchoring characteristics of virtual tasks.}
Despite their different targeting predictive structures, all of the above constructions share
a common source of task anchoring. Their virtual-label semantics depend on
quantities estimated from the normality-aware context, including empirical
quantile boundaries, category supports and frequencies, numerical
transformations, and fitted prototypes. Random components such as sampled
attribute subsets or projection weights determine which view of the attributes
is probed, but are sampled once for the instantiation and kept fixed across
context and query labeling. 

Consequently, each instantiated task can be expressed in the form introduced in Equation~\ref{eq:ptf_task_constructor}. Under standard consistency of the corresponding empirical statistics, these context-dependent parameters converge to population quantities determined by $P_0$. Together with their nontrivial influence on the induced virtual-label semantics, the instantiated task families therefore satisfy the normality-anchoring principle of Definition~\ref{def:ptf_normality_anchored}.

% The context-dependent quantities subsequently determine how that view is partitioned into virtual classes.
% Consequently, each instantiated task can be expressed in the form
% as introduced in Equation.~\ref{eq:ptf_task_constructor}. Under the standard
% consistency of the corresponding empirical statistics, these
% context-dependent parameters converge to population quantities determined by
% $P_0$. 
% %Moreover, changes in these population quantities can alter the induced virtual-class assignments. 
% The instantiated task families therefore satisfy
% the normality-anchoring principle of
% Definition~\ref{def:ptf_normality_anchored}.

\subsection{Held-out Evaluation Sample Generation}
\label{appendix:pseudo_ood_generation}

For each dataset and random seed, we construct a common registry of held-out splits that is shared across all candidate virtual tasks. 
This ensures that candidate tasks are evaluated against the same normal observations.
Let $\mathcal{S}:=[S]=\{1,\ldots,S\}$ denote the set of held-out splits.
% rather than candidate-specific validation subsets. 
For split $s\in\mathcal{S}$, let $\mathcal{H}_s$ denote the nominal held-out subset,
$\widetilde{\mathcal{H}}^{(r)}_s$ the set of structural-violation samples
generated by applying the structure-disrupting operator $r$, and
$\mathcal{C}_{-s}$ the complementary train-side context. We set $S=3$ as the default throughout our framework.
\paragraph{Nominal held-out subsets.}
These sets are constructed from disjoint subsets of the original normality-aware context $\mathcal{C}$. Given a hold-out ratio $\rho_H$, the number of held-out samples per
split is
% \begin{equation}
%     n_{\mathrm{H}}
%     =
%     \min
%     \left\{
%         n_{\max},
%         \left\lceil \rho n \right\rceil,
%         n-1
%     \right\},
%     \label{eq:heldout_split_size}
% \end{equation}
\begin{equation}
    n_{\mathrm{H}}
    =
    \min
    \left\{
        n_{\max},
        \left\lceil \rho_H n \right\rceil,
        \left\lfloor \frac{n}{S} \right\rfloor,
        n-1
    \right\},
    \label{eq:heldout_split_size}
\end{equation}
where $n$ denotes the context size, $S$ the number of held-out splits, and $n_{\max}$ the maximum number of held-out samples per split. In our experiments, we set $n_{\max}=2048$, while $\rho_H$ is selected through the backbone-specific configuration calibration described in Appendix~\ref{Appendix:Configuration Search}.
% We set $n_{\max}=2048$, and consider
% $\rho\in\{0.10,0.15,0.20\}$ in our search configuration.

% \paragraph{Context-calibrated structural-violation probes.}
% For each held-out split $s$, we generate probe set $\{\widetilde{\mathcal{H}}^{(r)}_s|r\in \mathcal{R}_{\mathcal{C}_{-s}} \}$ whose total size matches the size of $\mathcal{H}_s$. The size budget is distributed as evenly as possible
% across the active perturbation operators, 
% % Let $\mathcal{R}_s$ denote the set of active operators for split $s$. 
% approximately $|\mathcal{H}_s|/|\mathcal{R}_{\mathcal{C}_{-s}}|$ for each
% $r\in\mathcal{R}_{\mathcal{C}_{-s}}$ perturbed samples are generated, with any
% remainder assigned deterministically across the operators.
\paragraph{Context-calibrated structural-violation held-out samples.}
For each held-out split $s$, we generate operator-specific probe sets
$\{\widetilde{\mathcal{H}}^{(r)}_s
: r\in\mathcal{R}_{\mathcal{C}_{-s}}\}$ whose combined sample budget equals
$|\mathcal{H}_s|$. The budget is distributed as evenly as possible across the
active perturbation operators, such that each operator generates approximately
$|\mathcal{H}_s|/|\mathcal{R}_{\mathcal{C}_{-s}}|$ samples.
% with any remainder assigned deterministically across the operators.

The active operators in $\mathcal{R}_{\mathcal{C}_{-s}}$ are column shuffling, subset replacement, and scaled jitter. Each operator is constructed using the complementary train-side context $\mathcal{C}_{-s}$. For every perturbed sample $\widetilde{\vx}$, the virtual target is computed using the same task instantiated from the full context $\mathcal{C}$. The perturbation transformation $T_r$ associated with each operator $r$ is
defined as follows.
% \begin{equation}
%     \widetilde{y}_{m}
%     =
%     \widehat{y}_m
%     \left(
%         \widetilde{\mathbf{x}};\mathcal{C}
%     \right)
%     =
%     g_m
%     \left(
%         \widetilde{\mathbf{x}};
%         \widehat{\psi}_{m,\mathcal{C}}
%     \right).
%     \label{eq:pseudo_ood_virtual_target}
% \end{equation}
% Thus, the pseudo-OOD target is neither inherited from an original held-out
% sample nor copied from any donor sample used in the perturbation. Its
% predictive compatibility is evaluated under the same held-out virtual task as
% \begin{equation}
%     \widehat{\kappa}_{m,s}
%     \left(
%         \widetilde{\mathbf{x}}
%     \right)
%     =
%     q_{\theta^\star}
%     \left(
%         \widetilde{y}_{m}
%         \mid
%         \widetilde{\mathbf{x}},
%         \mathcal{D}^{m\mid\mathcal{C}}_{\mathcal{C}_{-s}}
%     \right).
%     \label{eq:pseudo_ood_compatibility}
% \end{equation}
% Only perturbed samples receiving a valid virtual class under the instantiated
% task are retained for reliability evaluation.

% \paragraph{Column shuffling.}
Column shuffling constructs synthetic rows directly from $\mathcal{C}_{-s}$. For each generated row $i$ and attribute $j$, a donor row index
$\pi_{ij}$ is independently sampled from the rows of $\mathcal{C}_{-s}$ and
\begin{equation}
    \widetilde{x}_{ij}^{\mathrm{shuffle}}
    =
    x_{\pi_{ij},j}.
    \label{eq:pseudo_ood_column_shuffle}
\end{equation}
Because the donor row is independently sampled for each attribute, the
transformation approximately preserves the empirical marginal distribution of
each attribute while disrupting cross-attribute dependencies.
% \paragraph{Subset replacement.}
% Subset replacement begins with a sample drawn from the held-out subset and replaces randomly a subset of its attributes using values independently drawn from $\mathcal{C}_{-s}$. For the sampled sample $\mathbf{x}_h$, and radomly selected attribute subset $\mathcal{A}$
% \begin{equation}
%     \widetilde{x}_{h}^{\mathrm{replace}}
%     =
%     \begin{cases}
%         x_{\nu_j,j},
%         & j\in\mathcal{A},\\
%         x_{h,j},
%         & j\notin\mathcal{A},
%     \end{cases}
%     \label{eq:pseudo_ood_subset_replacement}
% \end{equation}
% \dy{where $\nu_j$ denotes an independently sampled donor row from $\mathcal{C}_{-s}$.} for each $\mathbf{x}_h$, the ratio for selecting as $\mathcal{A}$ is $0.3$ of total attributes and each attribute is replaced independently with probability $0.3$,

Subset replacement starts from a sample $\vx_h$ drawn from the nominal
held-out subset and randomly replaces a subset of its attributes with donor
values from $\mathcal{C}_{-s}$.
Specifically, let $\mathcal{A}$ denote the subset of attributes selected
independently with replacement probability $p_{\mathrm{rep}}=0.3$. For each
$j\in\mathcal{A}$, a donor row index $\nu_j$ is independently sampled from the rows of
$\mathcal{C}_{-s}$, and the perturbed sample is defined attribute-wise as
\begin{equation}
    \widetilde{x}_{h,j}^{\mathrm{replace}}
    =
    \begin{cases}
        x_{\nu_j,j},
        & j\in\mathcal{A},\\
        x_{h,j},
        & j\notin\mathcal{A}.
    \end{cases}
    \label{eq:pseudo_ood_subset_replacement}
\end{equation}
Thus, approximately $30\%$ of the attributes are replaced on average, with
donor values sampled independently across the selected attributes.

% Under the fixed-variation configuration, each attribute is independently
% replaced with probability $0.30$. Under the full-variation configuration, a
% sample-specific replacement rate is first drawn uniformly from
% $[0.10,1.00]$, and the corresponding number of attributes is then replaced.
% \paragraph{Scaled jitter.}
% Scaled jitter perturbs numerical attributes of a held-out sample $\mathbf{x}_h$ drawn from the nominal
% held-out subset using feature-wise noise scaled by the dispersion estimated from $\mathcal{C}_{-s}$:
% \begin{equation}
%     \widetilde{x}_{j}^{\mathrm{jitter}}
%     =
%     x_j
%     +
%     \lambda_j
%     \widehat{\sigma}_{j,\mathcal{C}_{-s}}
%     \epsilon_j,
%     \qquad
%     \epsilon_j\sim\mathcal{N}(0,1),
%     \label{eq:pseudo_ood_scaled_jitter}
% \end{equation}
% where $\widehat{\sigma}_{j,\mathcal{C}_{-s}}$ is the interquartile range of attribute $j$ estimated from $\mathcal{C}_{-s}$ and we set weight $\lambda_j=0.5$. If the interquartile range is degenerate, the empirical standard deviation is used as a fallback. Categorical attributes are left unchanged by this operator.
Scaled jitter perturbs the numerical attributes of a sample
$\vx_h$ drawn from the nominal held-out subset by adding feature-wise
Gaussian noise scaled according to the dispersion estimated from the
complementary train-side context $\mathcal{C}_{-s}$. Specifically, for each
numerical attribute $j$,
\begin{equation}
    \widetilde{x}_{h,j}^{\mathrm{jitter}}
    =
    x_{h,j}
    +
    \lambda_j
    \widehat{d}_{j,\mathcal{C}_{-s}}
    \epsilon_j,
    \qquad
    \epsilon_j\sim\mathcal{N}(0,1),
    \label{eq:pseudo_ood_scaled_jitter}
\end{equation}
where $\widehat{d}_{j,\mathcal{C}_{-s}}$ denotes the interquartile range
of attribute $j$ estimated from $\mathcal{C}_{-s}$, and we set $\lambda_j=0.5$. 
%If the interquartile range is degenerate, we use the empirical standard deviation as a fallback.
Scaled jitter is omitted when no numerical attribute is available.

\paragraph{Distributional validity of the context-calibrated structural-violation.}
The preceding perturbations are used to assess the selective task coverage defined by C2 in the main paper. Rather than attempting to approximate the unknown
test-time anomaly distribution, they construct controlled alternatives to the
normal distribution by disrupting complementary forms of statistical
regularity.

To formalize their distributional effects, consider the population
idealization in which the train-side context and held-out samples are independently drawn from $P_0$. 
% Let $\rvx=(X_1,\ldots,X_d)\sim P_0$, let $P_{0,j}$ denote the marginal
% distribution of $X_j$, 
Let
$\rvx=(\ervx_1,\ldots,\ervx_d)\sim P_0$, let $P_{0,j}$ denote the marginal
distribution of $\ervx_j$,
and let $P_{0,\mathcal{A}}$ denote the joint marginal distribution over an attribute subset $\mathcal{A}\subseteq[d]$. We define the total correlation of $\rvx_{\mathcal{A}}$ as
\begin{equation}
    \operatorname{TC}
    \left(
        \rvx_{\mathcal{A}}
    \right)
    :=
    D_{\mathrm{KL}}
    \left(
        P_{0,\mathcal{A}}
        \,\middle\|\,
        \prod_{j\in\mathcal{A}}P_{0,j}
    \right).
    \label{eq:total_correlation}
\end{equation}

For split $s$, let $\widehat{P}_{j,s}$ denote the empirical marginal
distribution of attribute $j$ in $\mathcal{C}_{-s}$. Under the i.i.d.\
assumption, $\widehat{P}_{j,s}$ converges to $P_{0,j}$ as the train-side
context grows. 
% The following proposition describes both the
% context-conditional probe distributions and their population targets.
% The following proposition characterizes their finite-context probe distributions and the corresponding distributions in the population limit.

The following proposition characterizes the distributions induced by these probe operators and their corresponding structural effects.
\begin{proposition}[Distributional effects of the context-calibrated probe operators]
\label{prop:pseudo_ood_structural_effect}
Assume that the held-out base sample is independent of
$\mathcal{C}_{-s}$ and that all donor indices are sampled independently.
Whenever the displayed KL divergences are well defined, the probe operators
have the following effects.

\begin{enumerate}
    \item \textbf{Column shuffling.}
    Conditional on $\mathcal{C}_{-s}$, column shuffling induces
    \begin{equation}
        \widehat{Q}^{\mathrm{shuffle}}_s
        =
        \prod_{j=1}^{d}
        \widehat{P}_{j,s}.
        \label{eq:shuffle_empirical_distribution}
    \end{equation}
    In the population limit, this becomes
    \begin{equation}
        Q^{\mathrm{shuffle}}
        =
        \prod_{j=1}^{d}P_{0,j},
        \label{eq:shuffle_population_distribution}
    \end{equation}
    for which
    \begin{equation}
        D_{\mathrm{KL}}
        \left(
            P_0
            \,\middle\|\,
            Q^{\mathrm{shuffle}}
        \right)
        =
        \operatorname{TC}(\rvx).
        \label{eq:shuffle_total_correlation}
    \end{equation}

    \item \textbf{Subset replacement.}
    Conditional on a realized replacement subset
    $\mathcal{A}\subseteq[d]$, subset replacement induces
    \begin{equation}
        \widehat{Q}^{\mathrm{replace}}_{\mathcal{A},s}
        =
        P_{0,\mathcal{A}^{c}}
        \prod_{j\in\mathcal{A}}
        \widehat{P}_{j,s}.
        \label{eq:replacement_empirical_distribution}
    \end{equation}
    In the population limit, this becomes
    \begin{equation}
        Q^{\mathrm{replace}}_{\mathcal{A}}
        =
        P_{0,\mathcal{A}^{c}}
        \prod_{j\in\mathcal{A}}P_{0,j},
        \label{eq:replacement_population_distribution}
    \end{equation}
    and
    \begin{equation}
    \begin{aligned}
        D_{\mathrm{KL}}
        \left(
            P_0
            \,\middle\|\,
            Q^{\mathrm{replace}}_{\mathcal{A}}
        \right)
        &=
        I_{P_0}
        \left(
            \rvx_{\mathcal{A}};
            \rvx_{\mathcal{A}^{c}}
        \right)
        \\
        &\quad+
        \operatorname{TC}
        \left(
            \rvx_{\mathcal{A}}
        \right).
    \end{aligned}
    \label{eq:replacement_dependency_divergence}
    \end{equation}
    When $\mathcal{A}=\varnothing$, the operator leaves the distribution
    unchanged and both terms in
    Equation~\ref{eq:replacement_dependency_divergence} are zero.

    \item \textbf{Scaled jitter.}
    Let $\mathcal{J}_{\mathrm{num}}$ denote the set of numerical attributes
    and $\mathcal{J}_{\mathrm{num}}^{c}$ its complement. Conditional on
    $\mathcal{C}_{-s}$ and the feature-wise jitter scales, scaled jitter
    satisfies
    \begin{equation}
        \widetilde{\rvx}^{\mathrm{jitter}}_{\mathcal{J}_{\mathrm{num}}}
        =
        \rvx_{\mathcal{J}_{\mathrm{num}}}
        +
        \boldsymbol{\eta}_s,
        \qquad
        \widetilde{\rvx}^{\mathrm{jitter}}_{\mathcal{J}_{\mathrm{num}}^{c}}
        =
        \rvx_{\mathcal{J}_{\mathrm{num}}^{c}},
        \label{eq:jitter_population_distribution}
    \end{equation}
    where
    \begin{equation}
    \begin{gathered}
        \boldsymbol{\eta}_s
        \mid
        \mathcal{C}_{-s}
        \sim
        \mathcal{N}
        \left(
            \mathbf{0},
            \widehat{\Sigma}_{\eta,s}
        \right),
        \qquad
    \boldsymbol{\eta}_s
    \perp\!\!\!\perp
    \rvx
    \mid
    \mathcal{C}_{-s},
        \\
        %\qquad
        \widehat{\Sigma}_{\eta,s}
        :=
        \operatorname{diag}
        \left(
            \lambda_j^2
            \widehat{d}_{j,\mathcal{C}_{-s}}^2
        \right)_{j\in\mathcal{J}_{\mathrm{num}}}.    
    \end{gathered}
    \label{eq:jitter_noise_distribution}
    \end{equation}
%     where
% \begin{equation}
%     \boldsymbol{\eta}_s
%     \mid
%     \mathcal{C}_{-s}
%     \sim
%     \mathcal{N}
%     \left(
%         \mathbf{0},
%         \widehat{\Sigma}_{\eta,s}
%     \right),
%     \qquad
%     \boldsymbol{\eta}_s
%     \perp\!\!\!\perp
%     \mathbf{X}
%     \mid
%     \mathcal{C}_{-s},
% \end{equation}
    Consequently, whenever the corresponding second moments exist,
    \begin{equation}
    \begin{aligned}
        \E
        \left[
            \widetilde{\rvx}^{\mathrm{jitter}}_{\mathcal{J}_{\mathrm{num}}}
            \mid
            \mathcal{C}_{-s}
        \right]
        &=
        \E
        \left[
            \rvx_{\mathcal{J}_{\mathrm{num}}}
        \right],
        \\
        \Cov
        \left(
            \widetilde{\rvx}^{\mathrm{jitter}}_{\mathcal{J}_{\mathrm{num}}}
            \mid
            \mathcal{C}_{-s}
        \right)
        &=
        \Cov
        \left(
            \rvx_{\mathcal{J}_{\mathrm{num}}}
        \right)
        +
        \widehat{\Sigma}_{\eta,s}.
    \end{aligned}
    \label{eq:jitter_covariance_shift}
    \end{equation}
    Hence, whenever $\widehat{\Sigma}_{\eta,s}\neq\mathbf{0}$, scaled
    jitter induces a nontrivial distributional shift in the numerical
    subvector.
\end{enumerate}
\end{proposition}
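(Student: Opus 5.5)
The proof treats the three operators separately. Each part has two layers. The first is a conditional law computation that uses only independence of donor indices, base sample, and noise. The second is an information-theoretic identity in the population limit, obtained from the chain rule for KL divergence.

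\emph{Column shuffling.} Condition on $\mathcal{C}_{-s}$. For each attribute $j$, the donor index $\pi_{ij}$ is uniform on the rows of $\mathcal{C}_{-s}$, so $\widetilde{x}_{ij}$ has law $\widehat{P}_{j,s}$. Because the $\pi_{ij}$ are independent across $j$, the joint law is the product, which gives Equation~\ref{eq:shuffle_empirical_distribution}. In the population limit (or simply replacing $\widehat{P}_{j,s}$ by $P_{0,j}$, as in the stated idealization) we obtain $Q^{\mathrm{shuffle}}$. Equation~\ref{eq:shuffle_total_correlation} is then exactly the definition~\ref{eq:total_correlation} with $\mathcal{A}=[d]$.

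\emph{Subset replacement.} Fix $\mathcal{A}$ and condition on $\mathcal{C}_{-s}$. The retained coordinates $\rvx_{\mathcal{A}^c}$ come from the held-out base sample. That sample is independent of $\mathcal{C}_{-s}$, so these coordinates have law $P_{0,\mathcal{A}^c}$. Each replaced coordinate $j\in\mathcal{A}$ has law $\widehat{P}_{j,s}$, and the replaced coordinates are mutually independent and independent of the base sample by the independence of the donors. This yields Equation~\ref{eq:replacement_empirical_distribution}, and its population version follows as before.

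For the divergence, write densities with respect to a common dominating measure (this is implicit in ``whenever well defined''). Decompose
\[
\log\frac{p_0(\vx)}{p_{0,\mathcal{A}^c}(\vx_{\mathcal{A}^c})\prod_{j\in\mathcal{A}}p_{0,j}(x_j)}
=\log\frac{p_0(\vx)}{p_{0,\mathcal{A}^c}(\vx_{\mathcal{A}^c})\,p_{0,\mathcal{A}}(\vx_{\mathcal{A}})}
+\log\frac{p_{0,\mathcal{A}}(\vx_{\mathcal{A}})}{\prod_{j\in\mathcal{A}}p_{0,j}(x_j)}.
\]
Take expectations under $P_0$. The first term gives $I_{P_0}(\rvx_{\mathcal{A}};\rvx_{\mathcal{A}^c})$. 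The second depends only on $\rvx_{\mathcal{A}}$, so it gives $\operatorname{TC}(\rvx_{\mathcal{A}})$.

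The main technical point is justifying the split of the expectation. Both terms are nonnegative KL quantities, so one cannot encounter $\infty-\infty$. If the left side is finite, Lemma-style additivity (the KL chain rule) applies directly. The case $\mathcal{A}=\varnothing$ is immediate, since the empty product and the empty mutual information are trivial.

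\emph{Scaled jitter.} Condition on $\mathcal{C}_{-s}$. The IQR scales $\widehat{d}_{j,\mathcal{C}_{-s}}$ are then fixed. The $\epsilon_j$ are i.i.d.\ standard normal and independent of the base sample, so $\boldsymbol\eta_s=(\lambda_j\widehat{d}_{j,\mathcal{C}_{-s}}\epsilon_j)_j$ is Gaussian with the stated diagonal covariance and independent of $\rvx$. Non-numerical coordinates are untouched, which gives Equation~\ref{eq:jitter_population_distribution}. The moment identities follow from linearity of expectation and from $\Cov(U+V)=\Cov(U)+\Cov(V)$ for independent $U,V$ with finite second moments.

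Finally, suppose $\widehat{\Sigma}_{\eta,s}\neq\mathbf{0}$. Then the covariance of the numerical subvector strictly increases in at least one diagonal entry, so its law differs from $P_0$'s. If second moments are not assumed, the same conclusion follows from characteristic functions. The law of $\rvx+\boldsymbol\eta_s$ has characteristic function $\phi_{\rvx}(t)\,e^{-t^\top\widehat{\Sigma}t/2}$. This differs from $\phi_{\rvx}(t)$ at any $t$ near $0$ with $t^\top\widehat{\Sigma}t>0$, since $\phi_{\rvx}(t)\neq0$ near the origin.
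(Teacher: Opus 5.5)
Your proposal is correct and follows essentially the same route as the paper. Both use product laws from the independently sampled donor indices, the same log-density-ratio split of $D_{\mathrm{KL}}(P_0\,\|\,Q^{\mathrm{replace}}_{\mathcal A})$ into $I_{P_0}(\rvx_{\mathcal A};\rvx_{\mathcal A^c})+\operatorname{TC}(\rvx_{\mathcal A})$, and linearity plus independence of the noise for the jitter moments. Your justification for splitting the expectation and your characteristic-function argument for the nontrivial shift without second moments add rigor the paper omits, but they do not change the approach.
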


\begin{proof}
For column shuffling, each attribute is sampled independently from its
train-side empirical marginal, yielding
Equation~\ref{eq:shuffle_empirical_distribution}. Replacing these empirical
marginals by their population counterparts gives
Equation~\ref{eq:shuffle_population_distribution}, and the KL identity follows
directly from the definition of total correlation.

For subset replacement, the retained attributes preserve their nominal joint
distribution, whereas the replaced attributes are sampled independently from
their respective marginals. Hence,
\begin{equation}
    Q^{\mathrm{replace}}_{\mathcal A}
    =
    P_{0,\mathcal A^c}
    \prod_{j\in\mathcal A}P_{0,j}.
\end{equation}
Assuming the corresponding densities exist, its log-density ratio with
respect to $P_0$ decomposes as
\begin{align}
    \log
    \frac{
        p_0(\rvx)
    }{
        p_{0,\mathcal A^c}(\rvx_{\mathcal A^c})
        \prod_{j\in\mathcal A}p_{0,j}(\ervx_j)
    }
    &=
    \log
    \frac{
        p_0(\rvx)
    }{
        p_{0,\mathcal A}(\rvx_{\mathcal A})
        p_{0,\mathcal A^c}(\rvx_{\mathcal A^c})
    }
    \nonumber\\
    &\quad+
    \log
    \frac{
        p_{0,\mathcal A}(\rvx_{\mathcal A})
    }{
        \prod_{j\in\mathcal A}p_{0,j}(\ervx_j)
    }.
\end{align}
Taking expectations with respect to $\rvx\sim P_0$ gives
\begin{align}
    D_{\mathrm{KL}}
    \left(
        P_0
        \,\middle\|\,
        Q^{\mathrm{replace}}_{\mathcal A}
    \right)
    &=
    D_{\mathrm{KL}}
    \left(
        P_0
        \,\middle\|\,
        P_{0,\mathcal A}P_{0,\mathcal A^c}
    \right)
    \nonumber\\
    &\quad+
    D_{\mathrm{KL}}
    \left(
        P_{0,\mathcal A}
        \,\middle\|\,
        \prod_{j\in\mathcal A}P_{0,j}
    \right)
    \\
    &=
    I_{P_0}
    \left(
        \rvx_{\mathcal A};
        \rvx_{\mathcal A^c}
    \right)
    +
    \operatorname{TC}
    \left(
        \rvx_{\mathcal A}
    \right),
\end{align}

where the first term measures the dependence between the replaced and retained attributes, while the second measures the dependence among the replaced
attributes themselves.

Finally, scaled jitter adds conditionally independent zero-mean noise to the
numerical subvector. Therefore, the perturbation preserves its conditional
mean, while independence eliminates the cross-covariance terms, yielding
\begin{equation}
    % \Cov
    % \left(
    %     \rvx_{\mathcal J_{\mathrm{num}}}
    %     +
    %     \boldsymbol{\eta}_s
    %     \mid
    %     \mathcal C_{-s}
    % \right)
    % =
    % \Cov
    % \left(
    %     \rvx_{\mathcal J_{\mathrm{num}}}
    % \right)
    % +
    % \widehat{\Sigma}_{\eta,s}.
    \Cov
\left(
    \rvx_{\mathcal J_{\mathrm{num}}}
    +
    \boldsymbol{\eta}_s
    \mid
    \mathcal C_{-s}
\right)
=
\Cov
\left(
    \rvx_{\mathcal J_{\mathrm{num}}}
    \mid
    \mathcal C_{-s}
\right)
+
\widehat{\Sigma}_{\eta,s}
=
\Cov
\left(
    \rvx_{\mathcal J_{\mathrm{num}}}
\right)
+
\widehat{\Sigma}_{\eta,s}.
\end{equation}
Thus, jitter preserves the nominal mean while increasing dispersion along the
perturbed numerical attributes.
\end{proof}

Proposition~\ref{prop:pseudo_ood_structural_effect} establishes the
operator-level distributional role of the context-calibrated structural violations. 
% Column shuffling removes global cross-attribute dependence, subset replacement disrupts both within-subset dependence and dependence between replaced and retained attributes, and scaled jitter perturbs the scale and local support of the numerical distribution.
Column shuffling removes global cross-attribute dependence, subset replacement disrupts both within-subset dependence and dependence between replaced and retained attributes, and scaled jitter increases dispersion along the perturbed numerical attributes.
The probes therefore form complementary structural stress tests derived from the normality-aware context rather than arbitrary synthetic samples.

Nominal held-out samples assess whether the induced task maintains high
predictive compatibility on normal data (C1: nominal predictive coherence),
whereas structural-violation samples drawn from controlled shifted
distributions serve as surrogates for assessing whether a candidate task
assigns lower predictive compatibility to structural departures than to
nominal held-out samples (C2: selective task coverage). Together, these provide
a principled held-out criterion for selecting candidate tasks with suitable
predictive coverage for tabular anomaly detection.

\subsection{Details of Data-Adaptive Selection}
\label{appendix:task_filtering}
% \subsubsection{Intra-task Selection}
% For each active virtual-task family $m$, we construct a family-specific candidate set $\mathcal{C}_m$ by varying the task-construction parameters of the corresponding labeling rule. These parameters control how the context-supported nominal structure is converted into discrete virtual targets.
% %, and therefore determine the particular normality-anchored predictive relation induced by each candidate. 
% The applicable configuration axes are adapted to the corresponding profile-specific task instantiation described in Appendix~\ref{appendix:feature_profile_inference} and Appendix~\ref{appendix:virtual_task_instantiations}.
% Table~\ref{tab:task_candidate_grid} summarizes the principal configuration
% axes used to construct the candidate sets.
% All candidates associated with the same dataset and random seed are evaluated using the common held-out registry described in
% Appendix~\ref{appendix:pseudo_ood_generation}. 
\begin{table*}[t]
    \centering
    \caption{
        Principal configuration axes used to construct candidate configurations
        for intra-task selection.
        Each axis is applied when the corresponding virtual-task instantiation
        is available for the inferred feature profile.
        These configurations affect only virtual-task construction; the original
        observed attributes are preserved for general-purpose TFM's inference.
    }
    \label{tab:task_candidate_grid}
    \small
    \begin{tabular}{p{6.5cm} p{4.5cm} p{2.0cm}} % p{6.0cm}}
        \toprule
        \textbf{Configuration Axis}
        & \textbf{Applicable Feature Profiles}
        & \textbf{Candidates} \\
        % & \textbf{Effect on the induced virtual task} \\
        \midrule
        % &
        % Controls the granularity with which a context-derived scalar
        % statistic is partitioned into virtual classes.
        Number of clusters 
        % (target generation)
        &
        All
        %K$-means, $K$-modes, and $K$-prototypes
        &
        $\{2,3\}$
        % &
        % Controls the granularity of the prototype-based decomposition of the
        % nominal context and hence the induced virtual class structure.
        \\
        Numerical feature transformation
        &
        Numerical-only, all mixed variants
        &
        $\{\texttt{raw},\texttt{robust}\}$
        % &
        % Determines whether numerical attributes are used on their original
        % scale or after context-derived robust normalization when constructing
        % the virtual labeling rule.
        \\
        Number of quantile bins 
        &
        All
        &
        $\{2,3\}$
        \\

        Number of quantile bins for numerical tokenization
        &
        All mixed variants
        &
        $\{2,3\}$
        % &
        % Controls the discretization granularity used to convert numerical
        % attributes into the mixed categorical representation employed by the
        % corresponding labeling rule.
        \\
        \bottomrule
    \end{tabular}
\end{table*}
% \subsubsection{Intra-task Selection}
% For each instantiated task $m$ corresponding to a specific template, we construct a task-specific candidate configuration set $\mathcal{C}_m$ that determine context-induced parameters for the virtual labeling rule.
% %by varying the task construction parameters. 
% These parameters control how the context-induced
% nominal predictive structure is converted into discrete virtual targets. The applicable
% configuration axes are adapted to the corresponding profile-specific task
% instantiation described in
% Appendix~\ref{appendix:feature_profile_inference} and
% Appendix~\ref{appendix:virtual_task_instantiations}.
% Table~\ref{tab:task_candidate_grid} summarizes the principal configuration
% axes used to construct the candidate sets. All candidates associated with the
% same dataset and random seed are evaluated using the common nominal held-out registry
% and structural-violation probes described in
% Appendix~\ref{appendix:pseudo_ood_generation}.
\subsubsection{Intra-task Selection}
For each instantiated task $m$ corresponding to a specific template, we construct a task-specific candidate configuration set $\mathcal{C}_m$. Each candidate $c\in\mathcal{C}_m$ determines a context-induced task parameter $\widehat{\psi}_{m,\mathcal{C}}^{\,c}$ and the corresponding virtual-labeling rule.
These configurations control how the context-induced nominal predictive structure is converted into discrete virtual targets. The applicable configuration axes are adapted to the corresponding profile-specific task instantiation described in 
%Appendix~\ref{appendix:feature_profile_inference} and 
Appendix~\ref{appendix:virtual_task_instantiations}.
Table~\ref{tab:task_candidate_grid} summarizes the principal configuration axes used to construct the candidate sets. All candidates associated with the same dataset and random seed are evaluated using the common nominal held-out registry and structural-violation probes described in Appendix~\ref{appendix:pseudo_ood_generation}.

As defined in Section~\ref{sec:DataAdaptiveTaskFilter}, each candidate
$c\in\mathcal{C}_m$ is characterized by
\begin{equation}
    \vr_{m,c}
    =
    \left(
        \operatorname{Med}(\mathcal{K}^{\mathrm{nom}}_{m,c}),
        \Var(\mathcal{K}^{\mathrm{nom}}_{m,c}),
        \rho_{m,c},
        \Delta_{m,c}
    \right).
\end{equation}
Intra-task selection implements
$\operatorname{Select}_{\mathrm{intra}}(\cdot)$ hierarchically, so that
nominal predictive coherence is first ensured before comparing the selective task coverage of candidate configurations.

\paragraph{Nominal-coherence filtering.}
We first retain candidates whose median held-out nominal compatibility meets
the nominal-coherence threshold $\tau_{\mathrm{N}}$:
\begin{equation}
    \mathcal{C}^{\mathrm{N}}_m
    =
    \left\{
        c\in\mathcal{C}_m
        :
        \operatorname{Med}
        (\mathcal{K}^{\mathrm{nom}}_{m,c})
        \geq
        \tau_{\mathrm{N}}
    \right\}.
    \label{eq:intra_nominal_filter}
\end{equation}
This step prevents a configuration with poor nominal predictive coherence from
being selected solely because it strongly separates the structural-violation
probes. 
If $\mathcal{C}^{\mathrm{N}}_m$ is empty, we directly select the candidate
with the largest separation AUC as the representative configuration
$c_m^\star$ and skip the remaining intra-task selection steps.
% If $\mathcal{C}^{\mathrm{N}}_m$ is empty, we retain as a fallback the
% candidate with the largest separation AUC.
% ,
% \begin{equation}
%     \overline{\mathcal{C}}^{\mathrm{N}}_m
%     =
%     \begin{cases}
%         \mathcal{C}^{\mathrm{N}}_m,
%         &
%         \mathcal{C}^{\mathrm{N}}_m\neq\varnothing,
%         \\[1mm]
%         \left\{
%             \displaystyle
%             \arg\max_{c\in\mathcal{C}_m}
%             \rho_{m,c}
%         \right\},
%         &
%         \mathcal{C}^{\mathrm{N}}_m=\varnothing.
%     \end{cases}
%     \label{eq:intra_nominal_rescue}
% \end{equation}

\paragraph{Task coverage-based shortlisting.}
Among the candidates in
$\mathcal{C}^{\mathrm{N}}_m$, we next retain those with the
largest separation AUC $\rho_{m,c}$. Let
\begin{equation}
    L_m
    =
    \min
    \left\{
        \left|
            \mathcal{C}^{\mathrm{N}}_m
        \right|,
        \;
        \max
        \left\{
            L_{\min},
            \left\lceil
                \gamma_{\mathrm{AUC}}
                \left|
                    \mathcal{C}^{\mathrm{N}}_m
                \right|
            \right\rceil
        \right\}
    \right\},
    \label{eq:intra_shortlist_size}
\end{equation}
where $\gamma_{\mathrm{AUC}}$ denotes the AUC shortlist fraction and
$L_{\min}$ is the minimum shortlist size whenever sufficiently many
candidates are available. We then define
$\mathcal{C}^{\mathrm{short}}_m$ as the $L_m$ candidates with the largest
$\rho_{m,c}$.

\paragraph{Representative configuration.}
The final representative is selected by jointly considering nominal
compatibility, its stability, and conservative separation from the
structural-violation probes. For candidates in
$\mathcal{C}^{\mathrm{short}}_m$, let
$R_{\downarrow}(\cdot)$ and $R_{\uparrow}(\cdot)$ denote normalized badness
ranks for quantities for which larger and smaller values are preferred,
respectively. Ties are assigned their average rank. We define
\begin{equation}
\begin{aligned}
    B_{\mathrm{intra}}(m,c)
    &=
    R_{\downarrow}
    \left(
        \operatorname{Med}
        (\mathcal{K}^{\mathrm{nom}}_{m,c})
    \right)
    +
    R_{\uparrow}
    \left(
        \Var
        (\mathcal{K}^{\mathrm{nom}}_{m,c})
    \right)
    +
    R_{\downarrow}
    \left(
        \Delta_{m,c}
    \right),
    \\
    c_m^\star
    &=
    \arg\min_{
        c\in\mathcal{C}^{\mathrm{short}}_m
    }
    B_{\mathrm{intra}}(m,c).
\end{aligned}
\label{eq:intra_selection_detail}
\end{equation}
Importantly, the ranks in
Equation~\ref{eq:intra_selection_detail} are recomputed within the shortlisted
candidate set. Hence, $\rho_{m,c}$ acts as a separation-based qualification
criterion, while the final choice balances high nominal compatibility, low
nominal compatibility variance, and a large conservative quantile gap.

For the selection hyperparameters, we set
$\tau_{\mathrm{N}}=0.7$ and $L_{\min}=2$, while
$\gamma_{\mathrm{AUC}}$ is selected through the backbone-specific configuration calibration described in Appendix~\ref{Appendix:Configuration Search}.
These choices are fixed thereafter and applied unchanged to all unseen
evaluation datasets.

\subsubsection{Inter-task Selection}
After intra-task selection, each active virtual task $m\in\mathcal{M}$ is
represented by its selected configuration $c_m^\star$ and the corresponding
reliability statistics $\vr_{m,c_m^\star}$. We then perform
inter-task selection to remove representatives whose predictive coverage is
insufficiently reliable relative to the other retained tasks. The following procedure implements $\operatorname{Select}_{\mathrm{inter}}(\cdot)$ defined in Section~\ref{sec:DataAdaptiveTaskFilter}.

\paragraph{Task coverage-based prefiltering.}
We first require each task representative to exhibit at least chance-level
separation between held-out normal samples and structural-violation probes.
Specifically, we define the set of AUC-eligible tasks as
\begin{equation}
    \mathcal{M}_{\mathrm{AUC}}
    =
    \left\{
        m\in\mathcal{M}
        :
        \rho_{m,c_m^\star}
        \geq
        \tau_{\mathrm{A}}
    \right\},
    \label{eq:inter_auc_prefilter}
\end{equation}
where we set $\tau_{\mathrm{A}}=0.50$.
% If $\mathcal{M}_{\mathrm{AUC}}$ is empty, we retain as a fallback the task
% representative with the largest $\rho_{m,c_m^\star}$, ensuring that at least
% one task remains available for anomaly scoring.
% If $\mathcal{M}_{\mathrm{AUC}}$ is empty, we replace it with the singleton set containing the task representative with the largest $\rho_{m,c_m^\star}$, ensuring that at least one task remains available for scoring.
If $\mathcal{M}_{\mathrm{AUC}}$ is empty, we replace it with the singleton set containing the task with the largest $\rho_{m,c_m^\star}$, ensuring that at least one task remains available for scoring.

\paragraph{Inter-task ranking.}
For the AUC-eligible tasks, we jointly compare nominal predictive coherence,
its stability, and selective task coverage from the structural-violation probes.
Let $R_{\downarrow}(\cdot)$ and $R_{\uparrow}(\cdot)$ denote normalized
badness ranks for quantities for which larger and smaller values are
preferred, respectively. The ranks are recomputed over
$\mathcal{M}_{\mathrm{AUC}}$, with ties assigned their average rank. We define
\begin{equation}
\begin{aligned}
    B_{\mathrm{inter}}(m)
    &=
    w_A
    R_{\downarrow}
    \left(
        \rho_{m,c_m^\star}
    \right)
    +
    R_{\downarrow}
    \left(
        \operatorname{Med}
        (\mathcal{K}^{\mathrm{nom}}_{m,c_m^\star})
    \right)
    \\
    &\quad+
    R_{\uparrow}
    \left(
        \Var
        (\mathcal{K}^{\mathrm{nom}}_{m,c_m^\star})
    \right)
    +
    R_{\downarrow}
    \left(
        \Delta_{m,c_m^\star}
    \right),
    \qquad
    m\in\mathcal{M}_{\mathrm{AUC}},
\end{aligned}
\label{eq:inter_task_badness}
\end{equation}
% where $w_A$ controls the relative contribution of the selective task coverage.
where $w_A$ controls the relative contribution of the AUC-based separation
criterion.
A smaller $B_{\mathrm{inter}}(m)$ indicates a more reliable task
representative.

Based on $B_{\mathrm{inter}}(m)$, the final task set is obtained by applying
an inter-task retention policy $\pi_{\mathrm{ret}}$ to the AUC-eligible task
set:
\begin{equation}
    \mathcal{M}^{\star}
    =
    {\pi_{\mathrm{ret}}}
    \left(
        \mathcal{M}_{\mathrm{AUC}};
        \left\{
            B_{\mathrm{inter}}(m)\right\}_{m\in\mathcal{M}_{\mathrm{AUC}}}
            % \rho_{m,c_m^\star}_{m\in\mathcal{M}_{\mathrm{AUC}}}
    \right),
    \label{eq:inter_task_retention}
\end{equation}
where $\pi_{\mathrm{ret}}$ determines how the relative reliability of the
AUC-eligible task representatives is translated into the final retained set.
% Depending on the policy, the operator may retain all eligible tasks, select
% only the most reliable tasks according to $B_{\mathrm{inter}}$, or impose an
% additional reliability or separation criterion.

Both the AUC-rank weight $w_A$ and the retention policy $\pi_{\mathrm{ret}}$ are selected through the backbone-specific configuration calibration using the real-world development datasets described in Appendix~\ref{Appendix:Configuration Search}. Once selected, they are fixed and applied unchanged to all unseen evaluation datasets.
\subsection{Task-wise Anomaly Scoring and Score Aggregation}
\label{appendix:anomaly_scoring}

% We describe how the predictive compatibility of each retained task is converted into anomaly evidence and subsequently aggregated across the surviving tasks. We consider several task-wise scoring functions
% $\phi(\cdot)$ that quantify different forms of predictive incompatibility,
% together with consensus- and reliability-aware ensemble operators. These
% scoring and aggregation choices are treated as framework-level
% configurations and are selected using the real-world development datasets
% described in Appendix~\ref{Appendix:Configuration Search}. 
% Once selected, they are fixed and applied unchanged to all unseen evaluation datasets.
We describe how the predictive support of each retained virtual supervised task is
converted into anomaly evidence and subsequently aggregated across the
surviving tasks. We consider the specific realization of the task-wise scoring
function $\varphi$ together with several ensemble operators. The ensemble strategy is selected through backbone-specific configuration calibration using the real-world development datasets described in
Appendix~\ref{Appendix:Configuration Search}.
% The choice of $\operatorname{Ensemble}$ are treated as framework-level configurations and selected using the real-world development datasets described in Appendix~\ref{Appendix:Configuration Search}.}

\paragraph{Task-wise anomaly scoring.}
% We consider three principal forms of anomaly evidence for $\varphi$. Let
% $q_{m^\star}(\vx)
%     =
%     \widehat{\kappa}_{m^\star}(\mathcal C,\vx)
% $
% denote the predictive support assigned by the retained virtual task
% $m^\star$.
We consider \textit{Held-out Calibrated Surprisal} as the principal form of anomaly evidence for $\varphi$. 
For each retained task $m\in\mathcal{M}^{\star}$, let $\widehat{\kappa}_{m}(\vx;\mathcal{C})$
denote the predictive support assigned by the pretrained TFM, and let
$\mathcal{K}^{\mathrm{nom}}_{m}
:=
\mathcal{K}^{\mathrm{nom}}_{m,c_m^\star}$
denote the held-out nominal support collection associated with its selected configuration $c_m^\star$.
% Let
% $q_{m^\star}(\vx)=
% $\widehat{\kappa}_m(\vx;\mathcal C)$
% denote the predictive support assigned by the pretrained PFN-based TFM for the retained virtual task $m\in\mathcal{M}^\star$ and $\mathcal{K}^{\mathrm{nom}}_{m}:=\mathcal{K}^{\mathrm{nom}}_{m,c_m^\star}$ denote the support collection for held-out nominal samples associated with the retained task $m$ with configuration $c^{\star}_{m}$. 
%The following procedure implements $\varphi$.

We calibrate the query support against the held-out nominal support distribution $\mathcal{K}^{\mathrm{nom}}_{m}$ using
\begin{equation}
    p_{m}(\vx)
    =
    \frac{
        1+
        % \sum_{q\in\mathcal Q_{m^\star}^{N}}
        \sum_{q\in\mathcal{K}^{\mathrm{nom}}_{m}}
        \1
        \left[
            q\le \widehat{\kappa}_{m}(\vx;\mathcal{C})
            %q_{m}(\vx)
        \right]
    }{
        |\mathcal{K}^{\mathrm{nom}}_{m}|+1
    },
\end{equation}
and define
\begin{equation}
    s_{m}(\vx)
    =
    -\log\!\left(p_{m}(\vx)\right).
\end{equation}

\paragraph{Task-score aggregation (ensemble).}
% Let $s_{m}(\mathbf{x})$ denote the
% task-wise anomaly evidence obtained from the selected scoring rule for each surviving task $m^\star\in\mathcal M^\star$. 
% We consider two consensus-oriented operators and two forms of reliability-aware adaptive aggregation.
% The first $\operatorname{Ensemble}$ operator is naively combining the task-wise scores $s_{m}(\vx)$ directly by using min or low2mean aggregation where low2mean is:
We first consider direct aggregation of the task-wise scores $s_m(\vx)$ using either min or low2mean aggregation. The latter is defined as
\begin{equation}
    % S_{\min}(\vx)
    % =
    % \min_{m\in\mathcal M^\star}
    % s_{m}(\vx),\,
    S_{\mathrm{low2}}(\vx)
    =
    \frac{1}{|\mathcal L_2(\vx)|}
    \sum_{m^\star\in\mathcal L_2(\vx)}
    s_{m^\star}(\vx),
\end{equation}
% where $\mathcal L_2(\mathbf{x})$ contains the two surviving tasks with the
% smallest anomaly scores. 
where $\mathcal L_2(\vx)$ contains the $\min(2,|\mathcal M^\star|)$ surviving tasks with the smallest anomaly scores.
These operators provide conservative consensus criteria, with $S_{\mathrm{low2}}$ offering a relaxed alternative to min aggregation.

For adaptive aggregation, we compute task-suitability scores using only the surviving tasks. Let $K=|\mathcal M^\star|$ and define
% For adaptive aggregation,we compute the task suitability score using only the surviving
% tasks. Let $K=|\mathcal M^\star|$ and define
\begin{equation}
    B_{\mathrm{agg}}(m)
    =
    \frac{
        w_A r_A(m)
        +
        r_M(m)
        +
        r_V(m)
        +
        r_G(m)
    }{K},
    \label{eq:config_aggregation_badness}
\end{equation}
where $r_A$, $r_M$, $r_V$, and $r_G$ are the ordinal badness ranks of separation AUC, nominal median support, nominal support variance, and quantile gap, respectively, recomputed over the surviving tasks using the
same preference directions as in inter-task selection.
% where the ranks are ordinal ranks computed over the surviving tasks, and a
% smaller $B_{\mathrm{agg}}(m^\star)$ indicates greater reliability.
% where $r_A$, $r_M$, $r_V$, and $r_G$ are the ordinal badness ranks of
% separation AUC, nominal median compatibility, nominal compatibility variance,
% and quantile gap, respectively.
%recomputed over the surviving tasks using the
% same preference directions as in inter-task selection. 
A smaller $B_{\mathrm{agg}}(m)$ indicates greater task suitability for anomaly detection.
We consider the following adaptive strategies.

\begin{itemize}[left=0pt]
    \item \textbf{Reliability-spread adaptive aggregation.}
    We compute
    \begin{equation}
        \Delta_{\mathrm{rel}}
        =
        \max_{m\in\mathcal M^\star}
        B_{\mathrm{agg}}(m)
        -
        \min_{m\in\mathcal M^\star}
        B_{\mathrm{agg}}(m).
    \end{equation}
    When $K\geq3$ and $\Delta_{\mathrm{rel}}<1.0$, the task reliabilities are
    regarded as sufficiently similar, and a consensus-oriented operator
    (min or low2mean) is used.
    Otherwise, aggregation favors the most suitable task(s) according to $B_{\mathrm{agg}}$.
    % Otherwise, aggregation switches to the most suitable tasks.
    %or the mean of the two most reliable
    %tasks according to $B_{\mathrm{agg}}$.

    \item \textbf{Reliability-weighted aggregation.}
    Let $\bar r_{m}$ denote the normalized rank derived from
    $B_{\mathrm{agg}}(m)$. We define
    \begin{equation}
        % \omega_{m^\star}
        % =
        % \frac{(\bar r_{m^\star}+0.1)^{-1}}
        % {\sum_{j^\star\in\mathcal M^\star}
        % (\bar r_{j^\star}+0.1)^{-1}},
        % \qquad
        % S_{\mathrm{RW}}(\mathbf{x})
        % =
        % \sum_{m^\star\in\mathcal M^\star}
        % \omega_{m^\star}s_{m^\star}(\mathbf{x}).
        \omega_{m}
=
\frac{(\bar r_{m}+\epsilon_w)^{-1}}
{\sum_{j\in\mathcal M^{\star}}
(\bar r_{j}+\epsilon_w)^{-1}},
\qquad
        S_{\mathrm{RW}}(\vx)
        =
        \sum_{m\in\mathcal M^\star}
        \omega_{m}s_{m}(\vx),
        \qquad
    \end{equation}
    where $\epsilon_w=0.1$ prevents the weight assigned to the highest-ranked task from becoming singular.
    % where $\epsilon_w=0.1$ prevents the weight of the most reliable task from becoming singular. In our framework, we set $\epsilon_w=0.1$.
\end{itemize}

\section{Experimental Details}
\label{Appendix:ExperimentalDetails}

\begin{table*}[t]
\centering
\caption{Statistics of the ADBench datasets.}
% The feature profile column is determined from the dataset-level feature-type profiling used in our paper.}
\label{tab:config_search_datasets}
\small
\begin{tabular}{lrrrrl}
\toprule
Dataset & \#Samples & \#Feat. & \#Anom. & \%Anom. & Category \\
\midrule
ALOI              & 49534  & 27   & 1508  & 3.04  & Image \\
annthyroid        & 7200   & 6    & 534   & 7.42  & Healthcare \\
backdoor          & 95329  & 196  & 2329  & 2.44  & Network \\
breastw           & 683    & 9    & 239   & 34.99 & Healthcare \\
campaign          & 41188  & 62   & 4640  & 11.27 & Finance \\
cardio            & 1831   & 21   & 176   & 9.61  & Healthcare \\
Cardiotocography  & 2114   & 21   & 466   & 22.04 & Healthcare \\
celeba            & 202599 & 39   & 4547  & 2.24  & Image \\
cover             & 286048 & 10   & 2747  & 0.96  & Botany \\
donors            & 619326 & 10   & 36710 & 5.93  & Sociology \\
fault             & 1941   & 27   & 673   & 34.67 & Physical \\
fraud             & 284807 & 29   & 492   & 0.17  & Finance \\
glass             & 214    & 7    & 9     & 4.21  & Forensic \\
Hepatitis         & 80     & 19   & 13    & 16.25 & Healthcare \\
http              & 567498 & 3    & 2211  & 0.39  & Web \\
InternetAds       & 1966   & 1555 & 368   & 18.72 & Image \\
Ionosphere        & 351    & 33   & 126   & 35.90 & Oryctognosy \\
landsat           & 6435   & 36   & 1333  & 20.71 & Astronautics \\
letter            & 1600   & 32   & 100   & 6.25  & Image \\
Lymphography      & 148    & 18   & 6     & 4.05  & Healthcare \\
magic.gamma       & 19020  & 10   & 6688  & 35.16 & Physical \\
mammography       & 11183  & 6    & 260   & 2.32  & Healthcare \\
mnist             & 7603   & 100  & 700   & 9.21  & Image \\
musk              & 3062   & 166  & 97    & 3.17  & Chemistry \\
optdigits         & 5216   & 64   & 150   & 2.88  & Image \\
PageBlocks        & 5393   & 10   & 510   & 9.46  & Document \\
pendigits         & 6870   & 16   & 156   & 2.27  & Image \\
Pima              & 768    & 8    & 268   & 34.90 & Healthcare \\
satellite         & 6435   & 36   & 2036  & 31.64 & Astronautics \\
satimage-2        & 5803   & 36   & 71    & 1.22  & Astronautics \\
shuttle           & 49097  & 9    & 3511  & 7.15  & Astronautics \\
skin              & 245057 & 3    & 50859 & 20.75 & Image \\
smtp              & 95156  & 3    & 30    & 0.03  & Web \\
SpamBase          & 4207   & 57   & 1679  & 39.91 & Document \\
speech            & 3686   & 400  & 61    & 1.65  & Linguistics \\
Stamps            & 340    & 9    & 31    & 9.12  & Document \\
thyroid           & 3772   & 6    & 93    & 2.47  & Healthcare \\
vertebral         & 240    & 6    & 30    & 12.50 & Biology \\
vowels            & 1456   & 12   & 50    & 3.43  & Linguistics \\
Waveform          & 3443   & 21   & 100   & 2.90  & Physics \\
WBC               & 223    & 9    & 10    & 4.48  & Healthcare \\
WDBC              & 367    & 30   & 10    & 2.72  & Healthcare \\
Wilt              & 4819   & 5    & 257   & 5.33  & Botany \\
wine              & 129    & 13   & 10    & 7.75  & Chemistry \\
WPBC              & 198    & 33   & 47    & 23.74 & Healthcare \\
yeast             & 1484   & 8    & 507   & 34.16 & Biology \\
\bottomrule
\end{tabular}
\end{table*}

\subsection{Details of \method Setup}
\label{Appendix:Configuration Search}
% \subsection{Framework-Level Configuration Search}
% \label{Appendix:Configuration Search}
\method employs hierarchical task selection through $\operatorname{Select}_{\mathrm{intra}}$ and $\operatorname{Select}_{\mathrm{inter}}$ to data-adaptively retain suitable virtual tasks and their normality-anchored configurations, followed by $\operatorname{Ensemble}$ to aggregate their task-wise anomaly scores.

These components depend on the predictive characteristics of the underlying pretrained general-purpose TFM. 
In particular, task selection relies on the predictive support assigned to held-out samples, while score aggregation accounts for the relative predictive reliability of the backbone TFM across the retained tasks. 
We therefore perform a one-time backbone-specific calibration of the corresponding framework-level hyperparameters, as described in Section~\ref{sec:experiment}. Once selected, these hyperparameters are fixed for each backbone and applied unchanged to all unseen datasets.

\paragraph{ADBench.}
For backbone-specific calibration, we use 46 real-world datasets from ADBench~\citep{NEURIPS2022_cf93972b}. Table~\ref{tab:config_search_datasets} summarizes their dataset statistics. 
% \paragraph{ADBench preprocessing.}
For each dataset, we construct the training/context split from normal samples only. Specifically, when the anomaly ratio exceeds $30\%$, we use $50\%$ of the available normal samples as the training/context set; otherwise, we use $60\%$. For datasets containing more than $100{,}000$ total samples, we apply the without-replacement subsampling procedure.

\paragraph{Searched framework-level hyperparameters.}
% All framework-level configuration searches are conducted using TabICL V2 as
% the frozen PFN backbone. 
The framework-level hyperparameters governing $\operatorname{Select}_{\mathrm{intra}}$, $\operatorname{Select}_{\mathrm{inter}}$, and $\operatorname{Ensemble}$ are calibrated to the predictive characteristics of each pretrained backbone TFM. The backbone itself is kept at the default configuration provided by its original paper and repository. The principal search dimensions are summarized below.

% For each candidate task, following the notation in the main paper, let
% $\operatorname{Med}(\mathcal{K}^{\mathrm{nom}}_{m,c})$,
% $\operatorname{Var}(\mathcal{K}^{\mathrm{nom}}_{m,c})$,
% $\rho_{m,c}$, and $\Delta_{m,c}$ respectively denote the median held-out
% nominal compatibility, its variance, the separation AUC, and the conservative
% quantile gap defined in Section~\ref{ssec:DataAdaptiveTaskFilter}.
% The configuration search varies the following components of 
For the hierarchical task-selection procedure:
\begin{itemize}[left=0cm]
    \item \textbf{Held-out fraction $\rho_H$:}
    $\{0.10,\,0.15,\,0.20\}$.

    \item \textbf{Intra-family separation AUC shortlist fraction
    $\gamma_{\mathrm{AUC}}$:}
    $\{0.50,\,0.60,\,0.70\}$.

    \item \textbf{Inter-family separation AUC-rank weight $w_A$:}
    $\{1.5,\,2.0\}$.

    \item \textbf{Inter-family retention policy $\pi_{\mathrm{ret}}$:}
    $\{\textsc{All},\,\textsc{Best},\,\textsc{Top-$k$}\}$.
    % \,\textsc{Rank-Cutoff},\,\textsc{AUC-Threshold}\}$.
\end{itemize}

For inter-family retention, 
% after selecting one representative configuration
% $c_m^\star$ for each  through the intra-family selection step, we first
% discard families whose $\rho_{m,c_m^\star}$ is below $0.50$. Let
% \begin{equation}
%     \mathcal{F}_{\mathrm{AUC}}
%     =
%     \left\{
%         m :
%         \rho_{m,c_m^\star} \geq 0.50
%     \right\}
% \end{equation}
% denote the set of AUC-eligible families. For the remaining families, we define
% the inter-family reliability criterion as
% \begin{equation}
%     B_{\mathrm{inter}}(m)
%     =
%     w_A r_A(m)
%     +
%     r_M(m)
%     +
%     r_V(m)
%     +
%     r_G(m),
%     \label{eq:config_inter_family_badness}
% \end{equation}
% where $r_M$, $r_V$, and $r_G$ denote the badness ranks of the nominal median
% compatibility, nominal compatibility variance, and quantile gap,
% respectively, and $r_A$ denotes the badness rank of
% $\rho_{m,c_m^\star}$. Lower values of $B_{\mathrm{inter}}$ indicate more
% reliable families.
% \textsc{All} retains all task families in AUC-eligible task representatives $\mathcal{M}_{\mathrm{AUC}}$, whereas
% \textsc{Best} retains only the family with the smallest
% $B_{\mathrm{inter}}$. \textsc{Top-$k$} retains the top
% $k
%     =
%     \left\lceil
%         0.5
%         \left|
%             \mathcal{M}_{\mathrm{AUC}}
%         \right|
%     \right\rceil
% $
% families according to $B_{\mathrm{inter}}$.
\textsc{All} retains all AUC-eligible task representatives in
$\mathcal{M}_{\mathrm{AUC}}$, whereas \textsc{Best} retains only the task with
the smallest $B_{\mathrm{inter}}$. \textsc{Top-$k$} retains the top
$k
    =
    \left\lceil
        0.5
        \left|
            \mathcal{M}_{\mathrm{AUC}}
        \right|
    \right\rceil
$
tasks according to $B_{\mathrm{inter}}$.
%while
% \textsc{Rank-Cutoff} retains families whose normalized composite rank is at
% most $0.5$. Finally, \textsc{AUC-Threshold} retains families satisfying
% $\rho_{m,c_m^\star}\geq0.80$.
% When none satisfies the corresponding AUC requirement, the composite-best family is retained as a fallback.

% For task ensembling, we search over
% the candidate realizations of
% $\operatorname{Ensemble}(\cdot)$ defined in
% Appendix~\ref{appendix:anomaly_scoring}. The combination of task selection and aggregation configuration that yields the best average performance on the ADBench datasets are selected as the backbone-calibrated framework-level configurations and are
% then fixed for all unseen evaluation datasets.
For task ensembling, we search over the candidate realizations of
$\operatorname{Ensemble}(\cdot)$ defined in
Appendix~\ref{appendix:anomaly_scoring}. The combination of task-selection and
aggregation configurations yielding the best average performance on the
ADBench datasets is selected for each backbone as its calibrated
framework-level configuration and then fixed for all unseen evaluation
datasets.
% \subsection{Data Preprocessing}
% \label{Appendix:Datasets}
\subsection{Details of the Evaluation Datasets}
\label{Appendix:Datasets}

\paragraph{Implementation Setting.}
All experiments were conducted on a machine equipped with NVIDIA H200 GPUs and two Intel Xeon Platinum 8462Y+ CPUs, providing 64 CPU cores (128 logical threads) in total.

\paragraph{ODDBench.}
ODDBench is a large-scale, real-world benchmark designed for tabular
outlier/anomaly detection. Carefully curated from the large-scale TabLib
repository~\citep{eggert2023tablibdataset627mtables}, the benchmark comprises
790 real-world datasets with semantically meaningful anomalies, such as fraud,
failures, defects, and attacks, across diverse application domains. These
datasets are partitioned into 690 publicly available sets and 100 private sets.

It further provides standardized train/test splits under a one-class
setting, where the training set contains only normal samples and the test set
contains the remaining normal samples together with all anomalous samples. Its substantially larger scale and broader diversity than conventional anomaly
detection benchmarks provide a more comprehensive and statistically robust
testbed for evaluating unsupervised tabular anomaly detection baselines. Further details are provided in the original ODDBench paper~\citep{Ding_2026}.

\paragraph{Preprocessing.}
Following prior tabular anomaly-detection studies~\citep{shen2025fomo0dfoundationmodelzeroshot, ICLR2024_6dfd16ff}, we apply feature-wise $z$-score normalization to the input data used by conventional baselines. For TAD-specialized TFM baselines, we follow the preprocessing pipeline required by each released model and its corresponding input architecture. Likewise, TabPFN-Extension and \method use the native input preprocessing of their underlying TFM backbones.

\paragraph{Subsampling of large datasets.}
To ensure a consistent computational budget across datasets, we limit the total number of samples in each dataset to at most $100{,}000$. Specifically, if
\begin{equation}
n_{\mathrm{train}} + n_{\mathrm{test}} > 100{,}000,
\end{equation}
where $n_{\mathrm{train}}$ and $n_{\mathrm{test}}$ denote the numbers of samples in the training/context split and test split, respectively, we perform without-replacement subsampling while approximately preserving the original train--test ratio. Let $N_{\max}=100{,}000$. The retained split sizes are determined as
\begin{equation}
    n_{\mathrm{train}}'
    =
    \operatorname{round}
    \left(
        N_{\max}
        \frac{n_{\mathrm{train}}}
        {n_{\mathrm{train}}+n_{\mathrm{test}}}
    \right),
    \qquad
    n_{\mathrm{test}}'
    =
    N_{\max}-n_{\mathrm{train}}'.
\end{equation}

The training/context subset is then sampled uniformly without replacement using a seed-specific random generator. Since the ODDBench training split contains only normal samples, no stratification is required for this split. For the test set, we perform stratified random subsampling to preserve the original proportion of normal and anomalous samples as closely as possible. This procedure preserves the overall train--test structure and test contamination ratio while maintaining a tractable computational cost.

\subsection{Baseline Settings}
\label{Appendix:Baselines}
\paragraph{Conventional baselines.}
Following~\citep{shen2025fomo0dfoundationmodelzeroshot, ding2026zeroheroadvancingzeroshot}, we compare \method against 25 widely used unsupervised anomaly and outlier detection methods adopted from DTE~\citep{ICLR2024_6dfd16ff}. The classical and shallow baselines include ECOD, iForest, kNN, LOF, OCSVM, PCA, MCD, HBOS, COPOD, FeatureBagging, and LODA. These methods are implemented using PyOD~\citep{10.1145/3701716.3715196}, a widely used Python library for outlier detection. For deep learning-based baselines, we include VAE, Deep-SVDD, SLAD, ICL, GOAD, DDPM, PlanarFlow, DAGMM, DROCC, GANomaly, and DIF. We additionally evaluate the diffusion-based DTE variants, including DTE-C, DTE-IG, and DTE-NP.

Consistent with the evaluation protocols of prior TFM-based anomaly detection studies~\citep{shen2025fomo0dfoundationmodelzeroshot, ding2026zeroheroadvancingzeroshot}, we use the default PyOD hyperparameter configurations for all classical and shallow baselines. For VAE and Deep-SVDD, we likewise use the implementations and default configurations provided by PyOD. For the remaining methods, we follow the hyperparameter settings reported in their original papers and the official DTE repository~\citep{ICLR2024_6dfd16ff}.

\paragraph{PFN-based baselines.}
We include five TFM-based baselines for tabular anomaly detection: FoMo-0D~\citep{shen2025fomo0dfoundationmodelzeroshot}, OUTFORMER~\citep{ding2026zeroheroadvancingzeroshot}, TACTIC-CLEAN and TACTIC-CONT~\citep{marszalek2026tacticnavigatingunknowntabular}, and the official TabPFN anomaly-detection extension\footnote{\url{https://docs.priorlabs.ai/capabilities/anomaly-detection}}. TACTIC-CLEAN and TACTIC-CONT correspond to independently pretrained variants designed for clean and contaminated contexts, respectively. For the TabPFN anomaly-detection extension, we use the latest TabPFN v3.0 backbone available in our evaluation setup. 

Following the official implementations and experimental settings described in the corresponding papers, we configure each baseline as follows:

For FoMo-0D, we use the officially released 100-dimensional variant. For
datasets with fewer than 100 attributes, the inputs are rescaled and
zero-padded to 100 dimensions; for higher-dimensional datasets, 100 attributes
are randomly subsampled. For OUTFORMER, which follows the same 100-dimensional
input setting as FoMo-0D, we use the officially released model without
ensembling. We apply a quantile transformation to the input features and limit the context size to at most 5,000 samples for both baselines, following the setup described in their papers and code repositories. 

For TACTIC-CLEAN and TACTIC-CONT, following the preprocessing protocol described in the official paper and repository, we set the input dimensionality to 50 attributes and apply MinMax scaling to the input features. Datasets with fewer than 50 attributes are zero-padded, whereas datasets with more than 50 attributes are evaluated using an ensemble of ten randomly sampled 50-feature subsets. We do not impose an additional context-length limit for either TACTIC
variant.

For the TabPFN anomaly-detection extension, we do not impose a context-size
limit. We use the default hyperparameters and configurations of the naive
classifier- and regressor-based anomaly detection schemes provided in the
official TabPFN-Extension implementation.

To ensure a focused and fair comparison, we restrict the tabular foundation model baselines to PFN-based approaches, as our work specifically targets anomaly detection through the predictive interface of PFN-based TFMs. We therefore exclude foundation models built on fundamentally different pretraining or inference paradigms, since their inclusion would confound the comparison of PFN-specific design choices. We further require baselines to provide official code, pretrained checkpoints, and sufficiently specified inference configurations for faithful reproduction. Accordingly, we exclude ICLAD~\citep{wei2026icladincontextlearningunified}, which releases pretrained checkpoints on Hugging Face but does not provide the official inference pipeline or complete inference settings.

\paragraph{Our setting.}
For \method, we fix all configurations of the underlying general-purpose TFM backbone,
TabICLv2. We use the pretrained checkpoint released through the official
TabICLv2 Hugging Face repository\footnote{\url{https://huggingface.co/papers/2602.11139}}
and its default classifier configuration specified in the official repository. Following the one-time real-world configuration search described in Appendix~\ref{Appendix:Configuration Search}, we fix the task-selection and score-aggregation configurations for the TabICLv2 backbone as follows:
\[
\left\{
\rho_H = 0.10,\;
\gamma_{\mathrm{AUC}} = 0.70,\;
w_A = 1.5,\;
\pi_{\mathrm{ret}} = \textsc{All},\;
\operatorname{Ensemble} = \textsc{low2mean}
\right\}.
\]
These configurations are then held fixed and applied to all unseen datasets without dataset-specific tuning. Other fixed framework configurations are specified in Appendices~\ref{appendix:pseudo_ood_generation}, \ref{appendix:task_filtering}, \ref{appendix:anomaly_scoring} and
\ref{Appendix:Configuration Search}.

\subsection{Evaluation Metrics}
\label{Appendix:EvaluationMetrics}
Based on the per-dataset AUCROC and AUCPR results, we evaluate the relative performance of each method using several aggregate metrics beyond the average score and average rank. 
This provides a more robust comparison across the large and heterogeneous set of evaluation datasets. All aggregate metrics are computed independently for
AUCROC and AUCPR.
\paragraph{Elo score.}
We use an Elo-style rating~\citep{elo1967proposed} to summarize pairwise
performance across datasets. For each dataset, every pair of methods is
compared according to its detection performance, assigning one point for a
win, one-half point for a tie, and zero points for a loss. We estimate the
relative strength $\beta_i$ of each method $i$ from these pairwise outcomes
using the Bradley--Terry model~\citep{10.1214/aos/1079120141} and map the resulting strength to the Elo scale
as
\begin{equation}
    R_i = 1000 + 400\log_{10}\beta_i.
\end{equation}
This parameterization follows the standard Elo interpretation, under which
the expected pairwise score of method $i$ against method $j$ is
\begin{equation}
    E_{ij}
    =
    \frac{1}
    {1 + 10^{(R_j-R_i)/400}}.
\end{equation}
A higher Elo score therefore indicates stronger and more consistent pairwise
performance across the evaluation datasets. 
%We compute the Elo ratings independently for each random seed and report their mean across five seeds.

\paragraph{Top-3 ratio.}
The Top-3 ratio measures how frequently a method ranks among the three
best-performing methods across datasets. Specifically, we rank all methods
within each dataset according to the corresponding evaluation metric and
compute
\begin{equation}
    \operatorname{Top3}(i)
    =
    \frac{1}{N}
    \sum_{n=1}^{N}
    \1
    \left[
        r_{i,n} \leq 3
    \right],
\end{equation}
where $r_{i,n}$ denotes the rank of method $i$ on dataset $n$, and $N$ is
the total number of datasets. A larger Top-3 ratio indicates that a method
more frequently achieves competitive performance across datasets.

\paragraph{Rank distribution.}
As shown in Figure~\ref{fig:aucroc_elo_rank_dist} and Figure~\ref{fig:aucpr_elo_rank_dist}, we further report the distribution of per-dataset ranks for each method.
For each dataset, methods are ranked according to their detection performance,
and the resulting ranks are collected across all 790 ODDBench datasets.
We visualize these distributions using box plots, which provide a
complementary view of both the typical rank and its variability across
heterogeneous datasets.

\section{Further Experiments}
\label{App:FurtherExperiments}
\subsection{Extended Overall Performance Analysis}
\label{app:ExtendedOverallPerformance}

\begin{table*}[t]
\centering
\caption{Extended results corresponding to Table~\ref{tab:oddbench-top5-aucroc}. Total Rank is computed as the mean rank among 31 methods across eight evaluation metrics: average AUCROC and AUCPR, average AUCROC and AUCPR ranks, Elo scores for AUCROC and AUCPR, and Top-3 Ratios for AUCROC and AUCPR. Our method achieves the strongest overall performance across these metrics.}
% \vspace{-10pt}
\label{tab:oddbench-31-baselines}
\setlength{\tabcolsep}{3.2pt}
\scriptsize
\resizebox{\textwidth}{!}{%
\begin{tabular}{llrrrrrrrrr}
\toprule
\multicolumn{1}{c}{\textbf{}} & \textbf{Methods}
& \begin{tabular}[c]{@{}c@{}}\textbf{Avg.}\\\textbf{AUCROC $\uparrow$}\end{tabular}
& \begin{tabular}[c]{@{}c@{}}\textbf{Avg. Rank}\\\textbf{(AUCROC) $\downarrow$}\end{tabular}
& \begin{tabular}[c]{@{}c@{}}\textbf{Avg.}\\\textbf{AUCPR $\uparrow$}\end{tabular}
& \begin{tabular}[c]{@{}c@{}}\textbf{Avg. Rank}\\\textbf{(AUCPR) $\downarrow$}\end{tabular}
& \begin{tabular}[c]{@{}c@{}}\textbf{ELO}\\\textbf{(AUCROC) $\uparrow$}\end{tabular}
& \begin{tabular}[c]{@{}c@{}}\textbf{ELO}\\\textbf{(AUCPR) $\uparrow$}\end{tabular}
& \begin{tabular}[c]{@{}c@{}}\textbf{Top-3 Ratio}\\\textbf{(AUCROC) $\uparrow$}\end{tabular}
& \begin{tabular}[c]{@{}c@{}}\textbf{Top-3 Ratio}\\\textbf{(AUCPR) $\uparrow$}\end{tabular}
& \begin{tabular}[c]{@{}c@{}}\textbf{Total}\\\textbf{Rank $\downarrow$}\end{tabular} \\
\midrule
\multirow{25}{*}{\rotatebox[origin=c]{90}{\textbf{Conventional}}} & DTE-NP & \textcolor{blue}{\underline{$76.96_{\pm 0.01}\;(2)$}} & \textcolor{blue}{\underline{$9.09_{\pm 0.03}\;(2)$}} & \textcolor{blue}{\underline{$43.63_{\pm 0.01}\;(2)$}} & \textcolor{blue}{\underline{$9.37_{\pm 0.02}\;(2)$}} & \textcolor{blue}{\underline{$1167.7_{\pm 1.0}\;(2)$}} & \textcolor{blue}{\underline{$1160.1_{\pm 0.7}\;(2)$}} & \textcolor{blue}{\underline{$25.3_{\pm 0.6}\;(2)$}} & $20.3_{\pm 0.5}\;(4)$ & \textcolor{blue}{\underline{$2.25\;(2)$}} \\
& KNN & \textcolor{green!60!black}{$76.25_{\pm 0.01}\;(3)$} & \textcolor{green!60!black}{$9.83_{\pm 0.03}\;(3)$} & $42.41_{\pm 0.01}\;(5)$ & \textcolor{green!60!black}{$10.38_{\pm 0.02}\;(3)$} & \textcolor{green!60!black}{$1144.7_{\pm 0.9}\;(3)$} & \textcolor{green!60!black}{$1130.5_{\pm 0.7}\;(3)$} & $16.8_{\pm 0.4}\;(8)$ & $13.7_{\pm 0.6}\;(10)$ & \textcolor{green!60!black}{$4.75\;(3)$} \\
& FeatureBagging & $75.59_{\pm 0.16}\;(4)$ & $12.14_{\pm 0.20}\;(4)$ & $39.30_{\pm 0.56}\;(10)$ & $13.19_{\pm 0.21}\;(10)$ & $1086.9_{\pm 4.8}\;(4)$ & $1061.4_{\pm 5.2}\;(10)$ & $19.0_{\pm 1.1}\;(6)$ & $18.0_{\pm 1.1}\;(8)$ & $7.00\;(6)$ \\
& LOF & $74.17_{\pm 0.02}\;(6)$ & $12.83_{\pm 0.03}\;(7)$ & $39.97_{\pm 0.02}\;(9)$ & $12.97_{\pm 0.04}\;(8)$ & $1068.9_{\pm 0.9}\;(7)$ & $1066.0_{\pm 1.1}\;(8)$ & $13.1_{\pm 0.4}\;(10)$ & $14.3_{\pm 0.5}\;(9)$ & $8.00\;(9)$ \\
& DTE-C & $73.46_{\pm 0.05}\;(9)$ & $13.02_{\pm 0.11}\;(8)$ & $38.16_{\pm 0.09}\;(11)$ & $13.75_{\pm 0.15}\;(12)$ & $1063.9_{\pm 3.0}\;(8)$ & $1046.7_{\pm 3.6}\;(12)$ & $10.9_{\pm 0.6}\;(13)$ & $9.4_{\pm 0.4}\;(15)$ & $11.00\;(11)$ \\
& DTE-IG & $73.34_{\pm 0.43}\;(11)$ & $13.09_{\pm 0.30}\;(9)$ & $41.92_{\pm 0.34}\;(6)$ & $12.15_{\pm 0.23}\;(5)$ & $1061.4_{\pm 7.0}\;(9)$ & $1085.1_{\pm 5.6}\;(6)$ & $15.5_{\pm 1.3}\;(9)$ & $18.5_{\pm 0.8}\;(7)$ & $7.75\;(7)$ \\
& OCSVM & $71.88_{\pm 0.01}\;(13)$ & $13.88_{\pm 0.02}\;(12)$ & $37.18_{\pm 0.01}\;(12)$ & $14.03_{\pm 0.04}\;(13)$ & $1042.9_{\pm 0.7}\;(12)$ & $1039.9_{\pm 1.1}\;(13)$ & $8.5_{\pm 0.4}\;(17)$ & $7.2_{\pm 0.2}\;(18)$ & $13.75\;(14)$ \\
& ICL & $71.82_{\pm 0.10}\;(14)$ & $14.02_{\pm 0.08}\;(13)$ & $40.07_{\pm 0.15}\;(8)$ & $13.11_{\pm 0.15}\;(9)$ & $1040.8_{\pm 1.6}\;(14)$ & $1063.3_{\pm 3.5}\;(9)$ & $11.5_{\pm 0.3}\;(11)$ & $12.7_{\pm 0.4}\;(11)$ & $11.12\;(12)$ \\
& IForest & $71.26_{\pm 0.05}\;(15)$ & $15.07_{\pm 0.05}\;(15)$ & $31.26_{\pm 0.10}\;(20)$ & $16.53_{\pm 0.10}\;(16)$ & $1021.1_{\pm 1.4}\;(15)$ & $986.2_{\pm 2.4}\;(16)$ & $4.7_{\pm 0.4}\;(27)$ & $4.4_{\pm 0.2}\;(27)$ & $18.88\;(18)$ \\
& MCD & $69.10_{\pm 0.19}\;(17)$ & $16.55_{\pm 0.15}\;(17)$ & $31.10_{\pm 0.22}\;(21)$ & $17.88_{\pm 0.12}\;(19)$ & $982.3_{\pm 3.4}\;(17)$ & $950.2_{\pm 2.7}\;(19)$ & $11.0_{\pm 0.6}\;(12)$ & $10.4_{\pm 0.4}\;(12)$ & $16.75\;(16)$ \\
& VAE & $70.20_{\pm 0.04}\;(16)$ & $15.76_{\pm 0.06}\;(16)$ & $35.34_{\pm 0.06}\;(15)$ & $15.99_{\pm 0.10}\;(15)$ & $998.7_{\pm 1.5}\;(16)$ & $993.8_{\pm 2.2}\;(15)$ & $7.6_{\pm 0.3}\;(19)$ & $6.4_{\pm 0.6}\;(22)$ & $16.75\;(16)$ \\
& PlanarFlow & $68.11_{\pm 0.40}\;(18)$ & $17.11_{\pm 0.13}\;(18)$ & $32.95_{\pm 0.17}\;(17)$ & $17.53_{\pm 0.04}\;(17)$ & $968.3_{\pm 3.5}\;(18)$ & $958.6_{\pm 1.3}\;(17)$ & $10.8_{\pm 0.7}\;(14)$ & $10.1_{\pm 0.6}\;(13)$ & $16.50\;(15)$ \\
& SLAD & $66.87_{\pm 0.05}\;(21)$ & $17.84_{\pm 0.03}\;(19)$ & $33.49_{\pm 0.09}\;(16)$ & $17.59_{\pm 0.03}\;(18)$ & $948.9_{\pm 0.6}\;(19)$ & $955.7_{\pm 0.7}\;(18)$ & $6.6_{\pm 0.2}\;(22)$ & $7.1_{\pm 0.1}\;(20)$ & $19.12\;(19)$ \\
& COPOD & $67.20_{\pm 0.01}\;(20)$ & $18.37_{\pm 0.04}\;(20)$ & $27.99_{\pm 0.01}\;(29)$ & $19.51_{\pm 0.06}\;(28)$ & $941.2_{\pm 1.1}\;(20)$ & $913.2_{\pm 1.5}\;(27)$ & $3.9_{\pm 0.3}\;(28)$ & $4.2_{\pm 0.2}\;(28)$ & $25.00\;(25)$ \\
& PCA & $64.50_{\pm 0.03}\;(26)$ & $18.49_{\pm 0.03}\;(21)$ & $29.14_{\pm 0.02}\;(26)$ & $18.68_{\pm 0.03}\;(22)$ & $936.2_{\pm 0.7}\;(22)$ & $931.7_{\pm 0.7}\;(22)$ & $5.9_{\pm 0.1}\;(24)$ & $4.7_{\pm 0.2}\;(26)$ & $23.62\;(23)$ \\
& ECOD & $68.10_{\pm 0.01}\;(19)$ & $18.52_{\pm 0.06}\;(22)$ & $28.25_{\pm 0.02}\;(28)$ & $19.01_{\pm 0.08}\;(23)$ & $938.2_{\pm 1.4}\;(21)$ & $925.9_{\pm 1.8}\;(23)$ & $6.4_{\pm 0.3}\;(23)$ & $5.9_{\pm 0.3}\;(23)$ & $22.75\;(22)$ \\
& HBOS & $66.58_{\pm 0.02}\;(22)$ & $18.62_{\pm 0.06}\;(23)$ & $29.90_{\pm 0.03}\;(24)$ & $18.34_{\pm 0.05}\;(20)$ & $934.3_{\pm 1.5}\;(23)$ & $941.5_{\pm 1.2}\;(20)$ & $6.9_{\pm 0.3}\;(21)$ & $7.1_{\pm 0.4}\;(19)$ & $21.50\;(21)$ \\
& DIF & $65.16_{\pm 0.02}\;(24)$ & $18.87_{\pm 0.04}\;(25)$ & $28.53_{\pm 0.06}\;(27)$ & $19.31_{\pm 0.04}\;(26)$ & $929.0_{\pm 0.9}\;(25)$ & $918.1_{\pm 1.0}\;(25)$ & $3.5_{\pm 0.2}\;(29)$ & $3.7_{\pm 0.2}\;(29)$ & $26.25\;(28)$ \\
& GANomaly & $66.02_{\pm 0.58}\;(23)$ & $18.65_{\pm 0.25}\;(24)$ & $31.59_{\pm 0.38}\;(19)$ & $18.38_{\pm 0.16}\;(21)$ & $930.7_{\pm 6.0}\;(24)$ & $937.6_{\pm 3.7}\;(21)$ & $8.6_{\pm 0.8}\;(16)$ & $9.0_{\pm 0.7}\;(16)$ & $20.50\;(20)$ \\
& DROCC & $61.17_{\pm 0.63}\;(28)$ & $19.53_{\pm 0.20}\;(26)$ & $29.19_{\pm 0.33}\;(25)$ & $19.05_{\pm 0.23}\;(24)$ & $909.1_{\pm 5.3}\;(26)$ & $921.3_{\pm 6.3}\;(24)$ & $5.6_{\pm 0.4}\;(25)$ & $5.6_{\pm 0.2}\;(24)$ & $25.25\;(27)$ \\
& GOAD & $60.34_{\pm 0.46}\;(31)$ & $19.99_{\pm 0.23}\;(27)$ & $30.94_{\pm 0.31}\;(23)$ & $19.15_{\pm 0.27}\;(25)$ & $896.0_{\pm 5.9}\;(27)$ & $917.6_{\pm 6.6}\;(26)$ & $7.5_{\pm 0.5}\;(20)$ & $6.4_{\pm 0.4}\;(21)$ & $25.00\;(25)$ \\
& DDPM & $64.81_{\pm 0.12}\;(25)$ & $20.48_{\pm 0.09}\;(29)$ & $30.98_{\pm 0.11}\;(22)$ & $20.03_{\pm 0.10}\;(29)$ & $883.9_{\pm 2.3}\;(29)$ & $896.4_{\pm 2.4}\;(30)$ & $3.5_{\pm 0.5}\;(29)$ & $3.4_{\pm 0.3}\;(30)$ & $27.88\;(29)$ \\
& LODA & $61.85_{\pm 0.51}\;(27)$ & $20.42_{\pm 0.20}\;(28)$ & $27.40_{\pm 0.37}\;(30)$ & $20.14_{\pm 0.24}\;(30)$ & $890.6_{\pm 5.2}\;(28)$ & $897.9_{\pm 6.2}\;(29)$ & $5.0_{\pm 1.0}\;(26)$ & $4.9_{\pm 1.0}\;(25)$ & $27.88\;(29)$ \\
& DeepSVDD & $60.80_{\pm 0.35}\;(29)$ & $20.86_{\pm 0.22}\;(30)$ & $32.10_{\pm 0.19}\;(18)$ & $19.33_{\pm 0.28}\;(27)$ & $872.1_{\pm 5.7}\;(30)$ & $911.9_{\pm 7.0}\;(28)$ & $8.4_{\pm 0.5}\;(18)$ & $9.8_{\pm 0.4}\;(14)$ & $24.25\;(24)$ \\
& DAGMM & $60.50_{\pm 1.19}\;(30)$ & $22.54_{\pm 0.47}\;(31)$ & $25.24_{\pm 1.28}\;(31)$ & $22.75_{\pm 0.40}\;(31)$ & $831.8_{\pm 13.5}\;(31)$ & $826.1_{\pm 11.3}\;(31)$ & $3.0_{\pm 0.3}\;(31)$ & $3.0_{\pm 0.5}\;(31)$ & $30.88\;(31)$ \\
\midrule
\multirow{5}{*}{\rotatebox[origin=c]{90}{\textbf{TFM-based}}} & TACTIC-Clean & $75.24_{\pm 0.01}\;(5)$ & $12.22_{\pm 0.05}\;(5)$ & $41.70_{\pm 0.02}\;(7)$ & $11.66_{\pm 0.05}\;(4)$ & $1086.1_{\pm 1.4}\;(6)$ & $1100.2_{\pm 1.3}\;(4)$ & $17.9_{\pm 0.4}\;(7)$ & $19.0_{\pm 0.5}\;(6)$ & $5.50\;(4)$ \\
& TACTIC-Cont & $73.42_{\pm 0.01}\;(10)$ & $13.74_{\pm 0.03}\;(11)$ & $36.36_{\pm 0.02}\;(14)$ & $14.59_{\pm 0.04}\;(14)$ & $1048.0_{\pm 0.7}\;(11)$ & $1028.0_{\pm 1.0}\;(14)$ & $9.9_{\pm 0.3}\;(15)$ & $8.1_{\pm 0.2}\;(17)$ & $13.25\;(13)$ \\
& OUTFORMER & $74.16_{\pm 0.08}\;(7)$ & $13.32_{\pm 0.05}\;(10)$ & \textcolor{green!60!black}{$43.00_{\pm 0.13}\;(3)$} & $12.19_{\pm 0.02}\;(6)$ & $1059.1_{\pm 1.4}\;(10)$ & $1087.3_{\pm 0.5}\;(5)$ & $19.9_{\pm 0.4}\;(4)$ & \textcolor{green!60!black}{$21.5_{\pm 0.3}\;(3)$} & $6.00\;(5)$ \\
& FoMo-0D & $72.65_{\pm 0.04}\;(12)$ & $14.03_{\pm 0.09}\;(14)$ & $42.70_{\pm 0.13}\;(4)$ & $12.44_{\pm 0.10}\;(7)$ & $1041.1_{\pm 2.1}\;(13)$ & $1080.7_{\pm 2.4}\;(7)$ & $19.1_{\pm 0.5}\;(5)$ & \textcolor{blue}{\underline{$23.4_{\pm 0.4}\;(2)$}} & $8.00\;(9)$ \\
% \midrule
& TabPFN-Extension & $73.62_{\pm 0.11}\;(8)$ & $12.29_{\pm 0.07}\;(6)$ & $37.09_{\pm 0.07}\;(13)$ & $13.43_{\pm 0.07}\;(11)$ & $1086.4_{\pm 1.6}\;(5)$ & $1058.8_{\pm 1.3}\;(11)$ & \textcolor{green!60!black}{$24.8_{\pm 0.3}\;(3)$} & $19.1_{\pm 0.3}\;(5)$ & $7.75\;(7)$ \\
\midrule
% & \textbf{\method (ours)} & \textcolor{red}{\textbf{$80.24_{\pm 0.07}\;(1)$}} & \textcolor{red}{\textbf{$8.39_{\pm 0.10}\;(1)$}} & \textcolor{red}{\textbf{$50.43_{\pm 0.10}\;(1)$}} & \textcolor{red}{\textbf{$7.82_{\pm 0.06}\;(1)$}} & \textcolor{red}{\textbf{$1189.5_{\pm 3.2}\;(1)$}} & \textcolor{red}{\textbf{$1208.1_{\pm 2.2}\;(1)$}} & \textcolor{red}{\textbf{$44.9_{\pm 1.3}\;(1)$}} & \textcolor{red}{\textbf{$46.3_{\pm 1.6}\;(1)$}} & \textcolor{red}{\textbf{$1.00\;(1)$}} \\
& \textbf{\method}

& \textcolor{red}{$\bm{80.24_{\pm 0.07}\;(1)}$}

& \textcolor{red}{$\bm{8.39_{\pm 0.10}\;(1)}$}

& \textcolor{red}{$\bm{50.43_{\pm 0.10}\;(1)}$}

& \textcolor{red}{$\bm{7.82_{\pm 0.06}\;(1)}$}

& \textcolor{red}{$\bm{1189.5_{\pm 3.2}\;(1)}$}

& \textcolor{red}{$\bm{1208.1_{\pm 2.2}\;(1)}$}

& \textcolor{red}{$\bm{44.9_{\pm 1.3}\;(1)}$}

& \textcolor{red}{$\bm{46.3_{\pm 1.6}\;(1)}$}

& \textcolor{red}{$\bm{1.00\;(1)}$} \\
\bottomrule
\end{tabular}%%
}
% \vspace{-5pt}
% \parbox{\textwidth}{\scriptsize\textit{Notes.} Values are mean $\pm$ standard deviation across five seeds. AUC and Top-3 Ratio are reported in \%. Parentheses denote relative ranks among all 31 models. Total Rank is the mean of the eight parenthetical ranks. Best, second-best, and third-best results are shown in \textcolor{red}{\textbf{red bold}}, \textcolor{blue}{\underline{blue underline}}, and \textcolor{green!60!black}{green}, respectively.}
\end{table*}
\begin{figure*}[t]
    \centering
    \includegraphics[width=\linewidth]{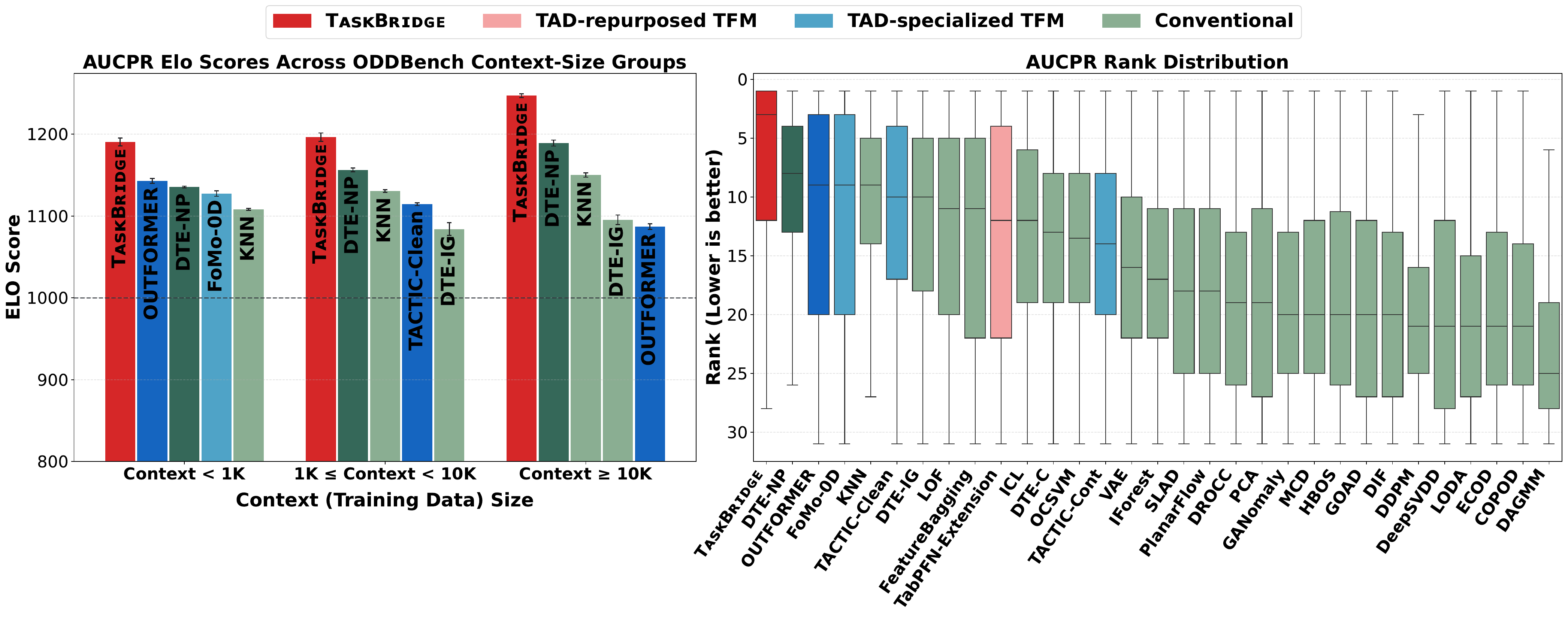}
    % \vspace{-20pt}
    % \caption{Elo score comparision between total 31 baselines. (Extended version of right-side figure of Figure~\ref{fig:overall_comparison_intro}.) \method achieves the highest Elo scores in both AUCROC and AUCPR among all baselines, demonstrating our method steadily get robust ad performance across diverse table datasets.}
    \caption{AUCPR performance comparison on ODDBench. (Extended results corresponding to Figure~\ref{fig:aucroc_elo_rank_dist}.)
(Left) Elo scores across three dataset groups
($<1$K, $1$K--$10$K, and $\geq10$K), showing the five methods with the highest average Elo scores in each group.
(Right) Per-dataset rank distributions across all baselines; colors follow
Figure~\ref{fig:overall_comparison_intro}.
\method also achieves the highest Elo score across all dataset groups with different context size and the strongest overall rank distribution for AUCPR. }
\label{fig:aucpr_elo_rank_dist}
   \vspace{-10pt}
\end{figure*}
This section provides an extended evaluation of \method against a diverse set of tabular anomaly detection baselines across 790 ODDBench datasets, complementing the results reported in the main paper (Section~\ref{ssec:overallperformance}).
% During evaluation, TACTIC, TabPFN-Extension, and VAE fail to produce results on a small subset of datasets due primarily to out-of-memory errors or dataset--model incompatibilities. To maintain a common set of datasets for aggregate comparison, these missing results are imputed using the average performance of the remaining successfully evaluated methods on the corresponding dataset. Actually, for partial ODDBench datasets including all succeed baselines (\~720 datasets), \method keep obtains best performance for all metrics. 

Table~\ref{tab:oddbench-31-baselines}, extending Table~\ref{tab:oddbench-top5-aucroc} in the main paper, compares \method with 31 baselines using eight complementary evaluation metrics: average performance, average rank, Elo score, and Top3-ratio, each computed for AUCROC and AUCPR.
Details of the Elo score and Top3-ratio are provided in Appendix~\ref{Appendix:EvaluationMetrics}. Per-dataset results for all methods are provided in the supplementary material due to the large number of evaluation datasets and baselines.

During evaluation, TACTIC, TabPFN-Extension, and VAE fail to produce results on a small subset of datasets due primarily to out-of-memory errors or dataset--model incompatibilities. To maintain a common set of datasets for aggregate comparison, these missing results are imputed using the average performance of the remaining successfully evaluated methods on the corresponding dataset.
\begin{wraptable}[11]{r}{0.42\textwidth}
    \vspace{-5pt}
    \centering
    \caption{Elo scores on the subset of ODDBench datasets for which all 31 methods successfully produce results. The five methods with the highest AUCROC Elo scores are shown.}
    \label{tab:BenchmarkCompleteDatasets}
    \vspace{-5pt}
    \small
    \begin{tabular}{l|c|c}
        \toprule
        Method & AUCROC & AUCPR \\
        \midrule
        \textbf{\method} & \red{\bm{$1188_{\pm2.6}$}} & \red{\bm{$1207_{\pm2.8}$}} \\
        DTE-NP &  \textcolor{blue}{\underline{$1167_{\pm1.1}$}} & \textcolor{blue}{\underline{$1157_{\pm1.0}$}} \\
        KNN & \textcolor{green!60!black}{$1142_{\pm0.9}$} & \textcolor{green!60!black}{$1127_{\pm0.9}$} \\
        TACTIC-Clean & $1103_{\pm1.2}$ & $1106_{\pm1.4}$ \\
        FeatureBagging & $1084_{\pm5.1}$ & $1057_{\pm5.6}$ \\
        \bottomrule
    \end{tabular}
    \vspace{-10pt}
\end{wraptable}
As a complementary check, Table~\ref{tab:BenchmarkCompleteDatasets} reports AUCROC and AUCPR Elo scores on the subset of approximately 720 ODDBench datasets for which all 31 methods successfully produce results, without requiring any imputation. \method still achieves the strongest performance on both metrics.
\begin{figure*}[t]
    \centering
    \includegraphics[width=\linewidth]{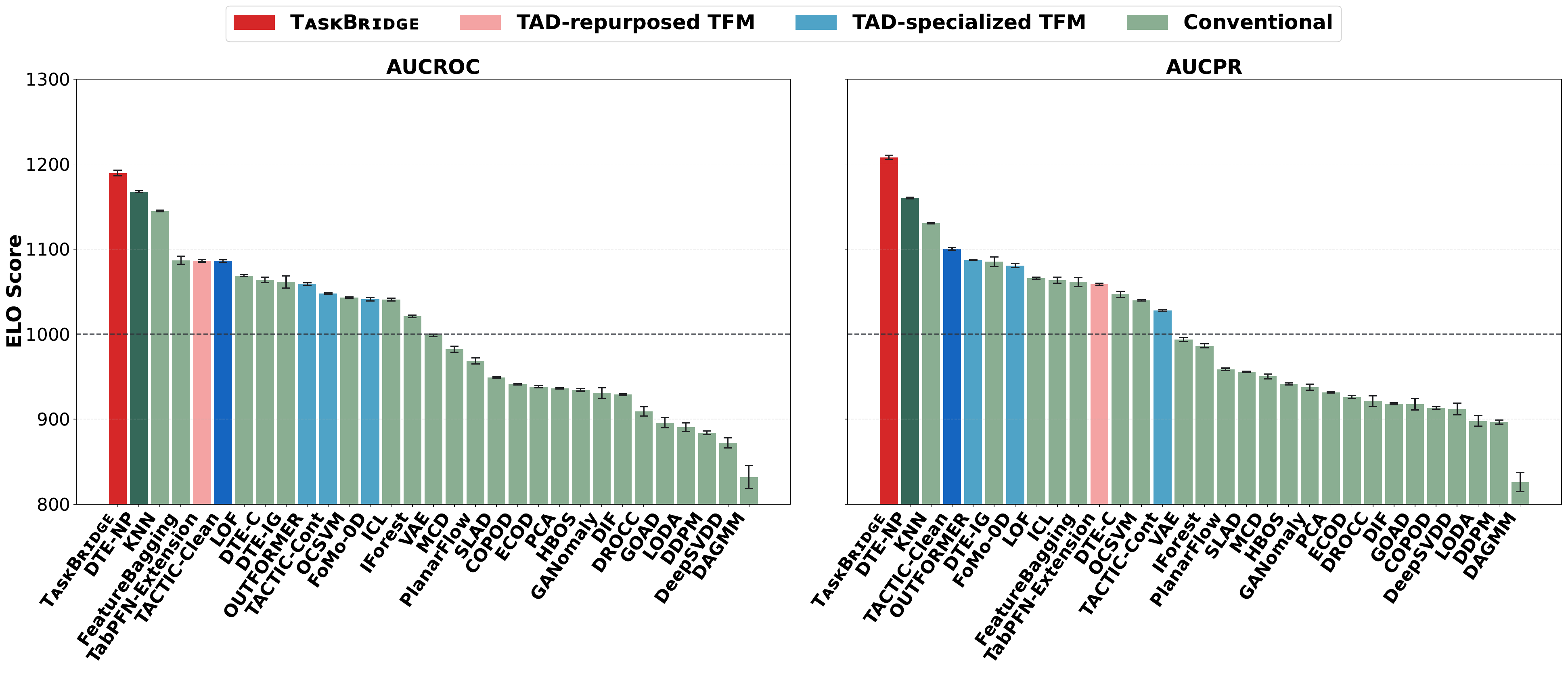}
    % \vspace{-20pt}
    % \caption{Elo score comparision between total 31 baselines. (Extended version of right-side figure of Figure~\ref{fig:overall_comparison_intro}.) \method achieves the highest Elo scores in both AUCROC and AUCPR among all baselines, demonstrating our method steadily get robust ad performance across diverse table datasets.}
    \caption{Elo score comparison across all baselines on all 790 ODDBench datasets. \method achieves the highest Elo scores for both AUCROC and AUCPR, demonstrating consistently strong performance across diverse tabular datasets.}
\label{fig:elo_aucpr_aucroc_total}
    \vspace{-10pt}
\end{figure*}

\begin{figure*}[!t]
    \centering
    \includegraphics[width=\linewidth]{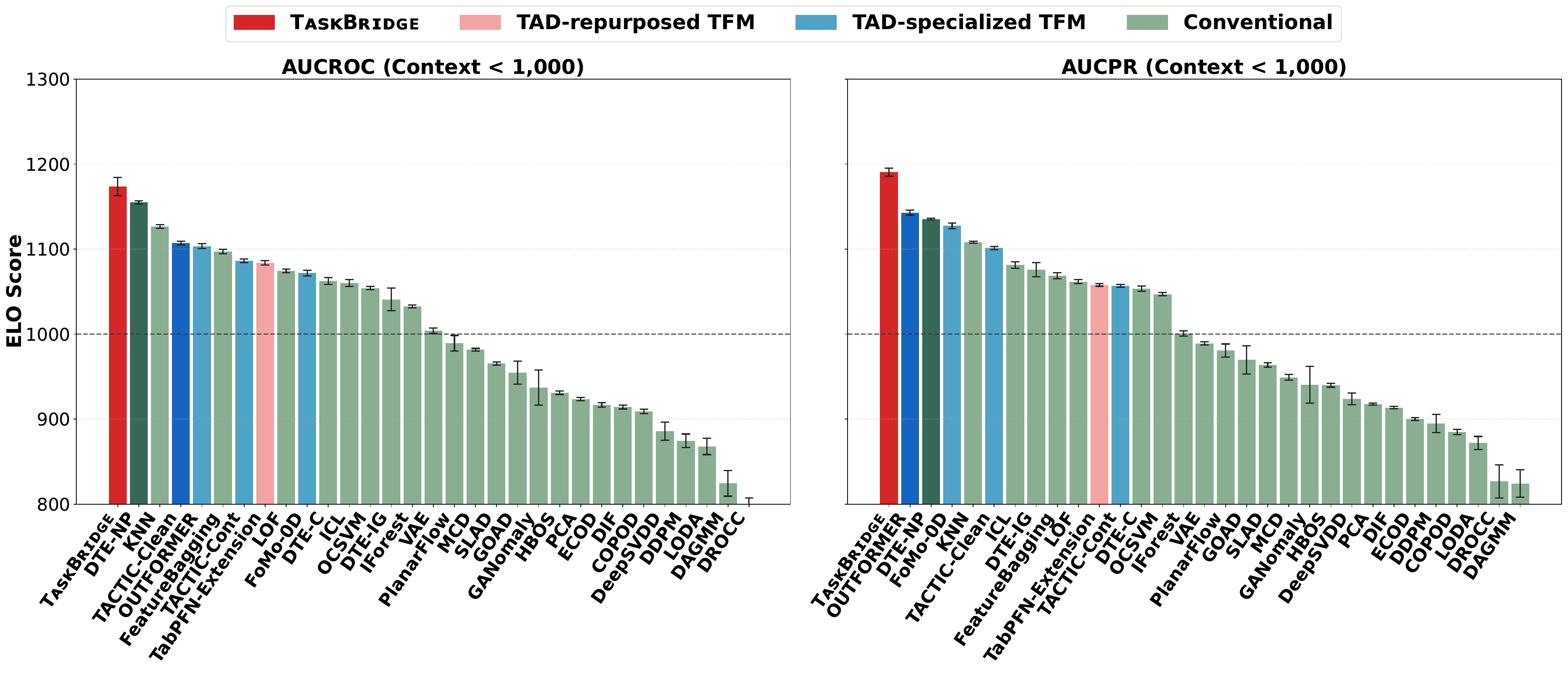}
    % \vspace{-20pt}
    % \caption{Elo score comparision between total 31 baselines. (Extended version of right-side figure of Figure~\ref{fig:overall_comparison_intro}.) \method achieves the highest Elo scores in both AUCROC and AUCPR among all baselines, demonstrating our method steadily get robust ad performance across diverse table datasets.}
    \caption{Elo score comparison across all baselines on partial ODDBench datasets with context (training data) size below $1$K. \method achieves the highest Elo scores for both AUCROC and AUCPR, demonstrating strong performance even with limited normality-anchoring evidence.}
\label{fig:elo_aucpr_aucroc_1000}
    \vspace{-10pt}
\end{figure*}

\begin{figure*}[!t]
    \centering
    \includegraphics[width=\linewidth]{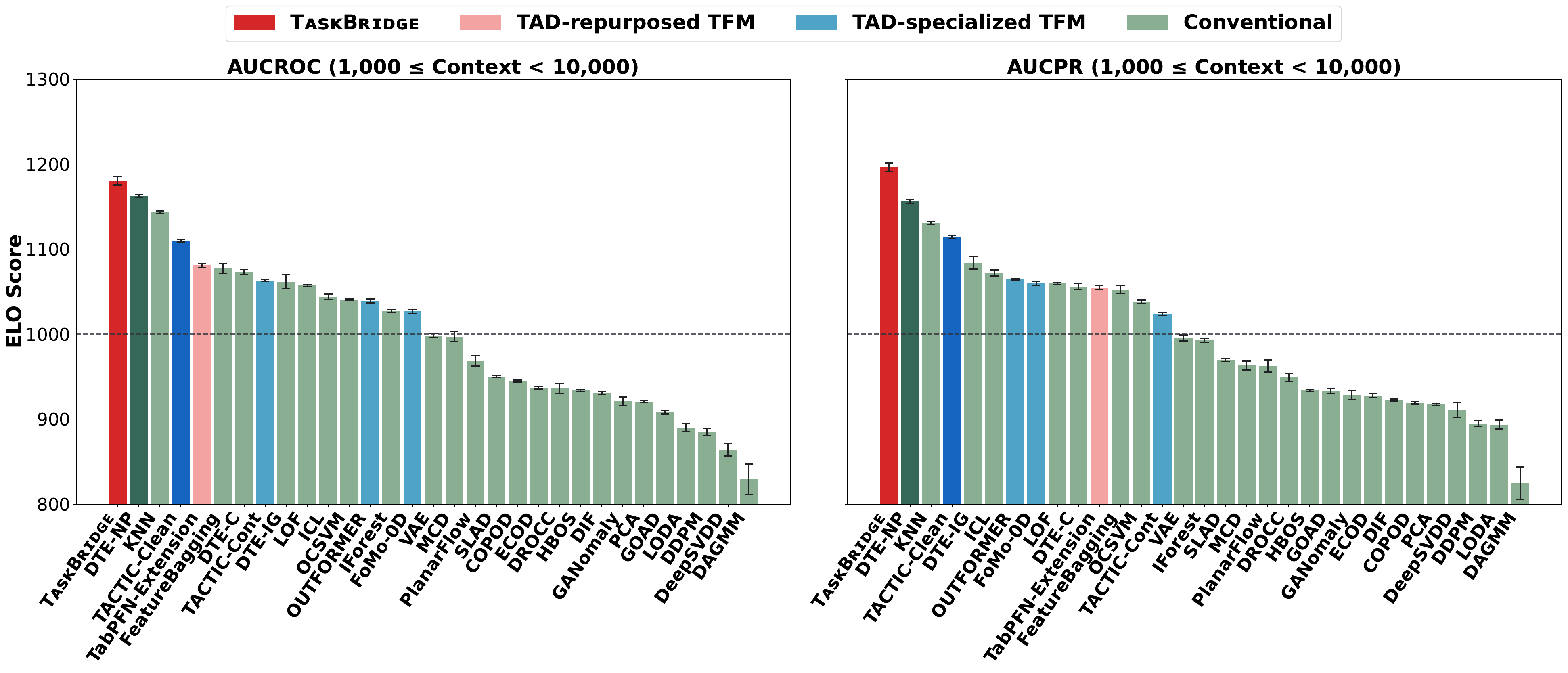}
    % \vspace{-20pt}
    % \caption{Elo score comparision between total 31 baselines. (Extended version of right-side figure of Figure~\ref{fig:overall_comparison_intro}.) \method achieves the highest Elo scores in both AUCROC and AUCPR among all baselines, demonstrating our method steadily get robust ad performance across diverse table datasets.}
    \caption{Elo score comparison across all baselines on partial ODDBench datasets with context (training data) size between $1$K and $10$K. \method achieves the highest Elo scores for both AUCROC and AUCPR.}
\label{fig:elo_aucpr_aucroc_1000_9999}
    \vspace{-10pt}
\end{figure*}

\begin{figure*}[!t]
    \centering
    \includegraphics[width=\linewidth]{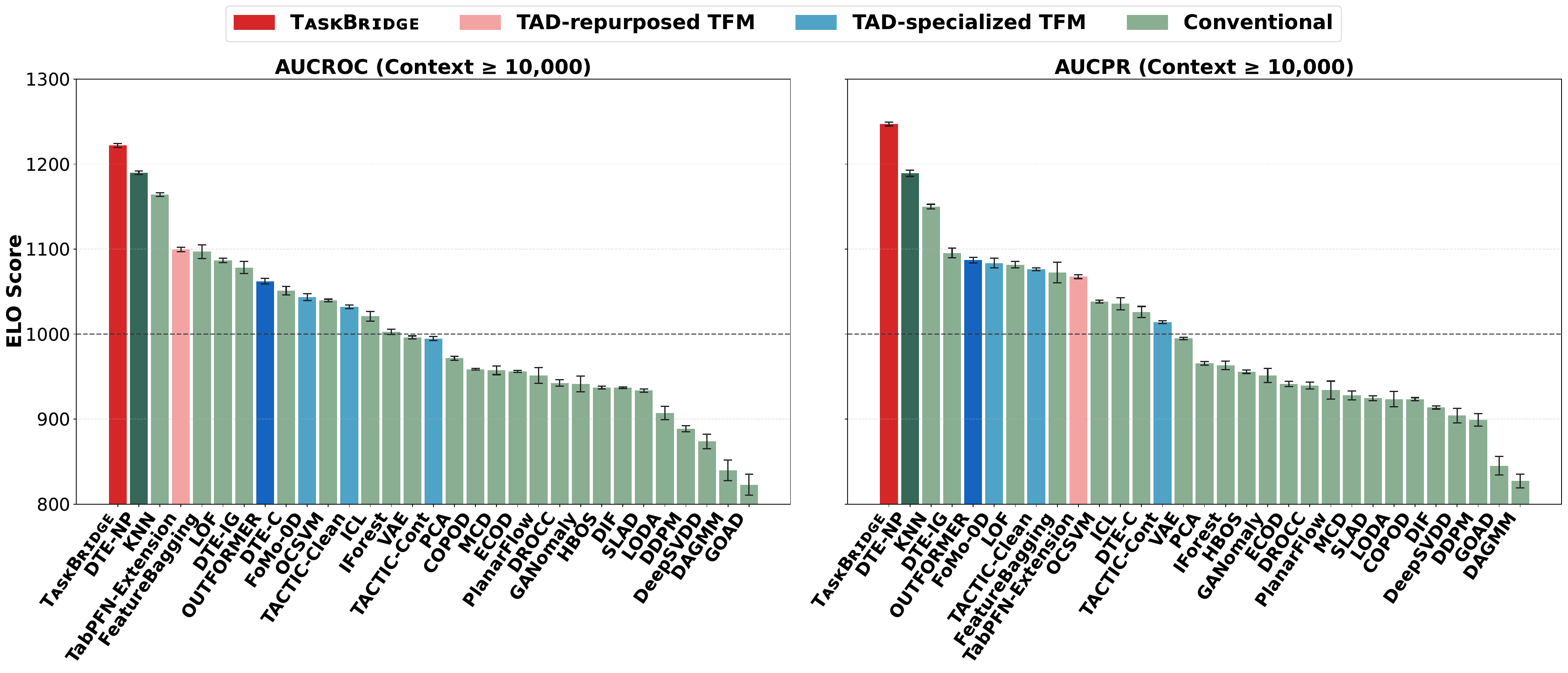}
    % \vspace{-20pt}
    % \caption{Elo score comparision between total 31 baselines. (Extended version of right-side figure of Figure~\ref{fig:overall_comparison_intro}.) \method achieves the highest Elo scores in both AUCROC and AUCPR among all baselines, demonstrating our method steadily get robust ad performance across diverse table datasets.}
    \caption{Elo score comparison across all baselines on partial ODDBench datasets with context (training data) size of at least $10$K. \method achieves the highest Elo scores for both AUCROC and AUCPR, maintaining strong performance with large amounts of normality-anchoring evidence.}
\label{fig:elo_aucpr_aucroc_10000}
\vspace{-10pt}
\end{figure*}
Figure~\ref{fig:aucpr_elo_rank_dist} provides the corresponding AUCPR analysis
to Figure~\ref{fig:aucroc_elo_rank_dist} in the main paper, reporting Elo scores
across ODDBench dataset groups stratified by context (training data) size and per-dataset rank distributions. We further
provide the full Elo score comparisons for all datasets and each context-size
group in Figures~\ref{fig:elo_aucpr_aucroc_total}--
% \ref{fig:elo_aucpr_aucroc_1000},
% \ref{fig:elo_aucpr_aucroc_1000_9999}, and
\ref{fig:elo_aucpr_aucroc_10000}.

% Required package:
% \usepackage{booktabs}

% ============================================================
% Remove-1
% ============================================================
\begin{table}[t]
\centering
\caption{Ablation results when removing a single virtual task from the set of task templates. Single denotes single-attribute, Localized/Global denote localized/global (subspace) projection, Prototype denotes prototype-based organization, and Distributional denotes distributional extremity.}
% \vspace{-5pt}
\label{tab:remove1}
% \resizebox{\linewidth}{!}{%
\small
\setlength{\tabcolsep}{3pt}
\begin{tabular}{l|ccccc}
\toprule
\textbf{Remove-1}
& \textbf{Single}
& \textbf{Localized}
& \textbf{Global}
& \textbf{Prototype}
& \textbf{Distributional} \\
\midrule
Avg. AUCROC
& 80.08 & 79.58 & 79.98 & 79.94 & 80.07 \\
Avg. AUCPR
& 49.76 & 49.26 & 50.19 & 49.77 & 49.99 \\
\bottomrule
\end{tabular}%
%\vspace{-5pt}
% }
\end{table}

% ============================================================
% Remove-2
% ============================================================
\begin{table}[!t]
\centering
\caption{Ablation results when removing two virtual tasks from the set of task templates. S denotes single-attribute, L/G denote localized/global (subspace) projection, P denotes prototype-based organization, and D denotes distributional extremity.}
% \vspace{-5pt}
\label{tab:remove2}
%\resizebox{\linewidth}{!}{%
\small
\setlength{\tabcolsep}{3pt}
\begin{tabular}{l|cccccccccc}
\toprule
\textbf{Remove-2}
& \textbf{S+L}
& \textbf{S+G}
& \textbf{S+P}
& \textbf{S+D}
& \textbf{L+G}
& \textbf{L+P}
& \textbf{L+D}
& \textbf{G+P}
& \textbf{G+D}
& \textbf{P+D} \\
\midrule
% Avg. AUC-ROC
% & -- & -- & -- & -- & --
% & -- & -- & -- & -- & -- \\
% Avg. AUC-PR
% & -- & -- & -- & -- & --
% & -- & -- & -- & -- & -- \\
Avg. AUCROC
& 79.35 & 80.11 & 79.65 & 79.72 & 79.30
& 78.95 & 79.13 & 79.62 & 79.72 & 79.60 \\
Avg. AUCPR
& 47.19 & 49.16 & 48.42 & 48.86 & 48.57
& 47.82 & 48.16 & 49.17 & 49.63 & 48.88 \\
\bottomrule
\end{tabular}%
%}
% \vspace{-5pt}
\end{table}

% ============================================================
% Remove-3
% ============================================================
\begin{table}[!t]
\centering
\caption{Ablation results when removing three virtual tasks from the set of task templates. S denotes single-attribute, L/G denote localized/global (subspace) projection, P denotes prototype-based organization, and D denotes distributional extremity.}
% \vspace{-5pt}
\label{tab:remove3}
% \resizebox{\linewidth}{!}{%
\small
\setlength{\tabcolsep}{3pt}
\begin{tabular}{l|cccccccccc}
\toprule
\textbf{Remove-3}
& \textbf{S+L+G}
& \textbf{S+L+P}
& \textbf{S+L+D}
& \textbf{S+G+P}
& \textbf{S+G+D}
& \textbf{S+P+D}
& \textbf{L+G+P}
& \textbf{L+G+D}
& \textbf{L+P+D}
& \textbf{G+P+D} \\
\midrule
% Avg. AUC-ROC
% & -- & -- & -- & -- & --
% & -- & -- & -- & -- & -- \\
% Avg. AUC-PR
% & -- & -- & -- & -- & --
% & -- & -- & -- & -- & -- \\
Avg. AUCROC
& 78.21 & 77.61 & 78.02 & 79.01 & 79.58
& 78.52 & 78.36 & 78.64 & 78.07 & 79.02 \\
Avg. AUCPR
& 45.20 & 42.60 & 43.85 & 46.30 & 47.58
& 45.62 & 45.57 & 47.15 & 45.18 & 47.96 \\
\bottomrule
\end{tabular}%
% \vspace{-5pt}
% }
\end{table}

% ============================================================
% Remove-4 (Single)
% ============================================================
\begin{table}[!t]
\centering
\caption{Ablation results when removing four virtual tasks from the set of task templates. For brevity, each variant is labeled by the remaining task. Single denotes single-attribute, Localized/Global denote localized/global (subspace) projection, Prototype denotes prototype-based organization, and Distributional denotes distributional extremity.}
% \vspace{-10pt}
\label{tab:remove4}
% \resizebox{\linewidth}{!}{%
\small
\setlength{\tabcolsep}{3pt}
\begin{tabular}{l|ccccc}
\toprule
\textbf{Remove-4}
& \textbf{Single}
& \textbf{Localized}
& \textbf{Global}
& \textbf{Prototype}
& \textbf{Distributional} \\
\midrule
% Avg. AUC-ROC
% & -- & -- & -- & -- & -- \\
% Avg. AUC-PR
% & -- & -- & -- & -- & -- \\
Avg. AUCROC
& 77.12 & 77.99 & 76.11 & 76.15 & 76.79 \\
Avg. AUCPR
& 42.33 & 42.46 & 36.51 & 39.01 & 38.76 \\
\bottomrule
\end{tabular}%
% \vspace{-10pt}
% }
\end{table}

\begin{table}[!t]
\centering
\caption{Further ablation of inter-task selection by randomly retaining increasing numbers of tasks after intra-task selection for score ensembling. Across all ODDBench datasets, performance improves as more tasks are retained, while the full \method with inter-task selection achieves the best performance.}
% \vspace{-10pt}
\label{tab:wo-inter-selection}
% \resizebox{\linewidth}{!}{%
\small
\setlength{\tabcolsep}{3pt}
\begin{tabular}{l|cccc}
\toprule
\textbf{}
& \textbf{2 Remaining Tasks}
& \textbf{3 Remaining Tasks}
& \textbf{4 Remaining Tasks}
& \textbf{Full \method}
 \\
\midrule
% Avg. AUC-ROC
% & -- & -- & -- & -- & -- \\
% Avg. AUC-PR
% & -- & -- & -- & -- & -- \\
Avg. AUCROC
& 78.11 & 79.38 & 79.89 & \textbf{80.16} \\
Avg. AUCPR
& 44.95 & 48.43 & 49.66 & \textbf{50.43} \\
\bottomrule
\end{tabular}%
% \vspace{-10pt}
% }
\end{table}
\subsection{Details of Ablation Studies}
\label{app:AblationDetails}

For all ablation studies, we vary only the component under investigation while
keeping the remaining configuration of \method fixed, using TabICLv2 as the
pretrained TFM backbone.

For the task-family ablation, shown in Figure~\ref{fig:ablation-study}, Tables~\ref{tab:remove1}, \ref{tab:remove2}, \ref{tab:remove3}, and \ref{tab:remove4} report the detailed
AUCROC and AUCPR results obtained on 790 ODDBench datasets by removing one to four task templates, respectively. Performance generally degrades as more tasks are removed,
while the remaining families still retain competitive performance. These results support the complementary contribution of the instantiated virtual-task families and the robustness of the subsequent task-selection procedure.

For the task-selection ablation, \textit{wo-Intra-Selection}
removes candidate configuration selection within each task and instead uses the same default task configuration for all tasks. For the held-out split ablation, we change only the number of held-out splits from the default $S=3$ to $S=1$, while keeping all other settings unchanged.

For \textit{wo-Inter-Selection}, we replace the systematic inter-task selection stage with random selection of a single task using its best intra-task configuration. Since the retention policy selected by the configuration search in Appendix~\ref{Appendix:Configuration Search} is \textsc{All}, which retains all tasks passing the inter-task eligibility criterion, simply retaining all eligible tasks would still preserve part of the inter-task selection mechanism.
As multiple eligible tasks are available for most datasets, the random single-task variant provides a reference for assessing the contribution of systematic inter-task retention.

To further assess this contribution, we randomly retain increasing numbers of tasks and ensemble their scores. As shown in Table~\ref{tab:wo-inter-selection}, performance generally improves as more complementary tasks are included, yet the full \method with inter-task selection still achieves the strongest performance. Together with the task-family ablation, these results suggest that task complementarity and task-score aggregation can partially compensate for the absence of inter-task selection, while retaining tasks according to anomaly-detection-oriented predictive properties provides additional gains.

Overall, the full selection procedure achieves the strongest performance, while the $S=1$ variant retains competitive detection performance with a substantially more efficient task-selection process.
\begin{figure*}[t]
    \centering

    \begin{subfigure}[c]{0.36\linewidth}
        \centering
        \includegraphics[width=\linewidth]{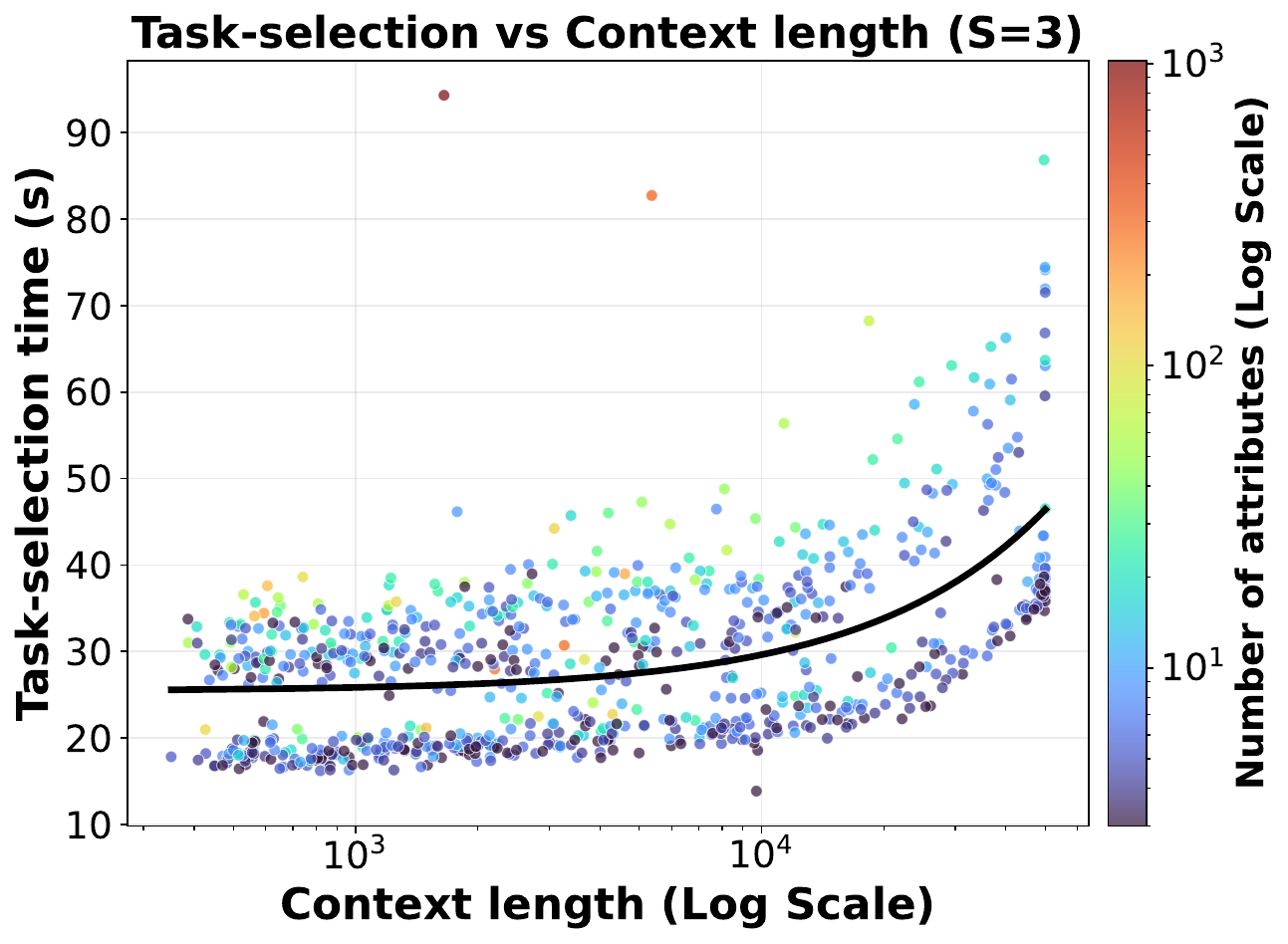}
        \label{fig:rule_selection_s3}
    \end{subfigure}
    \hfill
    \begin{subfigure}[c]{0.36\linewidth}
        \centering
        \includegraphics[
            width=\linewidth]{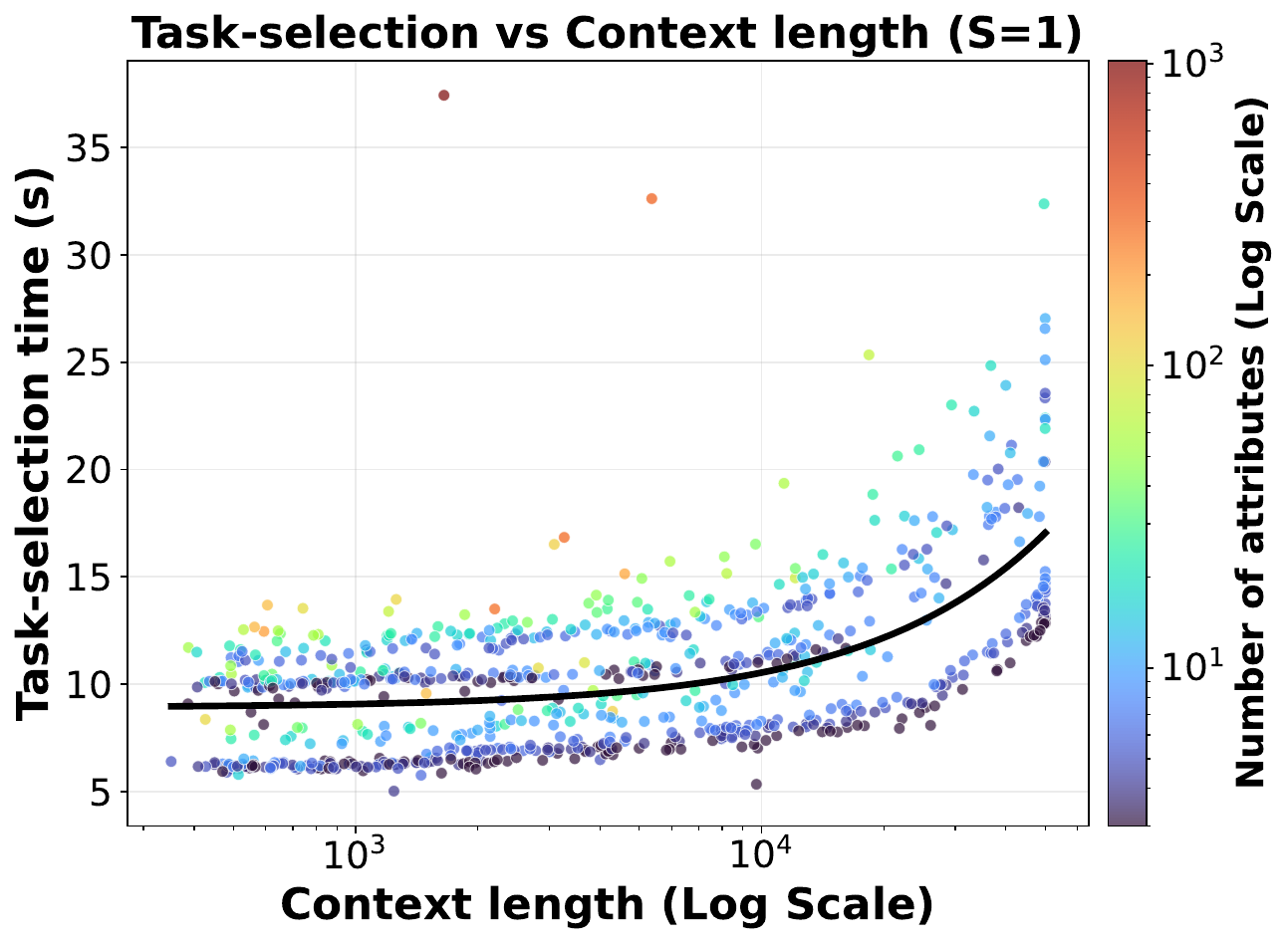}
        \label{fig:rule_selection_s1}
    \end{subfigure}
    \hfill
    \begin{subfigure}[c]{0.26\linewidth}
        \centering
        \includegraphics[
            width=\linewidth]{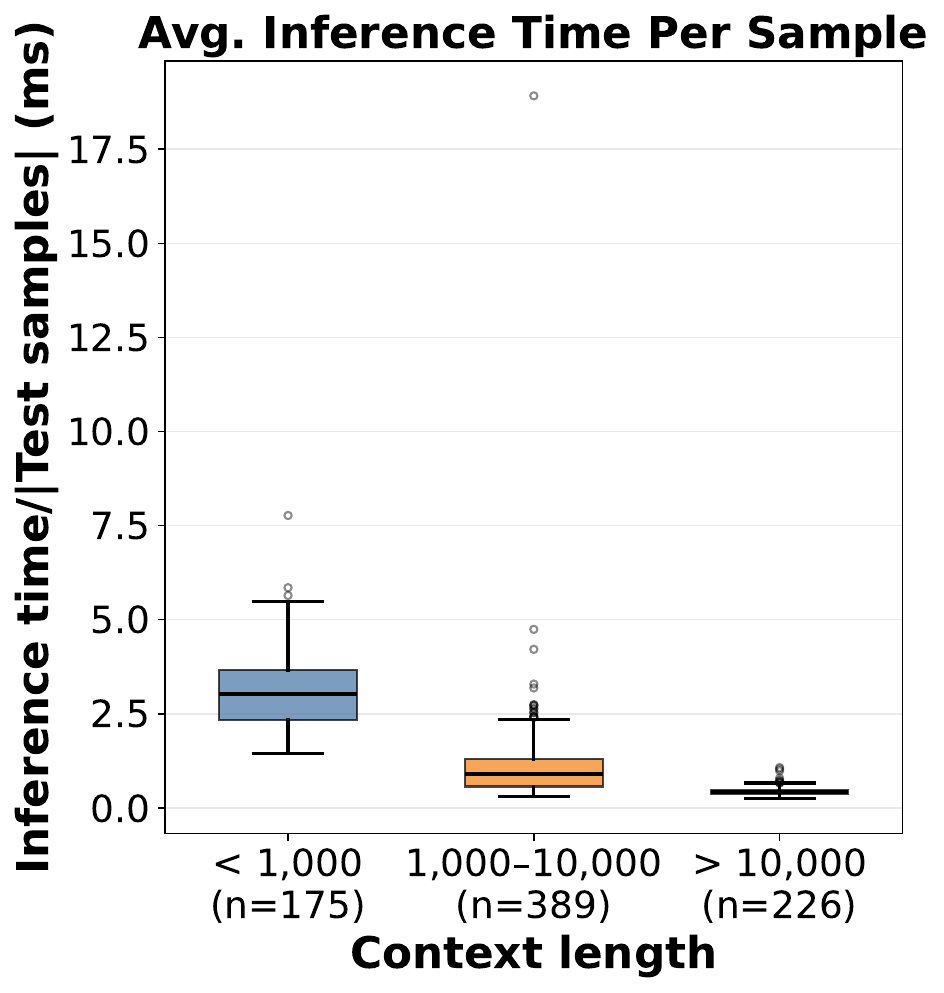}
        \label{fig:inference time per sample}
    \end{subfigure}

% \vspace{-20pt}
\caption{Inference-efficiency analysis of \method.
(Left, middle) Task-selection time across ODDBench datasets using
$3$ (default) and $1$ held-out splits, respectively, as a function of
context length and the number of attributes. 
(Right) Average inference time per test sample after task selection, grouped by
context length; \method maintains millisecond-level inference across all
groups. Here, $n$ denotes the number of datasets in each group:
$<1{,}000$, $1{,}000$--$10{,}000$, and $>10{,}000$ context
length.
Overall, \method preserves efficient inference with TAD-repurposed TFMs.}
    \vspace{-10pt}
    \label{fig:efficiency_analysis}
\end{figure*}
\subsection{Inference Efficiency}
\label{app:InferenceEfficiency}

The main inference overhead of \method arises from the hierarchical task-selection
procedure. Once the tasks are selected, inference with the pretrained
general-purpose TFM requires only milliseconds per sample, as shown in the rightmost
panel of Figure~\ref{fig:efficiency_analysis}. We evaluate the task-selection
runtime using both our default configuration with three held-out splits and an efficient variant using a single split, which also achieves the strongest overall performance against 31 baselines on ODDBench.

As shown in the leftmost panels of Figure~\ref{fig:efficiency_analysis}, the
task-selection time of the default \method remains below one minute for most
datasets, even when the context size approaches the $10^4$ scale. Reducing the
number of held-out splits to $S=1$ further lowers this cost, requiring less
than 20 seconds for most datasets across diverse context sizes and
dimensionalities. Importantly, task selection is performed only once per
dataset and requires neither dataset-specific hyperparameter search nor
parameter optimization, thereby preserving efficient TAD inference through
in-context learning.

Overall, this one-time selection cost preserves \method as an efficient TAD-repurposing framework.
As shown in Figure~\ref{fig:overall_comparison_intro}, \method remains
substantially more efficient than the existing TabPFN-Extension even when its
task-selection cost is included in the comparison.

\begin{figure*}[t]
    \centering
    \includegraphics[width=\linewidth]{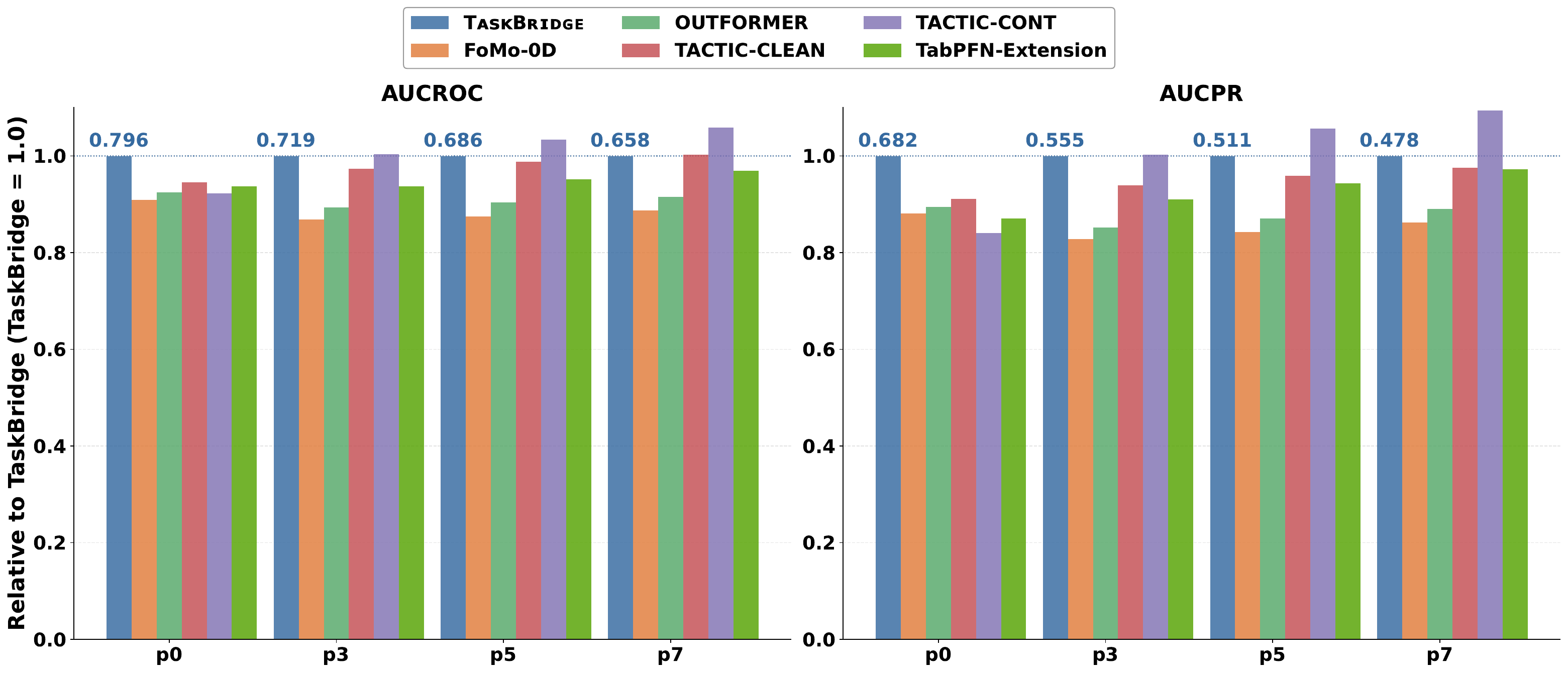}
    % \vspace{-20pt}
    % \caption{Elo score comparision between total 31 baselines. (Extended version of right-side figure of Figure~\ref{fig:overall_comparison_intro}.) \method achieves the highest Elo scores in both AUCROC and AUCPR among all baselines, demonstrating our method steadily get robust ad performance across diverse table datasets.}
    \caption{Robustness to context contamination on ODDBench. AUCROC and AUCPR performance are shown relative to \method at each contamination level (\method$=1$), with its absolute performance reported above each group. p${k}$ denotes the setting in which $k\%$ of the context samples are replaced with anomalous samples. \method remains competitive with TFM-based TAD baselines as contamination increases, while the contamination-specific TACTIC-CONT exhibits the strongest robustness under heavier contamination.}
\label{fig:ContextContamination}
    \vspace{-10pt}
\end{figure*}
\subsection{Robustness Analysis on Context Contamination}
\label{app:ContaminationRobustness}
Considering practical scenarios in which the training/context data may be contaminated, we evaluate the robustness of \method to context contamination, which can interfere with normality anchoring. We construct a fixed subset of 295 ODDBench datasets that support the most severe 10\% contamination setting without removing more than 50\% of the test anomalies. 
For each dataset, we replace a fraction of normality-aware context samples with randomly selected anomalies drawn from the test/query split. We evaluate contamination rates of 0\%, 3\%, 5\%, and 7\%.
To ensure a fair comparison across contamination levels, all settings use the same test/query set after excluding the fixed anomaly pool used for contamination. Normal query samples are then downsampled to keep the anomaly ratio in the test split consistent across contamination levels. We compare \method under each contamination level with TFM-based TAD baselines, including TACTIC-CONT, which is specifically pretrained to handle contaminated contexts.

As shown in Figure~\ref{fig:ContextContamination}, the performance of all methods generally degrades as context contamination increases. Nevertheless, \method remains competitive and generally outperforms the TFM-based baselines that are not specifically trained for contaminated contexts. TACTIC-CONT shows the strongest robustness under heavier contamination, consistent with its contamination-aware pretraining.
These results suggest that context contamination can impair the normality anchoring of virtual supervised tasks, while the data-adaptive task-selection procedure still retains relatively suitable tasks under corrupted contexts.
Improving the construction and selection of virtual tasks under contaminated contexts therefore constitutes an important direction for future work.

\begin{figure*}[t]
    \centering
    \includegraphics[width=\linewidth]{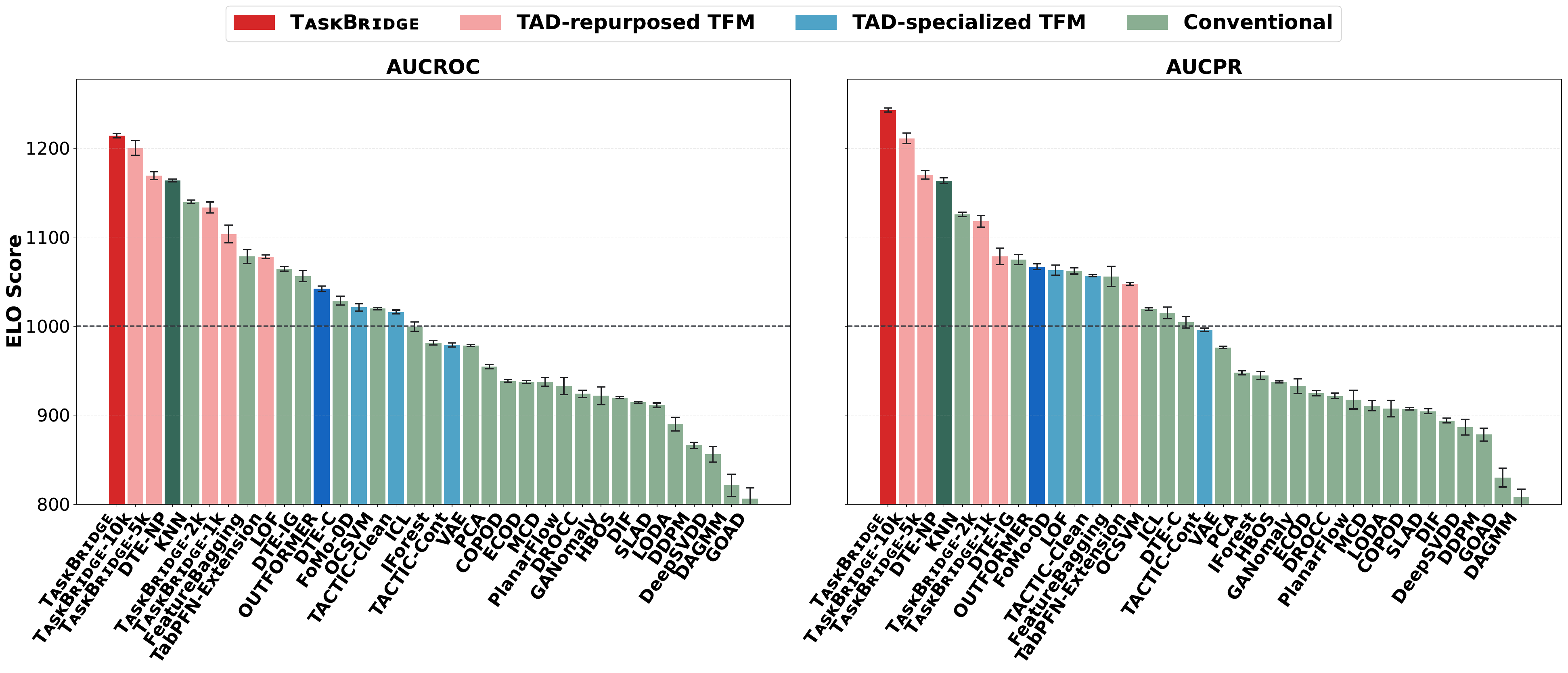}
    %\vspace{-20pt}
    % \caption{Elo score comparision between total 31 baselines. (Extended version of right-side figure of Figure~\ref{fig:overall_comparison_intro}.) \method achieves the highest Elo scores in both AUCROC and AUCPR among all baselines, demonstrating our method steadily get robust ad performance across diverse table datasets.}
    \caption{Elo score comparison across all baselines on 226 ODDBench datasets, including four \method variants with restricted context budgets. Each \method-$i$K variant limits the context size to $i$K samples. All variants, including the $1$K setting, retain strong performance relative to the other baselines.}
\label{fig:ContextSizeRobustness}
    \vspace{-15pt}
\end{figure*}
% \subsection{Robustness Analysis on Context Size}
% \label{app:ContextSizeRobustness}
% To evaluate the robustness of \method on the context size which is highly related to the normality-anchoring signals and heldout samples for selecting suitable virtual nomrality-anchored tasks,
% We design one specific experiment. First, we select datasets in ODDBench given context is larger than 10k. (Actually, there are 226 datasets). 
% For these datasets, we sample randomly 10k, 5k, 2k, 1k context samples and use them as the \method resource for normality-anchored task construction/selection and even for in-context learning for pretrained PFN-based TFMs.
% We evaluate those variants in terms of elo score on these datasets compare with 31baselines including original \method.
\subsection{Robustness Analysis on Context Size}
\label{app:ContextSizeRobustness}

To evaluate the robustness of \method to the amount of context evidence available for normality-anchored virtual task construction, data-adaptive task selection, and in-context prediction, we conduct a controlled context-size analysis. We select the 226 ODDBench datasets whose original normality-aware context contains more than $10$K samples. For each dataset, we randomly subsample the context to fixed budgets of $10$K, $5$K, $2$K, and $1$K samples. Each subsampled context is used throughout the entire \method pipeline, including normality-anchored task
construction, data-adaptive task selection, and in-context prediction with the pretrained general-purpose TFM. We compare these variants with the original full-context \method and the 30 baselines using Elo scores computed over the same 226 datasets.

As shown in Figure~\ref{fig:ContextSizeRobustness}, performance gradually decreases as the available context budget is reduced, indicating that richer context evidence benefits the entire \method pipeline. Nevertheless, \method remains highly competitive even under substantial context reduction. In particular, the $10$K and $5$K variants outperform all external baselines in both AUCROC and AUCPR Elo scores, while the $2$K and $1$K variants remain among the top-performing methods and ahead of the TFM-based TAD baselines.
% These results complement the context-size analysis in
% Section~\ref{ssec:overallperformance}, where \method also exhibits strong performance across datasets with naturally varying context sizes.
% More importantly, the controlled analysis here shows that \method remains effective even when the same datasets are provided with substantially less nominal evidence. 
These results complement the context-size analysis in Section~\ref{ssec:overallperformance}, where \method also exhibits strong performance across datasets with naturally varying context sizes. This robustness suggests that complementary normality-anchored task constructions, which probe different predictive structures in the context, together with data-adaptive task selection, can preserve informative task structures even under limited context budgets, while benefiting from the context-efficient predictive capability of pretrained general-purpose TFMs.
Together with the inference analysis in Appendix~\ref{app:InferenceEfficiency}, these results further indicate that \method can operate efficiently with small context budgets while maintaining robust anomaly detection performance.

\subsection{Generalization to Different Pretrained PFN-based TFM Backbones}
\label{app:MethodwDifferentTFM}
\begin{table*}[t]
\centering
\caption{Backbone generalization of \method with TabPFN v2.6. Total Rank is computed as in Table~\ref{tab:oddbench-31-baselines}. With this alternative pretrained general-purpose TFM backbone, \method remains highly competitive among 31 methods.}

\label{tab:oddbench-31-baselines-tabpfnv2p6}
\setlength{\tabcolsep}{3.2pt}
\scriptsize
\resizebox{\textwidth}{!}{%
\begin{tabular}{llrrrrrrrrr}
\toprule
\multicolumn{1}{c}{\textbf{}} & \textbf{Methods}
& \begin{tabular}[c]{@{}c@{}}\textbf{Avg.}\\\textbf{AUCROC $\uparrow$}\end{tabular}
& \begin{tabular}[c]{@{}c@{}}\textbf{Avg. Rank}\\\textbf{(AUCROC) $\downarrow$}\end{tabular}
& \begin{tabular}[c]{@{}c@{}}\textbf{Avg.}\\\textbf{AUCPR $\uparrow$}\end{tabular}
& \begin{tabular}[c]{@{}c@{}}\textbf{Avg. Rank}\\\textbf{(AUCPR) $\downarrow$}\end{tabular}
& \begin{tabular}[c]{@{}c@{}}\textbf{ELO}\\\textbf{(AUCROC) $\uparrow$}\end{tabular}
& \begin{tabular}[c]{@{}c@{}}\textbf{ELO}\\\textbf{(AUCPR) $\uparrow$}\end{tabular}
& \begin{tabular}[c]{@{}c@{}}\textbf{Top-3 Ratio}\\\textbf{(AUCROC) $\uparrow$}\end{tabular}
& \begin{tabular}[c]{@{}c@{}}\textbf{Top-3 Ratio}\\\textbf{(AUCPR) $\uparrow$}\end{tabular}
& \begin{tabular}[c]{@{}c@{}}\textbf{Total}\\\textbf{Rank $\downarrow$}\end{tabular} \\
\midrule
\multirow{25}{*}{\rotatebox[origin=c]{90}{\textbf{Conventional}}} & DTE-NP & \textcolor{red}{$\bm{76.96_{\pm 0.01}\;(1)}$} & \textcolor{red}{$\bm{8.90_{\pm 0.02}\;(1)}$} & \textcolor{red}{$\bm{43.63_{\pm 0.01}\;(1)}$} & \textcolor{red}{$\bm{9.18_{\pm 0.03}\;(1)}$} & \textcolor{red}{$\bm{1172.2_{\pm 0.9}\;(1)}$} & \textcolor{red}{$\bm{1164.6_{\pm 0.9}\;(1)}$} & \textcolor{red}{$\bm{27.9_{\pm 0.5}\;(1)}$} & $22.2_{\pm 0.5}\;(4)$ & \textcolor{red}{$\bm{1.38\;(1)}$} \\
 & KNN & \textcolor{blue}{\underline{$76.25_{\pm 0.01}\;(2)$}} & \textcolor{blue}{\underline{$9.65_{\pm 0.02}\;(2)$}} & $42.41_{\pm 0.01}\;(4)$ & \textcolor{blue}{\underline{$10.20_{\pm 0.03}\;(2)$}} & \textcolor{blue}{\underline{$1149.0_{\pm 0.9}\;(2)$}} & \textcolor{blue}{\underline{$1134.8_{\pm 1.0}\;(2)$}} & $18.6_{\pm 0.6}\;(8)$ & $14.9_{\pm 0.5}\;(10)$ & \textcolor{blue}{\underline{$4.00\;(2)$}} \\
 & FeatureBagging & \textcolor{green!60!black}{$75.59_{\pm 0.16}\;(3)$} & \textcolor{green!60!black}{$11.97_{\pm 0.21}\;(3)$} & $39.30_{\pm 0.56}\;(10)$ & $13.00_{\pm 0.22}\;(10)$ & \textcolor{green!60!black}{$1091.1_{\pm 5.0}\;(3)$} & $1065.9_{\pm 5.3}\;(10)$ & $19.8_{\pm 0.9}\;(6)$ & $18.9_{\pm 1.3}\;(8)$ & $6.62\;(6)$ \\
 & LOF & $74.17_{\pm 0.02}\;(6)$ & $12.64_{\pm 0.04}\;(6)$ & $39.97_{\pm 0.02}\;(9)$ & $12.77_{\pm 0.05}\;(7)$ & $1073.6_{\pm 1.0}\;(6)$ & $1070.9_{\pm 1.2}\;(8)$ & $14.4_{\pm 0.6}\;(10)$ & $16.3_{\pm 1.0}\;(9)$ & $7.62\;(9)$ \\
 & DTE-C & $73.46_{\pm 0.05}\;(9)$ & $12.85_{\pm 0.12}\;(7)$ & $38.16_{\pm 0.09}\;(11)$ & $13.58_{\pm 0.14}\;(12)$ & $1068.1_{\pm 3.2}\;(7)$ & $1051.0_{\pm 3.5}\;(12)$ & $11.3_{\pm 0.7}\;(12)$ & $9.7_{\pm 0.6}\;(15)$ & $10.62\;(11)$ \\
 & DTE-IG & $73.34_{\pm 0.43}\;(11)$ & $12.91_{\pm 0.30}\;(8)$ & $41.92_{\pm 0.34}\;(6)$ & $11.96_{\pm 0.23}\;(4)$ & $1065.8_{\pm 7.0}\;(9)$ & $1089.8_{\pm 5.5}\;(5)$ & $16.3_{\pm 1.5}\;(9)$ & $20.0_{\pm 1.1}\;(5)$ & $7.12\;(7)$ \\
 & OCSVM & $71.88_{\pm 0.01}\;(13)$ & $13.73_{\pm 0.03}\;(12)$ & $37.18_{\pm 0.01}\;(12)$ & $13.86_{\pm 0.04}\;(13)$ & $1046.6_{\pm 0.8}\;(12)$ & $1044.2_{\pm 0.9}\;(13)$ & $8.8_{\pm 0.6}\;(17)$ & $7.4_{\pm 0.3}\;(19)$ & $13.88\;(14)$ \\
 & ICL & $71.82_{\pm 0.10}\;(14)$ & $13.86_{\pm 0.07}\;(14)$ & $40.07_{\pm 0.15}\;(8)$ & $12.92_{\pm 0.15}\;(9)$ & $1044.8_{\pm 1.4}\;(14)$ & $1067.8_{\pm 3.5}\;(9)$ & $11.9_{\pm 0.2}\;(11)$ & $13.2_{\pm 0.4}\;(11)$ & $11.25\;(12)$ \\
 & IForest & $71.26_{\pm 0.05}\;(15)$ & $14.92_{\pm 0.06}\;(15)$ & $31.26_{\pm 0.10}\;(20)$ & $16.37_{\pm 0.10}\;(16)$ & $1024.9_{\pm 1.4}\;(15)$ & $990.7_{\pm 2.4}\;(16)$ & $5.1_{\pm 0.6}\;(27)$ & $4.5_{\pm 0.1}\;(27)$ & $18.88\;(18)$ \\
 & MCD & $69.10_{\pm 0.19}\;(17)$ & $16.41_{\pm 0.15}\;(17)$ & $31.10_{\pm 0.22}\;(21)$ & $17.75_{\pm 0.11}\;(19)$ & $986.2_{\pm 3.3}\;(17)$ & $954.5_{\pm 2.6}\;(19)$ & $11.3_{\pm 0.7}\;(13)$ & $10.3_{\pm 0.6}\;(12)$ & $16.88\;(17)$ \\
 & VAE & $70.20_{\pm 0.04}\;(16)$ & $15.60_{\pm 0.07}\;(16)$ & $35.34_{\pm 0.07}\;(15)$ & $15.80_{\pm 0.10}\;(15)$ & $1002.9_{\pm 1.6}\;(16)$ & $998.8_{\pm 2.2}\;(15)$ & $7.9_{\pm 0.4}\;(19)$ & $6.7_{\pm 0.3}\;(21)$ & $16.62\;(16)$ \\
 & PlanarFlow & $68.11_{\pm 0.40}\;(18)$ & $16.99_{\pm 0.13}\;(18)$ & $32.95_{\pm 0.17}\;(17)$ & $17.40_{\pm 0.05}\;(17)$ & $971.8_{\pm 3.5}\;(18)$ & $962.6_{\pm 1.5}\;(17)$ & $11.2_{\pm 0.6}\;(14)$ & $10.2_{\pm 0.8}\;(13)$ & $16.50\;(15)$ \\
 & SLAD & $66.87_{\pm 0.05}\;(21)$ & $17.71_{\pm 0.03}\;(19)$ & $33.49_{\pm 0.09}\;(16)$ & $17.44_{\pm 0.04}\;(18)$ & $953.0_{\pm 0.6}\;(19)$ & $960.0_{\pm 0.8}\;(18)$ & $6.9_{\pm 0.1}\;(23)$ & $7.3_{\pm 0.2}\;(20)$ & $19.25\;(19)$ \\
 & COPOD & $67.20_{\pm 0.01}\;(20)$ & $18.22_{\pm 0.05}\;(20)$ & $27.99_{\pm 0.01}\;(29)$ & $19.35_{\pm 0.06}\;(28)$ & $945.7_{\pm 1.3}\;(20)$ & $918.3_{\pm 1.5}\;(27)$ & $3.8_{\pm 0.3}\;(28)$ & $4.3_{\pm 0.2}\;(28)$ & $25.00\;(25)$ \\
 & PCA & $64.50_{\pm 0.03}\;(26)$ & $18.37_{\pm 0.02}\;(22)$ & $29.14_{\pm 0.02}\;(26)$ & $18.53_{\pm 0.04}\;(22)$ & $939.9_{\pm 0.6}\;(22)$ & $936.4_{\pm 0.8}\;(22)$ & $6.3_{\pm 0.3}\;(24)$ & $5.0_{\pm 0.3}\;(26)$ & $23.75\;(23)$ \\
 & ECOD & $68.10_{\pm 0.01}\;(19)$ & $18.37_{\pm 0.06}\;(21)$ & $28.25_{\pm 0.02}\;(28)$ & $18.85_{\pm 0.08}\;(23)$ & $942.6_{\pm 1.5}\;(21)$ & $930.9_{\pm 1.9}\;(23)$ & $6.9_{\pm 0.4}\;(22)$ & $6.2_{\pm 0.4}\;(23)$ & $22.50\;(22)$ \\
 & HBOS & $66.58_{\pm 0.02}\;(22)$ & $18.48_{\pm 0.06}\;(23)$ & $29.90_{\pm 0.03}\;(24)$ & $18.19_{\pm 0.05}\;(20)$ & $938.5_{\pm 1.6}\;(23)$ & $946.1_{\pm 1.1}\;(20)$ & $7.0_{\pm 0.3}\;(21)$ & $7.6_{\pm 0.5}\;(18)$ & $21.38\;(21)$ \\
 & DIF & $65.16_{\pm 0.02}\;(24)$ & $18.74_{\pm 0.04}\;(25)$ & $28.53_{\pm 0.06}\;(27)$ & $19.19_{\pm 0.04}\;(26)$ & $933.1_{\pm 1.0}\;(25)$ & $922.4_{\pm 1.0}\;(25)$ & $3.6_{\pm 0.1}\;(29)$ & $3.8_{\pm 0.2}\;(29)$ & $26.25\;(28)$ \\
 & GANomaly & $66.02_{\pm 0.58}\;(23)$ & $18.52_{\pm 0.26}\;(24)$ & $31.59_{\pm 0.38}\;(19)$ & $18.24_{\pm 0.16}\;(21)$ & $934.7_{\pm 6.0}\;(24)$ & $942.3_{\pm 3.7}\;(21)$ & $9.0_{\pm 0.9}\;(16)$ & $9.6_{\pm 0.7}\;(16)$ & $20.50\;(20)$ \\
 & DROCC & $61.17_{\pm 0.63}\;(28)$ & $19.41_{\pm 0.20}\;(26)$ & $29.19_{\pm 0.33}\;(25)$ & $18.91_{\pm 0.23}\;(24)$ & $913.0_{\pm 5.2}\;(26)$ & $925.9_{\pm 6.2}\;(24)$ & $5.9_{\pm 0.6}\;(25)$ & $6.1_{\pm 0.4}\;(24)$ & $25.25\;(27)$ \\
 & GOAD & $60.34_{\pm 0.46}\;(31)$ & $19.91_{\pm 0.23}\;(27)$ & $30.94_{\pm 0.31}\;(23)$ & $19.03_{\pm 0.27}\;(25)$ & $899.2_{\pm 5.9}\;(27)$ & $921.8_{\pm 6.6}\;(26)$ & $7.7_{\pm 0.5}\;(20)$ & $6.5_{\pm 0.4}\;(22)$ & $25.12\;(26)$ \\
 & DDPM & $64.81_{\pm 0.12}\;(25)$ & $20.37_{\pm 0.08}\;(29)$ & $30.98_{\pm 0.11}\;(22)$ & $19.90_{\pm 0.10}\;(29)$ & $887.9_{\pm 2.0}\;(29)$ & $900.9_{\pm 2.5}\;(30)$ & $3.5_{\pm 0.5}\;(30)$ & $3.4_{\pm 0.3}\;(30)$ & $28.00\;(30)$ \\
 & LODA & $61.85_{\pm 0.51}\;(27)$ & $20.30_{\pm 0.20}\;(28)$ & $27.40_{\pm 0.37}\;(30)$ & $20.00_{\pm 0.23}\;(30)$ & $894.8_{\pm 5.1}\;(28)$ & $902.9_{\pm 6.0}\;(29)$ & $5.1_{\pm 1.0}\;(26)$ & $5.1_{\pm 0.9}\;(25)$ & $27.88\;(29)$ \\
 & DeepSVDD & $60.80_{\pm 0.35}\;(29)$ & $20.78_{\pm 0.23}\;(30)$ & $32.10_{\pm 0.19}\;(18)$ & $19.22_{\pm 0.28}\;(27)$ & $875.3_{\pm 5.9}\;(30)$ & $915.8_{\pm 7.0}\;(28)$ & $8.4_{\pm 0.5}\;(18)$ & $10.0_{\pm 0.7}\;(14)$ & $24.25\;(24)$ \\
 & DAGMM & $60.50_{\pm 1.19}\;(30)$ & $22.44_{\pm 0.48}\;(31)$ & $25.24_{\pm 1.28}\;(31)$ & $22.64_{\pm 0.41}\;(31)$ & $836.1_{\pm 13.7}\;(31)$ & $831.0_{\pm 11.5}\;(31)$ & $3.1_{\pm 0.4}\;(31)$ & $3.1_{\pm 0.7}\;(31)$ & $30.88\;(31)$ \\
\midrule
\multirow{5}{*}{\rotatebox[origin=c]{90}{\textbf{TFM-based}}} & TACTIC-Clean & $75.22_{\pm 0.01}\;(5)$ & $12.03_{\pm 0.05}\;(4)$ & $41.67_{\pm 0.02}\;(7)$ & \textcolor{green!60!black}{$11.47_{\pm 0.04}\;(3)$} & $1090.4_{\pm 1.5}\;(4)$ & \textcolor{green!60!black}{$1104.8_{\pm 1.3}\;(3)$} & $18.7_{\pm 0.3}\;(7)$ & $19.8_{\pm 0.3}\;(6)$ & \textcolor{green!60!black}{$4.88\;(3)$} \\
 & TACTIC-Cont & $73.40_{\pm 0.01}\;(10)$ & $13.59_{\pm 0.02}\;(11)$ & $36.33_{\pm 0.02}\;(14)$ & $14.43_{\pm 0.04}\;(14)$ & $1051.8_{\pm 0.6}\;(11)$ & $1032.1_{\pm 1.0}\;(14)$ & $10.4_{\pm 0.2}\;(15)$ & $8.3_{\pm 0.4}\;(17)$ & $13.25\;(13)$ \\
 & OUTFORMER & $74.16_{\pm 0.08}\;(7)$ & $13.13_{\pm 0.05}\;(10)$ & \textcolor{blue}{\underline{$43.00_{\pm 0.13}\;(2)$}} & $11.99_{\pm 0.03}\;(5)$ & $1063.6_{\pm 1.3}\;(10)$ & $1091.9_{\pm 0.6}\;(4)$ & $20.8_{\pm 0.6}\;(4)$ & \textcolor{green!60!black}{$23.0_{\pm 0.4}\;(3)$} & $5.62\;(5)$ \\
 & FoMo-0D & $72.65_{\pm 0.04}\;(12)$ & $13.85_{\pm 0.09}\;(13)$ & \textcolor{green!60!black}{$42.70_{\pm 0.13}\;(3)$} & $12.24_{\pm 0.11}\;(6)$ & $1045.5_{\pm 2.1}\;(13)$ & $1085.5_{\pm 2.5}\;(6)$ & $20.6_{\pm 0.4}\;(5)$ & \textcolor{blue}{\underline{$24.9_{\pm 0.7}\;(2)$}} & $7.50\;(8)$ \\
 & TabPFN-Extension & $73.61_{\pm 0.11}\;(8)$ & $12.12_{\pm 0.07}\;(5)$ & $37.08_{\pm 0.07}\;(13)$ & $13.25_{\pm 0.07}\;(11)$ & $1090.3_{\pm 1.6}\;(5)$ & $1063.0_{\pm 1.4}\;(11)$ & \textcolor{green!60!black}{$25.5_{\pm 0.1}\;(3)$} & $19.8_{\pm 0.3}\;(6)$ & $7.75\;(10)$ \\
\midrule
 & \textbf{\method} & $75.23_{\pm 0.06}\;(4)$ & $13.03_{\pm 0.05}\;(9)$ & $42.27_{\pm 0.14}\;(5)$ & $12.84_{\pm 0.08}\;(8)$ & $1067.8_{\pm 1.1}\;(8)$ & $1072.5_{\pm 2.0}\;(7)$ & \textcolor{blue}{\underline{$26.2_{\pm 0.4}\;(2)$}} & \textcolor{red}{$\bm{27.7_{\pm 0.5}\;(1)}$} & $5.50\;(4)$ \\
\bottomrule
\end{tabular}
}
%\vspace{-5pt}
\end{table*}

\begin{table*}[t]
\centering
\caption{Backbone generalization of \method with TabPFN v3. Total Rank is
computed as in Table~\ref{tab:oddbench-31-baselines}. \method remains
consistently strong with the alternative pretrained general-purpose TFM backbone,
achieving the second-best overall Total Rank among 31 methods.}

\label{tab:oddbench-31-baselines-tabpfnv3}
\setlength{\tabcolsep}{3.2pt}
\scriptsize
\resizebox{\textwidth}{!}{%
\begin{tabular}{llrrrrrrrrr}
\toprule
\multicolumn{1}{c}{\textbf{}} & \textbf{Methods}
& \begin{tabular}[c]{@{}c@{}}\textbf{Avg.}\\\textbf{AUCROC $\uparrow$}\end{tabular}
& \begin{tabular}[c]{@{}c@{}}\textbf{Avg. Rank}\\\textbf{(AUCROC) $\downarrow$}\end{tabular}
& \begin{tabular}[c]{@{}c@{}}\textbf{Avg.}\\\textbf{AUCPR $\uparrow$}\end{tabular}
& \begin{tabular}[c]{@{}c@{}}\textbf{Avg. Rank}\\\textbf{(AUCPR) $\downarrow$}\end{tabular}
& \begin{tabular}[c]{@{}c@{}}\textbf{ELO}\\\textbf{(AUCROC) $\uparrow$}\end{tabular}
& \begin{tabular}[c]{@{}c@{}}\textbf{ELO}\\\textbf{(AUCPR) $\uparrow$}\end{tabular}
& \begin{tabular}[c]{@{}c@{}}\textbf{Top-3 Ratio}\\\textbf{(AUCROC) $\uparrow$}\end{tabular}
& \begin{tabular}[c]{@{}c@{}}\textbf{Top-3 Ratio}\\\textbf{(AUCPR) $\uparrow$}\end{tabular}
& \begin{tabular}[c]{@{}c@{}}\textbf{Total}\\\textbf{Rank $\downarrow$}\end{tabular} \\
\midrule
\multirow{25}{*}{\rotatebox[origin=c]{90}{\textbf{Conventional}}} & DTE-NP & \textcolor{red}{$\bm{76.96_{\pm 0.01}\;(1)}$} & \textcolor{red}{$\bm{8.93_{\pm 0.03}\;(1)}$} & \textcolor{blue}{\underline{$43.63_{\pm 0.01}\;(2)$}} & \textcolor{red}{$\bm{9.19_{\pm 0.03}\;(1)}$} & \textcolor{red}{$\bm{1171.5_{\pm 1.0}\;(1)}$} & \textcolor{red}{$\bm{1164.2_{\pm 0.9}\;(1)}$} & \textcolor{red}{$\bm{27.4_{\pm 0.6}\;(1)}$} & $21.5_{\pm 0.5}\;(4)$ & \textcolor{red}{$\bm{1.50\;(1)}$} \\
 & KNN & \textcolor{blue}{\underline{$76.25_{\pm 0.01}\;(2)$}} & \textcolor{blue}{\underline{$9.67_{\pm 0.03}\;(2)$}} & $42.41_{\pm 0.01}\;(5)$ & \textcolor{blue}{\underline{$10.21_{\pm 0.03}\;(2)$}} & \textcolor{blue}{\underline{$1148.4_{\pm 1.2}\;(2)$}} & \textcolor{blue}{\underline{$1134.5_{\pm 0.8}\;(2)$}} & $18.7_{\pm 0.6}\;(8)$ & $14.8_{\pm 0.7}\;(10)$ & \textcolor{green!60!black}{$4.12\;(3)$} \\
 & FeatureBagging & $75.59_{\pm 0.16}\;(4)$ & \textcolor{green!60!black}{$11.99_{\pm 0.20}\;(3)$} & $39.30_{\pm 0.56}\;(10)$ & $13.01_{\pm 0.21}\;(10)$ & $1090.7_{\pm 4.7}\;(4)$ & $1065.7_{\pm 5.1}\;(10)$ & $20.1_{\pm 1.2}\;(6)$ & $19.2_{\pm 1.3}\;(8)$ & $6.88\;(6)$ \\
 & LOF & $74.17_{\pm 0.02}\;(6)$ & $12.68_{\pm 0.04}\;(7)$ & $39.97_{\pm 0.02}\;(9)$ & $12.79_{\pm 0.05}\;(8)$ & $1072.6_{\pm 1.1}\;(7)$ & $1070.5_{\pm 1.1}\;(8)$ & $14.4_{\pm 0.7}\;(10)$ & $15.9_{\pm 0.9}\;(9)$ & $8.00\;(10)$ \\
 & DTE-C & $73.46_{\pm 0.05}\;(9)$ & $12.88_{\pm 0.12}\;(8)$ & $38.16_{\pm 0.09}\;(11)$ & $13.59_{\pm 0.14}\;(12)$ & $1067.3_{\pm 3.2}\;(8)$ & $1050.7_{\pm 3.6}\;(12)$ & $11.3_{\pm 0.7}\;(13)$ & $9.8_{\pm 0.4}\;(15)$ & $11.00\;(11)$ \\
 & DTE-IG & $73.34_{\pm 0.43}\;(11)$ & $12.93_{\pm 0.30}\;(9)$ & $41.92_{\pm 0.34}\;(6)$ & $11.97_{\pm 0.23}\;(5)$ & $1065.2_{\pm 7.0}\;(9)$ & $1089.4_{\pm 5.5}\;(6)$ & $16.5_{\pm 1.1}\;(9)$ & $20.2_{\pm 0.9}\;(6)$ & $7.62\;(7)$ \\
 & OCSVM & $71.88_{\pm 0.01}\;(13)$ & $13.76_{\pm 0.02}\;(12)$ & $37.18_{\pm 0.01}\;(12)$ & $13.90_{\pm 0.04}\;(13)$ & $1045.9_{\pm 0.7}\;(12)$ & $1043.3_{\pm 0.8}\;(13)$ & $8.5_{\pm 0.4}\;(18)$ & $7.3_{\pm 0.3}\;(18)$ & $13.88\;(14)$ \\
 & ICL & $71.82_{\pm 0.10}\;(14)$ & $13.88_{\pm 0.07}\;(14)$ & $40.07_{\pm 0.15}\;(8)$ & $12.95_{\pm 0.15}\;(9)$ & $1044.2_{\pm 1.4}\;(14)$ & $1067.3_{\pm 3.6}\;(9)$ & $11.8_{\pm 0.4}\;(11)$ & $13.3_{\pm 0.6}\;(11)$ & $11.25\;(12)$ \\
 & IForest & $71.26_{\pm 0.05}\;(15)$ & $14.95_{\pm 0.06}\;(15)$ & $31.26_{\pm 0.10}\;(20)$ & $16.40_{\pm 0.10}\;(16)$ & $1024.1_{\pm 1.4}\;(15)$ & $989.8_{\pm 2.4}\;(16)$ & $4.9_{\pm 0.5}\;(27)$ & $4.4_{\pm 0.1}\;(27)$ & $18.88\;(18)$ \\
 & MCD & $69.10_{\pm 0.19}\;(17)$ & $16.43_{\pm 0.15}\;(17)$ & $31.10_{\pm 0.22}\;(21)$ & $17.77_{\pm 0.12}\;(19)$ & $985.7_{\pm 3.3}\;(17)$ & $953.9_{\pm 2.7}\;(19)$ & $11.4_{\pm 0.8}\;(12)$ & $10.5_{\pm 0.5}\;(12)$ & $16.75\;(17)$ \\
 & VAE & $70.20_{\pm 0.04}\;(16)$ & $15.64_{\pm 0.08}\;(16)$ & $35.34_{\pm 0.07}\;(15)$ & $15.86_{\pm 0.10}\;(15)$ & $1001.9_{\pm 1.8}\;(16)$ & $997.5_{\pm 2.3}\;(15)$ & $7.8_{\pm 0.4}\;(19)$ & $6.7_{\pm 0.4}\;(21)$ & $16.62\;(16)$ \\
 & PlanarFlow & $68.11_{\pm 0.40}\;(18)$ & $17.01_{\pm 0.13}\;(18)$ & $32.95_{\pm 0.17}\;(17)$ & $17.43_{\pm 0.05}\;(17)$ & $971.3_{\pm 3.5}\;(18)$ & $962.0_{\pm 1.3}\;(17)$ & $11.3_{\pm 0.7}\;(14)$ & $10.5_{\pm 0.8}\;(12)$ & $16.38\;(15)$ \\
 & SLAD & $66.87_{\pm 0.05}\;(21)$ & $17.75_{\pm 0.03}\;(19)$ & $33.49_{\pm 0.09}\;(16)$ & $17.48_{\pm 0.03}\;(18)$ & $951.9_{\pm 0.7}\;(19)$ & $959.1_{\pm 0.7}\;(18)$ & $6.8_{\pm 0.2}\;(22)$ & $7.2_{\pm 0.2}\;(19)$ & $19.00\;(19)$ \\
 & COPOD & $67.20_{\pm 0.01}\;(20)$ & $18.26_{\pm 0.05}\;(20)$ & $27.99_{\pm 0.01}\;(29)$ & $19.40_{\pm 0.06}\;(28)$ & $944.7_{\pm 1.2}\;(20)$ & $917.0_{\pm 1.6}\;(27)$ & $4.1_{\pm 0.3}\;(28)$ & $4.3_{\pm 0.2}\;(28)$ & $25.00\;(25)$ \\
 & PCA & $64.50_{\pm 0.03}\;(26)$ & $18.40_{\pm 0.02}\;(21)$ & $29.14_{\pm 0.02}\;(26)$ & $18.57_{\pm 0.04}\;(22)$ & $939.1_{\pm 0.6}\;(22)$ & $935.3_{\pm 0.8}\;(22)$ & $6.3_{\pm 0.1}\;(24)$ & $4.9_{\pm 0.3}\;(26)$ & $23.62\;(23)$ \\
 & ECOD & $68.10_{\pm 0.01}\;(19)$ & $18.40_{\pm 0.06}\;(22)$ & $28.25_{\pm 0.02}\;(28)$ & $18.89_{\pm 0.08}\;(23)$ & $941.7_{\pm 1.4}\;(21)$ & $929.9_{\pm 2.0}\;(23)$ & $6.9_{\pm 0.4}\;(21)$ & $6.7_{\pm 0.5}\;(22)$ & $22.38\;(22)$ \\
 & HBOS & $66.58_{\pm 0.02}\;(22)$ & $18.53_{\pm 0.06}\;(23)$ & $29.90_{\pm 0.03}\;(24)$ & $18.24_{\pm 0.05}\;(20)$ & $937.2_{\pm 1.5}\;(23)$ & $944.8_{\pm 1.2}\;(20)$ & $6.8_{\pm 0.3}\;(22)$ & $7.1_{\pm 0.4}\;(20)$ & $21.75\;(21)$ \\
 & DIF & $65.16_{\pm 0.02}\;(24)$ & $18.77_{\pm 0.04}\;(25)$ & $28.53_{\pm 0.06}\;(27)$ & $19.21_{\pm 0.04}\;(26)$ & $932.2_{\pm 0.9}\;(25)$ & $921.7_{\pm 0.9}\;(25)$ & $3.5_{\pm 0.1}\;(29)$ & $3.9_{\pm 0.2}\;(29)$ & $26.25\;(28)$ \\
 & GANomaly & $66.02_{\pm 0.58}\;(23)$ & $18.56_{\pm 0.25}\;(24)$ & $31.59_{\pm 0.38}\;(19)$ & $18.27_{\pm 0.16}\;(21)$ & $933.8_{\pm 5.8}\;(24)$ & $941.3_{\pm 3.7}\;(21)$ & $9.0_{\pm 1.0}\;(16)$ & $9.6_{\pm 0.7}\;(16)$ & $20.50\;(20)$ \\
 & DROCC & $61.17_{\pm 0.63}\;(28)$ & $19.43_{\pm 0.19}\;(26)$ & $29.19_{\pm 0.33}\;(25)$ & $18.94_{\pm 0.23}\;(24)$ & $912.5_{\pm 5.2}\;(26)$ & $925.0_{\pm 6.2}\;(24)$ & $5.8_{\pm 0.5}\;(25)$ & $6.1_{\pm 0.4}\;(24)$ & $25.25\;(26)$ \\
 & GOAD & $60.34_{\pm 0.46}\;(31)$ & $19.93_{\pm 0.24}\;(27)$ & $30.94_{\pm 0.31}\;(23)$ & $19.07_{\pm 0.27}\;(25)$ & $898.4_{\pm 6.1}\;(27)$ & $920.8_{\pm 6.7}\;(26)$ & $7.7_{\pm 0.6}\;(20)$ & $6.5_{\pm 0.4}\;(23)$ & $25.25\;(26)$ \\
 & DDPM & $64.81_{\pm 0.12}\;(25)$ & $20.39_{\pm 0.09}\;(29)$ & $30.98_{\pm 0.11}\;(22)$ & $19.93_{\pm 0.09}\;(29)$ & $887.2_{\pm 2.1}\;(29)$ & $900.2_{\pm 2.2}\;(30)$ & $3.5_{\pm 0.5}\;(30)$ & $3.3_{\pm 0.3}\;(30)$ & $28.00\;(30)$ \\
 & LODA & $61.85_{\pm 0.51}\;(27)$ & $20.33_{\pm 0.19}\;(28)$ & $27.40_{\pm 0.37}\;(30)$ & $20.04_{\pm 0.24}\;(30)$ & $894.0_{\pm 5.0}\;(28)$ & $901.6_{\pm 6.1}\;(29)$ & $5.1_{\pm 0.9}\;(26)$ & $5.0_{\pm 1.0}\;(25)$ & $27.88\;(29)$ \\
 & DeepSVDD & $60.80_{\pm 0.35}\;(29)$ & $20.81_{\pm 0.22}\;(30)$ & $32.10_{\pm 0.19}\;(18)$ & $19.25_{\pm 0.28}\;(27)$ & $874.4_{\pm 5.8}\;(30)$ & $915.0_{\pm 6.9}\;(28)$ & $8.5_{\pm 0.5}\;(17)$ & $10.2_{\pm 0.4}\;(14)$ & $24.12\;(24)$ \\
 & DAGMM & $60.50_{\pm 1.19}\;(30)$ & $22.47_{\pm 0.47}\;(31)$ & $25.24_{\pm 1.28}\;(31)$ & $22.68_{\pm 0.40}\;(31)$ & $835.1_{\pm 13.4}\;(31)$ & $829.5_{\pm 11.2}\;(31)$ & $3.2_{\pm 0.3}\;(31)$ & $3.1_{\pm 0.6}\;(31)$ & $30.88\;(31)$ \\
\midrule
\multirow{5}{*}{\rotatebox[origin=c]{90}{\textbf{TFM-based}}} & TACTIC-Clean & $75.22_{\pm 0.01}\;(5)$ & $12.08_{\pm 0.05}\;(5)$ & $41.68_{\pm 0.02}\;(7)$ & \textcolor{green!60!black}{$11.50_{\pm 0.04}\;(3)$} & $1089.4_{\pm 1.4}\;(6)$ & \textcolor{green!60!black}{$1104.2_{\pm 1.3}\;(3)$} & $18.8_{\pm 0.3}\;(7)$ & $19.8_{\pm 0.3}\;(7)$ & $5.38\;(4)$ \\
 & TACTIC-Cont & $73.40_{\pm 0.01}\;(10)$ & $13.63_{\pm 0.02}\;(11)$ & $36.34_{\pm 0.02}\;(14)$ & $14.46_{\pm 0.03}\;(14)$ & $1050.8_{\pm 0.6}\;(11)$ & $1031.3_{\pm 0.9}\;(14)$ & $10.1_{\pm 0.3}\;(15)$ & $8.1_{\pm 0.4}\;(17)$ & $13.25\;(13)$ \\
 & OUTFORMER & $74.16_{\pm 0.08}\;(7)$ & $13.16_{\pm 0.06}\;(10)$ & \textcolor{green!60!black}{$43.00_{\pm 0.13}\;(3)$} & $12.01_{\pm 0.03}\;(6)$ & $1062.9_{\pm 1.4}\;(10)$ & $1091.6_{\pm 0.6}\;(5)$ & $21.2_{\pm 0.6}\;(4)$ & \textcolor{green!60!black}{$23.5_{\pm 0.6}\;(3)$} & $6.00\;(5)$ \\
 & FoMo-0D & $72.65_{\pm 0.04}\;(12)$ & $13.87_{\pm 0.08}\;(13)$ & $42.70_{\pm 0.13}\;(4)$ & $12.26_{\pm 0.10}\;(7)$ & $1045.0_{\pm 1.9}\;(13)$ & $1085.1_{\pm 2.4}\;(7)$ & $20.8_{\pm 0.8}\;(5)$ & \textcolor{blue}{\underline{$25.3_{\pm 0.6}\;(2)$}} & $7.88\;(9)$ \\
 & TabPFN-Extension & $73.61_{\pm 0.11}\;(8)$ & $12.13_{\pm 0.07}\;(6)$ & $37.08_{\pm 0.07}\;(13)$ & $13.26_{\pm 0.06}\;(11)$ & $1090.1_{\pm 1.7}\;(5)$ & $1062.9_{\pm 1.2}\;(11)$ & \textcolor{green!60!black}{$25.9_{\pm 0.3}\;(3)$} & $20.3_{\pm 0.3}\;(5)$ & $7.75\;(8)$ \\
\midrule
 & \textbf{\method-TabPFNv3} & \textcolor{green!60!black}{$75.91_{\pm 0.06}\;(3)$} & $12.05_{\pm 0.06}\;(4)$ & \textcolor{red}{$\bm{44.64_{\pm 0.17}\;(1)}$} & $11.91_{\pm 0.12}\;(4)$ & \textcolor{green!60!black}{$1091.0_{\pm 1.2}\;(3)$} & $1095.1_{\pm 2.7}\;(4)$ & \textcolor{blue}{\underline{$26.3_{\pm 0.5}\;(2)$}} & \textcolor{red}{$\bm{27.6_{\pm 0.8}\;(1)}$} & \textcolor{blue}{\underline{$2.75\;(2)$}} \\
\bottomrule
\end{tabular}%%
}\vspace{-10pt}
\end{table*}
To assess whether \method generalizes beyond the TabICLv2 backbone
used in our main experiments~\citep{qu2026tabiclv2betterfasterscalable}, we replace it with two alternative pretrained general-purpose TFMs:
TabPFN~v2.6\footnote{\url{https://huggingface.co/Prior-Labs/tabpfn_2_6}}
and TabPFN~v3~\citep{grinsztajn2026tabpfn3technicalreport}, both of which
also exhibit strong predictive performance on TabArena~\citep{NEURIPS2025_1697e3fb}. We evaluate each variant on ODDBench against the same 30 TAD baselines.
For each backbone, we perform the same one-time backbone-specific calibration
of the framework-level hyperparameters for task selection and score aggregation,
using the common configuration grid described in
Appendix~\ref{Appendix:Configuration Search}.

% As shown in Table~\ref{tab:oddbench-31-baselines-tabpfnv2p6} and ~\ref{tab:oddbench-31-baselines-tabpfnv3}, \method replacing TFM backbone with each new backone, overall maintain strong performance compared to other TAD (4th, 2nd rank in terms of total rank, respectively).
As shown in Tables~\ref{tab:oddbench-31-baselines-tabpfnv2p6} and \ref{tab:oddbench-31-baselines-tabpfnv3}, \method maintains strong overall performance when replacing the default TFM backbone with each alternative backbone, ranking fourth and second in terms of Total Rank, respectively.
Notably, even when using the same TabPFN~v3.0 backbone as TabPFN-Extension, \method achieves better performance across all evaluation metrics, indicating that our framework harnesses the pretrained TFM more effectively for TAD.
These results demonstrate that the core TAD mechanism of \method---selecting suitable virtual supervised tasks and effectively aggregating task-wise scores---generalizes across different pretrained TFM backbones with strong
capabilities for inferring task structure from supervised context and producing
reliable predictive support.

\end{document}